\documentclass{article}

% if you need to pass options to natbib, use, e.g.:
%     \PassOptionsToPackage{numbers, compress}{natbib}
% before loading neurips_2024

% ready for submission
%\usepackage[final]{neurips_2024}
\usepackage{comment}

% to compile a preprint version, e.g., for submission to arXiv, add the
% [preprint] option:
%     \usepackage[preprint]{neurips_2024}

% to compile a camera-ready version, add the [final] option, e.g.:
%     \usepackage[final]{neurips_2024}

% to avoid loading the natbib package, add option nonatbib:
%    \usepackage[nonatbib]{neurips_2024}

\usepackage[utf8]{inputenc} % allow utf-8 input
\usepackage[T1]{fontenc}    % use 8-bit T1 fonts
\usepackage{hyperref}       % hyperlinks
\usepackage{url}            % simple URL typesetting
\usepackage{booktabs}       % professional-quality tables
\usepackage{amsfonts}       % blackboard math symbols
\usepackage{nicefrac}       % compact symbols for 1/2, etc.
\usepackage{microtype}      % microtypography
\usepackage{xcolor}         % colors
\usepackage{caption,subcaption}
\usepackage{bbm} % \mathbbm{1}
\usepackage{booktabs} % for professional tables
\usepackage{framed}
\usepackage{natbib}

\setlength{\textwidth}{6.5in}
 \setlength{\oddsidemargin}{0in}
 \setlength{\evensidemargin}{0in}
 \setlength{\topmargin}{0.25in}
 \setlength{\textheight}{8.25in}
 \setlength{\headheight}{0pt}
 \setlength{\headsep}{0pt}
 \setlength{\marginparwidth}{59pt}

% \title{Spectral Analysis of Graph Convolutional Neural Networks}

\title{Analysis of Corrected Graph Convolutions}
\title{Analysis of Corrected Graph Convolutions}
\author{Robert Wang\footnote{Cheriton School of Computer Science, University of Waterloo},~~~~~~
Aseem Baranwal\footnote{Cheriton School of Computer Science, University of Waterloo},~~~~~~
Kimon Fountoulakis\footnote{Cheriton School of Computer Science, University of Waterloo.}
}

% The \author macro works with any number of authors. There are two commands
% used to separate the names and addresses of multiple authors: \And and \AND.
%
% Using \And between authors leaves it to LaTeX to determine where to break the
% lines. Using \AND forces a line break at that point. So, if LaTeX puts 3 of 4
% authors names on the first line, and the last on the second line, try using
% \AND instead of \And before the third author name.

%\author{Robert Wang\footnote{Cheriton School of Computer Science, University of Waterloo},~~~~~~
%Aseem Baranwal\footnote{Cheriton School of Computer Science, University of Waterloo},~~~~~~
%Kimon Fountoulakis\footnote{Cheriton School of Computer Science, University of Waterloo.}
%}

% For theorems and such
\usepackage{amsmath}
\usepackage{amssymb}
\usepackage{mathtools}
\usepackage{amsthm}

% if you use cleveref..
\usepackage[capitalize,noabbrev]{cleveref}

% Todonotes is useful during development; simply uncomment the next line
%    and comment out the line below the next line to turn off comments
%\usepackage[disable,textsize=tiny]{todonotes}
\usepackage[textsize=tiny]{todonotes}

%%%%%%%%%%%%%%%%%%%%%%%%%%%%%%%%
% THEOREMS
%%%%%%%%%%%%%%%%%%%%%%%%%%%%%%%%
\theoremstyle{plain}
\newtheorem{theorem}{Theorem}[section]
\newtheorem{proposition}[theorem]{Proposition}
\newtheorem{lemma}[theorem]{Lemma}
\newtheorem{corollary}[theorem]{Corollary}
\theoremstyle{definition}

\theoremstyle{remark}

\newcommand{\W}{\ensuremath{\mathcal W}}

 % upper half plane
 % sphere

\newcommand{\E}{\ensuremath{\mathbb E}}

\newcommand{\eps}{\ensuremath{\varepsilon}}
 % for Lorentzian polynomials

\newcommand{\norm}[1]{\ensuremath{\left\lVert#1\right\rVert}}
\newcommand{\one}{\ensuremath{\mathbbm{1}}}

% inner product
\newcommand{\inner}[2]{\ensuremath{\langle #1, #2 \rangle}}

% math in tabular
\usepackage{array}   % for \newcolumntype macro
\newcolumntype{C}{>{$}c<{$}}

% Tikz stuff

% Other commands

\newcommand{\cC}{\mathcal{C}}

\renewcommand{\Pr}[1]{\mathbf{Pr}\left[ #1 \right]}

\begin{document}

\maketitle

\begin{abstract}
% Machine learning on graphs for the problem of node classification is currently one of the most prominent topics in artificial intelligence. The reason is that many applications, such as recommendation systems and malicious behavior detection, require making predictions based on entities' features and relational information. State-of-the-art models apply multiple graph convolutions to the features, either as a pre-processing step or at each layer of a graph neural network. In this paper, we provide a theoretical analysis of multiple graph convolutions using the contextual stochastic block model. We establish sufficient conditions under which multiple graph convolutions can reduce the variance of the input data without destroying its signal. Additionally, we provide theoretical guarantees for both exact and partial recovery when using a linear classifier.

Machine learning for node classification on graphs is a prominent area driven by applications such as recommendation systems. State-of-the-art models often use multiple graph convolutions on the data, as empirical evidence suggests they can enhance performance. However, it has been shown empirically and theoretically, that too many graph convolutions can degrade performance significantly, a phenomenon known as oversmoothing. In this paper, we provide a rigorous theoretical analysis, based on the two-class contextual stochastic block model (CSBM), of the performance of vanilla graph convolution from which we remove the principal eigenvector to avoid oversmoothing. We perform a spectral analysis for $k$ rounds of corrected graph convolutions, and we provide results for partial and exact classification. For partial classification, we show that each round of convolution can reduce the misclassification error exponentially up to a saturation level, after which performance does not worsen. We also extend this analysis to the multi-class setting with features distributed according to a Gaussian mixture model.
For exact classification, we show that the separability threshold can be improved exponentially up to $O({\log{n}}/{\log\log{n}})$ corrected convolutions.

\end{abstract}

\section{Introduction}
\label{submission}

Graphs naturally represent complex relational information found in a plethora of applications such as social analysis~\citep{backstrom2011supervised}, recommendation systems~\citep{YHCEHL18,borisyuk2024lignn}, computer vision~\citep{Monti_2017_CVPR}, materials science and chemistry~\citep{Reiser2022}, statistical physics~\citep{battaglia:graphnets,bapst2020unveiling}, financial forensics~\citep{zhang2017hidden,weber2019anti} and traffic prediction in Google Maps~\citep{10.1145/3459637.3481916}. 

The abundance of relational information in combination with features of the corresponding entities has led to improved performance of machine learning models for classification and regression tasks. Central to the field of machine learning on graphs is the graph convolution operation. It has been shown empirically~\citep{SCN,kipf:gcn,gasteiger2019combining,DBLP:journals/corr/abs-2004-11198} that using graph convolutions to the feature data enhances the prediction performance of a model, but too many graph convolutions can have the opposite effect \citep{oono2020graph,Chen_Lin_Li_Li_Zhou_Sun_2020,keriven2022not,wu2023a}, an issue known as oversmoothing. Several solutions have been proposed for this problem, we refer the reader to the survey of~\citet{rusch2023survey}. 

    In this paper we provide a rigorous spectral analysis, based on the contextual stochastic block model~\citep{DSM18}, to show that the oversmoothing phenomenon can be alleviated by excluding the principal eigenvector's component from the graph convolution matrix. This is similar to the state-of-the-art normalization approach used in \citet{Zhao2020PairNorm,rusch2023survey} (PairNorm). However, our method explicitly uses the principal eigenvector. We provide below some intuition about why excluding the principal eigenvector helps to alleviate over-smoothing.

Let $A$ be the adjacency matrix of the given graph, and $D$ be the degree matrix. Vanilla graph convolutions are represented using matrices such as $D^{-1}A$ or ${D^{-1/2} A D^{-1/2}}$~\citep{kipf:gcn}. Suppose our graph is $d-$regular, meaning that each node has exactly $d$ neighbors. In this case, both graph convolutions reduce to $\frac{1}{d}A$. The top eigenvector of $A$ is ${\one}$ with eigenvalue $d$, where ${\one}$ is the vector of all ones. This means that $\lim_{k\rightarrow \mathcal{\infty}}\frac{1}{d^k}A^k = \frac{1}{n}{\one}{\one}^\top$, which implies that applying many convolutions is equivalent to projecting our data onto the all-ones vector. Thus, all feature values will converge to the same point. Therefore, we should expect, as verified by most real-world and synthetic experiments, that many rounds of the convolution $x\mapsto \frac1dAx$ will lead to a large learning error. However, if we instead perform convolution with the corrected matrix $\Tilde{A}:= \frac1dA - \frac1n{\one}{\one}^\top$, then the convergent behavior of $x\mapsto \Tilde{A}^kx$ would be equivalent to projecting $x$ onto the \textit{second} eigenvector of $A$. This eigenvector is known to capture information about sparse bipartitions in the graph $G$~\citep{Che70, AM85, Alo86}, and so for certain problems, like binary classification, we may expect this eigenvector to capture a larger amount of information about our signal. We note that another well-studied graph matrix is the Laplacian, D-A. In the regular case, this has the same eigenvectors as the adjacency matrix, but with reversed spectrum. The trivial eigenvector we remove is exactly the Nullspace of the Laplacian. 

In our analysis, we study the classification problem in the contextual stochastic block model, with a focus on linear, binary classification. Our results are stated in terms of the following corrected convolution matrices:
\begin{align}\label{eq:graph_convolution}
    \hat{A} &= D^{-1/2}AD^{-1/2} - \frac{1}{\one^\top D\one}D^{1/2}\one\one^\top D^{1/2} \qquad \mbox{  and  } \qquad \Tilde{A} = \frac{1}{d}A - \frac{1}{n}{\one}{\one}^\top,
\end{align}
where $d := 2|E|/n$ is the empirical average degree in $A$, where $|E|$ is the number of edges in the graph. Note that $\hat{A}$ is derived from the \textit{normalized} adjacency matrix, while $\Tilde{A}$ is (up to a scalar multiple) its \textit{unnormalized} counterpart. Briefly, we demonstrate that when the graph is of reasonable quality, the corrected graph convolutions exponentially improve both partial and exact classification guarantees. Depending on the density and quality of the given graph, improvement becomes saturated after $\mathcal{O}(\log n )$ convolutions in our partial and exact classification results. However, in comparison to a similar analysis in~\citep{wu2023a} for vanilla graph convolutions (without correction), we show that classification accuracy does not become worse as the number of convolutions increases. 

\subsection{Our Contributions}
In this work, we provide, to our knowledge, the first theoretical guarantees on partial and exact classification after $k$ rounds of graph convolutions in the contextual stochastic block model.
Our main result is to show that each graph convolution with the corrected matrix reduces the classification error by a multiplicative factor until a certain point of ``saturation'' and the number of convolutions required until saturation depends on the amount of input feature variance. We show that the accuracy of the linear classifier at the point of saturation only depends on the strength of the signal from the graph. This is in contrast to the uncorrected convolution matrix, which will always exhibit a decrease in classification accuracy after many convolutions. Finally, we show that given slightly stronger assumptions on graph density and signal strength, the convolved data at the point of saturation will be linearly separable with high probability. To quantify our results, we let $p$ and $q$ be the intra- and inter-class edge probabilities with $\gamma(p,q) = {|p-q|}/{(p+q)}$ being the ``relative signal strength'' in the graph. Let $\bar{d}$ be the expected degree of each vertex. Our results can be summarized as follows:
\begin{itemize}
    \item If $p+q \geq \Omega(\frac{\log^2{n}}{n})$ and $\gamma \geq \Omega(\frac{1}{\sqrt{\bar{d}}})$, each convolution with $\Tilde{A}$ reduces classification error by a factor of about $\frac{1}{\gamma^2\bar{d} }$ until the fraction of errors is $O(\frac{1}{\gamma^2\bar{d}})$
    \item If $p+q \geq \Omega(\frac{\log^2{n}}{n})$ and $\gamma \geq \Omega(\sqrt{\frac{\log{n}}{\bar{d}}})$, each convolution with $\hat{A}$ reduces classification error by a factor of about $\frac{\log{n}}{\gamma^2\bar{d} }$ until the fraction of errors is $O(\frac{\log{n}}{\gamma^2\bar{d}})$
    \item If $p+q \geq \Omega(\frac{\log^3{n}}{n})$, $\gamma \geq \Omega(k\sqrt{\frac{\log{n}}{\bar{d}}})$ and the input features has signal-to-noise ratio at least $\Omega(\sqrt{\frac{\log{n}}{n}})$, the data is linearly separable after $k$ rounds of convolutions with $\Tilde{A}$. 
\end{itemize}
To obtain our partial classification results, we use spectral analysis to bound the mean-squared-error between the convolved features and the true signal. For exact classification, we prove a concentration inequality on the total amount of message received by a vertex through ``incorrect paths'' of length $k$ after $k$ rounds of convolution through a combinatorial moment analysis. Using this, we establish entry-wise bounds on the deviation of the convolved feature vector from the true signal.

Finally, we extend our partial-recovery result to the multi-class setting. In this setting, we assume our features are distributed according to a Gaussian mixture model with $L$ equal-sized clusters and our graph is distributed according to a $L$-block stochastic block model. Our analysis for partial recovery generalizes easily to the multi-class setting with the use of basic non-linear classifiers. Just as before, we show that convolution with the corrected, un-normalized adjacency matrix, $\Tilde{A}$, reduces classification error by a constant fraction each round, until a point of saturation where no further improvement is made.

\section{Literature review}
Research on graph learning has increasingly focused on methods that integrate node features with relational information, particularly within the semi-supervised node classification framework, see, for example, \citet{scarselli:gnn,CZY2011,GVB2012,DV2012,GFRT13,YML13,HYL17,JLLHZ19,pmlr-v97-mehta19a,chien2022node,yan:2021:two-sides}. These studies have underscored the empirical advantages of incorporating graph structures when available.

The literature also addresses the expressive capacity \citep{zhou2017expressive,balcilar2021analyzing} and generalization potential \citep{maskey2022generalization} of Graph Neural Networks (GNNs), including challenges like oversmoothing \citep{keriven2022not,xu2021how,oono2020graph,li2018deeper,rusch2023survey}. 
%Most of these works are empirical in nature and provide good insights into these problems. However, for a rigorous theoretical understanding, we must resort to a well-studied statistical data model that is representative of a wide family of real-world data \textcolor{red}{@Aseem, not very good motivation for CSBM. A lot of these papers are theoretical. It might best to focus on the advantages of using CSBM versus other theoretical approaches.}
In our paper, we ground our work on the contextual stochastic block model \citep{DSM18}, a widely used statistical framework for analyzing graph learning and inference problems. Recent theoretical studies have extensively used the CSBM to illustrate several statistical, information-theoretic, and combinatorial results on relational data accompanied by node features. In \citet{DSM18,Lu:2020:contextual}, the authors investigate the classification thresholds for accurately classifying a significant portion of nodes from this model, given linear sample complexity and large but bounded degree. Additionally, \citet{Hou2020Measuring} introduces graph smoothness metrics to quantify the utility of graphical information. Further developments in \citet{chien2021adaptive,chien2022node,pmlr-v139-baranwal21a,baranwal2023effects} extend the application of CSBM, establishing exact classification thresholds for graph convolutions in multi-layer networks, accompanied by generalization guarantees. A theoretical exploration of the graph attention mechanism (GAT) is provided by \citet{Fountoulakis2022GraphAR}, delineating conditions under which attention can improve node classification tasks. More recently, \citet{baranwal2023optimality} provide the locally Bayes optimal message-passing architecture for node classification for the general CSBM.

In this paper, we provide exact and partial classification guarantees for multiple graph convolution operations. Previous investigations have often been confined to a few convolution layers, limiting the understanding of their effects on variance reduction (see, for example, \citet{baranwal2023effects}). Our findings contribute a novel spectral perspective on graph convolutions, describing how the fraction of recoverable nodes is influenced by the signal-to-noise ratio in the node features and the scaled difference between intra- and inter-class edge probabilities. We also demonstrate that the oversmoothing phenomenon can be alleviated by excluding the principal eigenvector's component from the adjacency matrix -- a strategy somewhat akin to the normalization approach used in \citet{Zhao2020PairNorm} (PairNorm), albeit our method explicitly uses the principal eigenvector and is grounded in rigorous spectral justifications.

In a relatively recent work \citep{wu2023a} the authors rigorously analyze the phenomenon of oversmoothing in GNNs for the 2-block CSBM by identifying two competing effects of graph convolutions: the mixing effect, which homogenizes node representations across different classes, and the denoising effect, which homogenizes node representations within the same class. Their analysis shows that oversmoothing occurs when the mixing effect dominates the denoising effect, and they quantify the number of layers required for this transition. In contrast, we work with the corrected graph convolution in the 2-block CSBM and show that it improves performance exponentially up to saturation, after which more convolutions do not improve nor degrade performance. On a technical level, the previous work only analyzes the distribution of a single node's feature values after convolution and does not take into account correlations between nodes. In our work, we use spectral analysis of higher powers of the convolution matrix, which takes into account correlations between nodes to obtain our partial classification results over the whole dataset. To handle the modified convolution in the exact classification setting, we analyze the error more directly through matrix perturbation analysis rather than trying to directly count the higher-order neighbors of each vertex as in previous works \citep{baranwal2023effects, wu2023a}.

\section{Preliminaries and Model Description} Throughout this paper, we use $\one$ to denote the all-ones vector and $e_i$ to denote the $i^{th}$ standard basis vector in $\mathbb{R}^n$. Given a vector $x\in \mathbb{R}^n$, we use $\norm{x}$ to denote its Euclidean norm $\sqrt{\sum_{i=1}^nx(i)^2}$. We use $\norm{x}_\infty$ to denote its infinity norm, $\max_{i=1}^n|x(i)|$. For a matrix $M\in \mathbb{R}^n$, we use $\norm{M}$ to denote its operator norm, $\max_{x\neq 0, \norm{x}=1}\norm{Mx}$. We use $\norm{M}_F = \sqrt{\sum_{i,j}M_{i,j}^2}$ to denote its Frobenius norm. We also make routine use of the spectral theorem, which says that if $M$ is a $n\times n$ symmetric matrix, then it can be diagonalized with $n$ orthogonal eigenvectors and real eigenvalues. In particular, there exist $\lambda_1,\lambda_2,...\lambda_n\in \mathbb{R}$ and orthonormal vectors $w_1, w_2,...w_n\in \mathbb{R}^n$ such that $M = \sum_{i=1}^n \lambda_iw_iw_i^\top$. Note that when $M$ is symmetric, $\norm{M} = \max_i|\lambda_i| = \max_{x:\norm{x}=1}|x^\top Mx|$.

Finally, we use the $\mathcal{N}(\mu, \Sigma)$ to a Gaussian distribution with mean $\mu$ and covariance matrix $\Sigma$. For one-dimensional Gaussians, we use $\mathcal{N}(\mu, \sigma^2)$. For $X\sim \mathcal{N}(\mu, \sigma^2)$, we will frequently use the Gaussian tail bound: $\Pr{|X-\mu| > t\sigma} \leq \exp(-\frac{t^2}{2})$.

\subsection{Contextual Stochastic Block Model}
In this section, we formally describe the contextual stochastic block model introduced by ~\citep{DSM18}. Our model is defined by parameters $n,m\in \mathbb{N}$, $p,q \in [0,1]$, $\mu,\nu,\in \mathbb{R}^m$ and $\sigma \in \mathbb{R}^+$ %which we denote as $\csbm(p,q,\mu,\nu,\sigma,m)$. 
In the model, we are given a random undirected graph, $G = (V,E)$, where $|V|=n$, drawn from the 2-block stochastic block model and features drawn from the Gaussian mixture model. Our vertices are partitioned into two classes, $S$ and $T$, of size $n/2$, which we want to recover. For each pair of vertices $i,j\in S$ and $i,j\in T$, the edge $(i,j)$ is in $E$ independently with probability $p$ while for each pair $i\in S$ and $j\in T$, the edge $(i,j)$ is in $E$ with probability $q$. In addition to the graph, we are also given a feature matrix $X\in \mathbb{R}^{n\times m}$ drawn from a Gaussian mixture model with two centers $\mu$ and $\nu$. For each $i\in V$, we let $g_i\sim \mathcal{N}(0,\sigma^2 I_m)$ be an i.i.d. Gaussian noise vector. Now let $(x_i)_{i\in n}$ be the rows of $X$. For $i\in S$, we have $x_i = \mu + g_i$ and for each $i\in T$, we have $x_i = \nu + g_i$.

In the multi-class setting, our nodes are partitioned into $L$ classes, $\mathcal{C}_1,...\mathcal{C}_L$, of size $n/L$. The inter-class edge probability is $p$ and intra-class edge probability is $q$. We assume the features are generated by a Gaussian mixture with $L$ centers $c_1,...c_L \in \mathbb{R}^m$. If node $i$ is in class $l$, then we observe its feature vector as $x_j = c_l + g_i$. In addition, we will let $\mu_i := c_l$, for $i\in \mathcal{C}_l$, denote the center for vertex $i$.

\section{Results and Interpretation}

In our analysis, there are two types of objectives. In the exact classification objective, the aim is to exactly recover $S$ and $T$ with probability $1-o(1)$. In the partial classification, or ``detection'' problem, the goal is to correctly classify $1-o(1)$ fraction of vertices correctly with probability $1-o(1)$. We begin by stating our results for the partial classification regime. For ease of notation, we will assume $p > q$ from this point forward. We show in \cref{section: csbm-porperties-appendix} that this assumption is made without loss of generality.

\begin{theorem}\label{theorem:partial-recovery} Suppose we are given a 2-block $m$-dimensional CSBM with parameters $n,p > q, \mu, \nu, \sigma$ satisfying $\gamma(p,q) :=\frac{p-q}{p+q} \geq \Omega\left(\sqrt{\frac{1}{np}}\right)$ and $p\geq \Omega\left(\frac{\log^2{n}}{n}\right)$. There exists a linear classifier such that after $k$ rounds of convolution with $\Tilde{A}$, will, with probability at least $1-\frac12\exp(-\Omega\Big(\frac{n\norm{\mu-\nu}^2}{\sigma^2}\Big))$, misclassify at most
\begin{align*}
     O\left(\frac{1}{\gamma^2p} + \Big(\frac{C}{\gamma \sqrt{np}}\Big)^{2k}\frac{\sigma^2}{\norm{\mu-\nu}^2} n \log{n}\right)
\end{align*}
vertices, where $C$ is an absolute constant. Furthermore, if $\gamma \geq \Omega(\sqrt{\frac{\log{n}}{np}})$ then with probability at least $1-\frac12\exp(-\Omega\Big(\frac{n\norm{\mu-\nu}^2}{\sigma^2}\Big))$, the same linear classifier after $k$ rounds of convolution with $\hat{A}$ will misclassify at most
\begin{align*}
    O\left(\frac{\log{n}}{\gamma^2p} + \Big(\frac{C\log{n}}{\gamma \sqrt{np}}\Big)^{2k}\frac{\sigma^2}{\norm{\mu-\nu}^2} n \log{n}\right)
\end{align*}
vertices.
\end{theorem}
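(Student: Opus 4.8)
The plan is to reduce the $m$-dimensional problem to a scalar problem along the signal direction and then run a spectral analysis of $\Tilde A^k$. Let $\psi := (\mu - \nu)/\norm{\mu-\nu}\in\R^m$ and set $y := X\psi\in\R^n$; the linear classifier is obtained by thresholding the convolved projection $\Tilde A^k y = \Tilde A^k X\psi$. Writing $\bv\in\{\pm1\}^n$ for the signed community-membership vector ($+1$ on $S$, $-1$ on $T$), the Gaussian mixture structure yields the exact decomposition
\begin{align*}
y = \frac{\norm{\mu-\nu}}{2}\,\bv + c\,\one + \eta,\qquad \eta\sim\mathcal N(\zero,\sigma^2 I_n),
\end{align*}
for a scalar $c$ depending only on $\mu,\nu,\psi$. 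Thus $\bv$ carries the entire signal, $\one$ is a nuisance direction, and $\eta$ is isotropic noise; the per-coordinate signal gap to be protected is $\tfrac{\norm{\mu-\nu}}{2}$.

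\textbf{Spectral skeleton of the corrected operator.} Next I would identify the action of $\Tilde A$ on these three components. The expected adjacency matrix is rank two,
\begin{align*}
\bar A := \E[A] = \frac{p+q}{2}\,\one\one^\top + \frac{p-q}{2}\,\bv\bv^\top,
\end{align*}
with top eigenpair $(\one/\sqrt n,\ \tfrac{(p+q)n}{2})$ and second eigenpair $(\bv/\sqrt n,\ \tfrac{(p-q)n}{2})$, whose eigenvalue ratio is exactly $\gamma$. Since $\bar d = \tfrac{(p+q)n}{2}$ and the empirical degree $d$ concentrates on $\bar d$, substituting $\bar A$ and $\bar d$ into $\Tilde A = \tfrac1d A - \tfrac1n\one\one^\top$ annihilates the $\one$ direction and leaves exactly $\gamma\,\bv\bv^\top/n$. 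Hence
\begin{align*}
\Tilde A = \gamma\,\frac{\bv\bv^\top}{n} + E,\qquad E := \tfrac1d(A-\bar A) + (\text{lower order}).
\end{align*}
The key quantitative input is a spectral-norm bound $\norm E \le \rho$ with $\rho = O(1/\sqrt{np})$, which is where the density hypothesis $p\ge\Omega(\log^2 n/n)$ is used: it certifies concentration of the centered adjacency matrix at scale $\sqrt{np}$ and of the empirical degrees. Weyl's inequality then places the top eigenvalue $\lambda_1$ of $\Tilde A$ within $\rho$ of $\gamma$ and all other eigenvalues within $\rho$ of $0$; the assumption $\gamma\ge\Omega(1/\sqrt{np})$ is precisely what makes $\gamma$ exceed the bulk radius $\rho$, opening a spectral gap, and Davis--Kahan gives a top eigenvector $w_1$ with $\norm{w_1 - \bv/\sqrt n} = O(\rho/\gamma)$.

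\textbf{Propagating $k$ convolutions and counting errors.} With the gap in hand I would track $\Tilde A^k y$ term by term against the ideal target $\tfrac{\norm{\mu-\nu}}{2}\gamma^k\bv$. The $\one$-component is annihilated; the signal $\bv$ is amplified by $\lambda_1^k\approx\gamma^k$ while its misalignment with $w_1$ produces a $k$-independent residual of squared norm $O\big(\gamma^{2k}\norm{\mu-\nu}^2\,n\,(\rho/\gamma)^2\big)$; and the noise splits into a bulk part $\sum_{j\ge2}\lambda_j^k\inner{\eta}{w_j}w_j$ of squared norm at most $\rho^{2k}\norm\eta^2\le 2\rho^{2k}\sigma^2 n$, plus the single signal-aligned term $\gamma^k\inner{\eta}{w_1}w_1$, which merely rescales the signal by a factor $1+O(\sigma/(\sqrt n\,\norm{\mu-\nu}))$ and hence preserves all signs with probability $1-\tfrac12\exp(-\Omega(n\norm{\mu-\nu}^2/\sigma^2))$ --- this is the source of the stated failure probability. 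A vertex is misclassified only if the remaining error $e$ exceeds the gap at that coordinate, so the count is at most $\norm e^2/(\tfrac{\norm{\mu-\nu}}{2}\gamma^k)^2$. Dividing the two surviving contributions by this gap, the misalignment residual yields the saturation floor $n(\rho/\gamma)^2 = O(1/(\gamma^2 p))$, while the attenuated bulk noise yields $O\big((\rho/\gamma)^{2k}\,\sigma^2 n/\norm{\mu-\nu}^2\big)=O\big((C/(\gamma\sqrt{np}))^{2k}\,\sigma^2 n/\norm{\mu-\nu}^2\big)$, which together give the claimed bound, with the extra logarithmic factor entering from the tail estimates needed to make the noise control hold with the stated probability.

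\textbf{The normalized operator and the main obstacle.} For $\hat A$ I would repeat the argument with $D^{-1/2}AD^{-1/2}$ in place of $\tfrac1d A$; the normalization weakens the available spectral-norm concentration by a $\sqrt{\log n}$ factor, raising the bulk radius to $\rho=O(\sqrt{\log n/(np)})$, forcing the stronger gap condition $\gamma\ge\Omega(\sqrt{\log n/(np)})$, and inflating both the floor and the decay base by the corresponding $\log n$. I expect the main obstacle to be the spectral step rather than the bookkeeping: one must control the \emph{empirically} normalized operator (with data-dependent $d$ or $D$) precisely enough to both annihilate the $\one$ direction cleanly and certify $\norm E\le\rho$ near the detectability threshold, while simultaneously separating the signal-aligned noise (which governs the probability) from the bulk noise (which governs the count), so that the floor term is genuinely $k$-independent. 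Establishing the matrix-concentration and Davis--Kahan estimates at this precision, uniformly across the $k$ powers, is the crux.
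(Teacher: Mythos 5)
Your overall skeleton---project onto the signal direction, write $\Tilde{A} = \gamma\,\bv\bv^\top/n + E$ with $\norm{E}\le\rho = O(1/\sqrt{np})$, apply Weyl and Davis--Kahan to get a spectral gap and an aligned top eigenvector, split the error into a $k$-independent misalignment floor plus exponentially decaying bulk noise, and count misclassifications by dividing the squared error by the per-coordinate gap---is the same route as the paper (its \cref{lemma:csbm-reduction} followed by \cref{proposition:spectral-analysis-convolution}). However, there is a genuine gap, and it sits in your reduction, not in the spectral bookkeeping: you project onto $\psi=(\mu-\nu)/\norm{\mu-\nu}$ \emph{without a bias term}, leaving the nuisance $c\,\one$ with $c = \langle\mu+\nu,\mu-\nu\rangle/(2\norm{\mu-\nu})$, and you dispose of it by asserting that ``the $\one$-component is annihilated.'' That is true only for the idealized operator $\frac{1}{\bar d}\E[A] - \frac1n\one\one^\top$. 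For the empirical matrix, $\Tilde{A}\one = \frac1d A\one - \one$ is the degree-fluctuation vector with entries $(d_v-d)/d$; equivalently, since $\bv^\top\one=0$, $\Tilde{A}\one = E\one$, a vector whose norm is genuinely of order $\rho\sqrt{n}$, not zero. After $k$ convolutions the nuisance is $c\,\Tilde{A}^{k-1}(E\one)$: its component along the top eigenvector is amplified by $\lambda_1^{k-1}\approx\gamma^{k-1}$ and perturbs the normalized output by a relative amount of order $|c|\rho/(\gamma\norm{\mu-\nu})$, while its bulk part contributes squared error of order $c^2(\rho/\gamma)^{2k}\,n/\norm{\mu-\nu}^2$. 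In other words, your final bound acquires terms of exactly the same shape as the two you claim, but with $c^2$ standing where $\sigma^2$ (respectively, an absolute constant) should be. Nothing in the theorem's hypotheses controls $c$: translating both means far from the origin makes $|c|/\norm{\mu-\nu}$ and $|c|/\sigma$ arbitrarily large, so the misclassification count your argument actually yields exceeds the claimed one by an unbounded factor.

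The repair is precisely what the paper's \cref{lemma:csbm-reduction} does, and it is available to you because the theorem asserts existence of a classifier (the paper's is affine): take $w=\mu-\nu$ together with the bias $b = -\tfrac12\langle\mu+\nu,\mu-\nu\rangle\one$, so that the projected features are \emph{exactly} centered, $x = s + g$ with no $\one$-component at all, before any convolution is applied. The corrected operator is then never required to kill the $\one$ direction; it only must not reintroduce large errors, which is exactly what your perturbation analysis controls. With that single change, the rest of your proposal goes through and coincides with the paper's proof: your floor $n(\rho/\gamma)^2 = O(1/(\gamma^2 p))$, your decaying term $(\rho/\gamma)^{2k}\sigma^2 n\log n/\norm{\mu-\nu}^2$, your treatment of the signal-aligned noise $\inner{\eta}{w_1}$ as a harmless rescaling (which is indeed the source of the $1-\tfrac12\exp(-\Omega(n\norm{\mu-\nu}^2/\sigma^2))$ probability), the mean-squared-error-over-gap counting step, and the normalized variant with $\rho=O(\sqrt{\log n/(np)})$ all match \cref{proposition:spectral-analysis-convolution}, \cref{proposition:Matrix-concentration}, and their application in the paper's proof of \cref{theorem:partial-recovery}.
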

Now we take a closer look at the error bound. For brevity, we will focus on our results regarding convolutions with $\Tilde{A}$. First, we see that an important ratio in our bound is the term ${1}/({\gamma^2np})$. This term is small if $\gamma^2$ is much larger than the inverse of the expected degree of each vertex, $(p+q)n/2$, which is at most $np$. Our assumption that this ratio is upper bounded by a constant means that we need the signal from the graph to be sufficiently strong. Now, if we examine our misclassification error bound, and let $\rho = {C}/({\gamma^2 np})$ where $C$ is a sufficiently large constant, then we see that the \textit{fraction} of misclassified vertices is at most $\rho + \rho^k\sigma^2\log{n}/\norm{\mu-\nu}^2$. Our assumption on the parameters ensures that $\rho < 1$. Note that only the second term depends on $k$, and the feature's noise-to-signal ratio. This term measures the amount of error introduced by the variance in the features and exponentially decreases with $k$. Moreover, after about $k=\log_{1/\rho}\Big({\sigma^2\log{n}}/({\rho \norm{\mu-\nu}^2})\Big)$ convolutions, the $\rho$ term, which only depends on graph parameters, will dominate over the variance term, indicating that more convolutions will not improve the quality of the convolved features beyond the quality of the signal from the graph. If $\sigma/\norm{\mu-\nu}$ is constant, we will always reach our optimal error bound of $O(\rho)$ when $k=O(\log\log{n})$. Moreover, if $\gamma = \Omega(1)$, as was assumed in \citet{pmlr-v139-baranwal21a}, then we will have $1/\rho \geq \Omega(np)$. This means even when $\sigma/\norm{\mu-\nu} \approx \sqrt{n/\log{n}}$, we will reach optimality in constant number of convolutions with high probability if the graph is moderately dense. For example, if $p = 1/\sqrt{n}$, then we only need $3$ convolutions and if $p = \Omega(1)$, then we only need $2$. On the other hand, if $\gamma$ is on the order of $\Theta(1/\sqrt{np})$, then in the worst case, we may need $\log{n}$ convolutions to reach our optimal bound. Next, we state our results for exact classification.

\begin{theorem}\label{theorem:exactly-recovery-k}
Suppose we are given a 2-block $m$-dimensional CSBM with parameters $n,p>q, \mu, \nu, \sigma$ satisfying $\gamma(p,q) \geq \Omega\left(k\sqrt{\frac{\log{n}}{np}}\right)$ and $p \geq \frac{\log^3{n}}{n}$. Then after $k = O(\log{n})$ rounds of graph convolution with $\Tilde{A}$, our data is linearly separable with probability $1-n^{-\Omega(1)}$ if
\begin{align*}
    \frac{\norm{\mu - \nu}}{\sigma}\geq \Omega\left(\max\left(\sqrt{\frac{\log{n}}{n}}, \; \Big(\frac{C}{\gamma\sqrt{np}}\Big)^k\sqrt{\log{n}} \right)\right)
\end{align*}
where $C$ is an absolute constant.
\end{theorem}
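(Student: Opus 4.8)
The plan is to reduce the problem to one dimension by projecting the convolved features onto the signal direction $u=(\mu-\nu)/\norm{\mu-\nu}$ and then exhibiting a threshold classifier on the resulting scalar. Write $X=M+G$, where $M$ has row $\mu$ (resp.\ $\nu$) for vertices in $S$ (resp.\ $T$) and $G$ has i.i.d.\ $\mathcal N(0,\sigma^2 I_m)$ rows, and let $b\in\{\pm1\}^n$ be the label vector ($b_i=+1$ for $i\in S$). Then $M=\tfrac12\one(\mu+\nu)^\top+\tfrac12 b(\mu-\nu)^\top$, so $Xu=c\,\one+\tfrac12\norm{\mu-\nu}\,b+z$, where $c=\inner{\mu+\nu}{\mu-\nu}/(2\norm{\mu-\nu})$ is a scalar and $z=Gu\sim\mathcal N(0,\sigma^2 I_n)$ is independent of the graph. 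The convolved projections are thus $\tilde A^k Xu=c\,\tilde A^k\one+\tfrac12\norm{\mu-\nu}\,\tilde A^k b+\tilde A^k z$. I would classify by the sign of $(\tilde A^k Xu)_i-\theta$, and show that the middle term keeps the sign of $b_i$ with margin of order $\gamma^k\norm{\mu-\nu}$ at \emph{every} vertex, while the mean-leakage $\tilde A^k\one$ and the noise $\tilde A^k z$ stay below this margin in $\norm{\cdot}_\infty$.

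The engine is a clean perturbation decomposition. Introduce the idealized operator $\bar A:=\tfrac{\gamma}{n}bb^\top=\gamma\Pi$ with $\Pi=\tfrac1n bb^\top$ the projection onto $b$: since $\E[A]=\tfrac{p+q}2\one\one^\top+\tfrac{p-q}2 bb^\top$ (up to its diagonal) and $d$ concentrates around $\tfrac{(p+q)n}2$, subtracting $\tfrac1n\one\one^\top$ leaves exactly $\bar A$ at the population level. Set $E:=\tilde A-\bar A$, a symmetric matrix with $\norm{E}\le O(1/\sqrt{np})$ with probability $1-n^{-\Omega(1)}$ under $p\ge\log^3 n/n$ (SBM concentration plus degree concentration for the empirical $d$). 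Expanding $\tilde A^k=(\gamma\Pi+E)^k$ as a sum over the $2^k$ words in $\{\gamma\Pi,E\}$ and using $\Pi^2=\Pi$, $\Pi b=b$, the pure-signal word is $\gamma^k\Pi$ and the pure-noise word is $E^k$. The decisive point is that every mixed word contains a $\Pi$, which projects onto the one-dimensional span of $b$; hence acting on $z$ a mixed word factors through $\inner bz$ or $\inner{Eb}{z}$, each losing a factor $\approx 1/\sqrt n$ against $\norm z$. So mixed words are negligible for the noise, and the only mixed contributions that matter act on the signal vector $b$.

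The main obstacle, and the heart of the proof, is \emph{entrywise} control of powers of the noise operator, i.e.\ bounds on $\norm{E^j v}_\infty$ for $v\in\{b,\one,z\}$; spectral bounds are too lossy, since $\norm{E^j v}_2\le\norm{E}^j\norm v$ overestimates the typical entry by up to $\sqrt n$. I would instead expand $(E^jv)_i=d^{-j}\sum_{\text{walks}}(A-\E A)_{i i_1}\cdots(A-\E A)_{i_{j-1}i_j}v_{i_j}$ over length-$j$ walks and run a combinatorial moment computation, bounding $\E[(E^jv)_i^{2\ell}]$ by counting walks in which every edge is traversed at least twice (all others vanish by independence and centering), then applying Markov's inequality and a union bound over the $n$ vertices. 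This is the ``messages through incorrect paths'' count, yielding $\norm{E^jv}_\infty\lesssim(\eta')^{\,j}$ with effective per-step factor $\eta'\asymp\sqrt{\log n/(np)}$; the hypothesis $p\ge\log^3 n/n$ is what makes these bounds sharp. Summing the word expansion gives $\norm{\tilde A^k b-\gamma^k b}_\infty\lesssim(\gamma+\eta')^k-\gamma^k\approx k\,\gamma^{k-1}\eta'$, dominated by the single-$E$ term, which stays below the half-margin $\tfrac12\gamma^k$ precisely when $\gamma\gtrsim k\sqrt{\log n/(np)}$ — exactly the theorem's hypothesis on $\gamma$.

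Finally I would assemble the margin comparison. Conditioning on the graph, each $(\tilde A^k z)_i$ is centered Gaussian with variance $\sigma^2\norm{\tilde A^k e_i}^2$, so a tail estimate and union bound give $\norm{\tilde A^k z}_\infty\lesssim\sigma\sqrt{\log n}\,\max_i\norm{\tilde A^k e_i}$. Splitting $\tilde A^k z=\gamma^k\Pi z+(\tilde A^k-\gamma^k\Pi)z$: the first piece is $\gamma^k\tfrac{\inner bz}{n}b$, and $\abs{\inner bz}\lesssim\sigma\sqrt{n\log n}$ gives $\norm{\cdot}_\infty\lesssim\gamma^k\sigma\sqrt{\log n/n}$, which against the margin $\tfrac12\gamma^k\norm{\mu-\nu}$ yields the requirement $\norm{\mu-\nu}/\sigma\gtrsim\sqrt{\log n/n}$ (the first term of the maximum); the second piece is governed by $\max_i\norm{E^k e_i}\lesssim(C/\sqrt{np})^k$ from the moment analysis (here only the $k$-th order pure-$E^k$ term survives), contributing $\lesssim\sigma\sqrt{\log n}\,(C/\sqrt{np})^k$ and hence $\norm{\mu-\nu}/\sigma\gtrsim(C/(\gamma\sqrt{np}))^k\sqrt{\log n}$ (the second term). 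The leakage $c\,\tilde A^k\one$ is treated identically — its non-constant part (from degree fluctuations, as $\tilde A^k\one$ is the normalized degree vector pushed through $\tilde A^{k-1}$) obeys the same entrywise estimates, while its constant part is absorbed into $\theta$. A union bound over the graph-concentration and Gaussian events, each of probability $1-n^{-\Omega(1)}$, then shows the signal margin strictly exceeds the total error under the stated SNR hypothesis, giving linear separability; the restriction $k=O(\log n)$ is what keeps the moment method and the union bounds valid.
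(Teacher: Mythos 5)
Your proposal is correct and follows essentially the same route as the paper's proof: reduce to one dimension by projecting onto $\mu-\nu$, decompose $\Tilde{A}^k=(\gamma ss^\top+E)^k$ into words, factor every mixed word through the rank-one projection into scalar factors bounded spectrally, control the entrywise terms $e_u^\top E^\ell s$ by the walk-counting moment method (every edge traversed at least twice, Markov's inequality at the $2t$-th moment, union bound over vertices), and handle the Gaussian noise conditionally on the graph, yielding exactly the two terms in the maximum. The only cosmetic difference is that the paper centers the features with a bias vector before convolving, so the leakage term $c\,\Tilde{A}^k\one$ never appears, whereas you retain it and dispose of it via $\Pi\one=0$ and the same entrywise estimates applied to $v=\one$.
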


 Here, we bound the minimum signal-to-noise ratio required for exact classification as a function of $p,q,n$ and $k$. Just like the partial classification result, our function has a term that decreases exponentially with $k$ and a term independent of $k$. The rate of decrease for the dependent term is proportional to ${1}/({\gamma\sqrt{np}})$, or $\sqrt{\rho}$. We see once again that with more convolutions, the requirement on the feature signal-to-noise ratio for exact classification becomes exponentially weaker. Moreover, since we assumed that $\gamma \geq \Omega(k\sqrt{{\log{n}}/({np})})$, as long as $\norm{\mu-\nu}\geq \Omega(\sigma\sqrt{{\log{n}}/{n}})$, the data becomes linearly separable after $k=O({\log{n}}/{\log\log{n}})$ convolutions. Just as in the partial classification case, we observe that the larger $\gamma$ is, the fewer convolutions we need to obtain the optimal bound. In particular, if $\gamma = \Omega(1)$ and $p = \Omega(1)$ then one convolution already gives the optimal bound, and if $p = 1/\sqrt{n}$, then two convolutions are enough. For technical reasons, we only analyze exact classification using convolution with the corrected un-normalized adjacency matrix, $\Tilde{A}$. Similar bounds should hold for $\hat{A}$ based on our simulation results, but we leave this for future work.
 %We also note that our bounds in \cref{theorem:partial-recovery} suggest convolution with the un-normalized $\Tilde{A}$ performs better on the CSBM compared to convolution with $\hat{A}$. This is because the \textit{expected} graph in the CSBM is regular so the normalized adjacency matrix, which captures differences in the degree of each vertex, is capturing more noise. 

\subsection{Discussion on our Assumptions}
Both \cref{theorem:partial-recovery} and \cref{theorem:exactly-recovery-k} require a lower bound of $\gamma\geq \omega({1}/{\sqrt{np}})$, and this is to ensure that the signal from the graph is strong enough so that a convolution does not destroy the signal from the data. Also, implicit in the probability bound of \cref{theorem:partial-recovery} and in \cref{theorem:exactly-recovery-k}, is the assumption that our signal-to-noise ratio, $\norm{\mu-\nu}/\sigma$, is at least $\omega(1/\sqrt{n})$ so the feature noise does not completely drown out the signal. Our lower-bound assumption on $p$ is to ensure concentration in the behavior of the degrees and the adjacency matrix towards their expectation. In \cref{theorem:exactly-recovery-k}, we also assume that $k = O(\log{n})$. This is done mainly for technical reasons of our proof but we note that this assumption is made without loss of generality because as mentioned, the bound in \cref{theorem:exactly-recovery-k} does not improve for $k\geq \log{n}$. Finally, we note that the case $p > q$ corresponds to a homophilous graph, and the case $p < q$ corresponds to a heterophilous graph (see \citet{luan2021heterophily,ma2022:is-homophily} for more). For binary classification, it has been shown \citep{baranwal2023effects} that one can assume $p>q$ without loss of generality and make corresponding adjustments in the classifier. As such, we assume that $p > q$. For more detail regarding this assumption, see \cref{section: csbm-porperties-appendix}.

\section{One-Dimensional CSBM}\label{section:csbm}
 In \citet{pmlr-v139-baranwal21a}, the authors showed that analyzing the linear classifier for the $m$-dimensional CSBM reduces to analyzing the $1$-dimensional model. We say that a CSBM is \textbf{one-dimensional and centered} with parameters $n,p,q,\sigma$ if it has one-dimensional features and means $1/\sqrt{n}$ and $-1/\sqrt{n}$. That is, we have one feature vector $x\in \mathbb{R}^n$ given by $x = s+g$, where $g\sim \mathcal{N}(0, \sigma^2 I_n)$ and $s(i) = 1/\sqrt{n}$ for $i\in S$ and $-1/\sqrt{n}$ for $i\in T$. We will refer to $s$ as our \textit{signal} vector and for ease of notation, we normalize it so that it always has unit norm. The following lemma allows us to reduce the analysis of the linear classifier for a general CSBM to the analysis of the 1-dimensional centered CSBM (proof in \cref{section: csbm-reduction-proof}). Thus, in the proofs of our main theorems, we will analyze the 1-dimensional case before applying \cref{lemma:csbm-reduction}.

\begin{lemma} \label{lemma:csbm-reduction}
    Given an $m$-dimensional 2-block $CSBM 
    $, there exist $w\in \mathbb{R}^m$ and $b\in \mathbb{R}^n$ such that $Xw + b = s + g$ where for each vertex $i$, $g_i $ is i.i.d. $\mathcal{N}(0, \sigma'^2)$ with $\sigma' = \frac{4\sigma}{\sqrt{n}\norm{\mu-\nu}}$.
\end{lemma}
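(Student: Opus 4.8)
The plan is to exhibit an explicit projection direction $w$ together with a constant bias vector and verify the claimed identity coordinate by coordinate. Since projecting the $m$-dimensional features onto any fixed direction collapses them to one dimension, the natural choice is to take $w$ parallel to the separating vector $\mu-\nu$ between the two class centers, and to absorb the common offset into $b$. Concretely, I would set $w = \alpha(\mu-\nu)$ for a scalar $\alpha > 0$ to be fixed, and $b = b_0\one$ for a scalar $b_0$, so that $b$ is class-\emph{independent} and does not smuggle in label information.

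First I would compute the two class means of $Xw$. Writing $c_i = \mu$ for $i\in S$ and $c_i = \nu$ for $i\in T$, the $i$-th entry of $Xw$ equals $c_i^\top w + g_i^\top w$. Hence, after adding $b_0$, the mean on $S$ is $\alpha\mu^\top(\mu-\nu) + b_0$ and the mean on $T$ is $\alpha\nu^\top(\mu-\nu) + b_0$. Matching these to the signal values $\pm 1/\sqrt n$ gives two linear equations: their difference $\alpha\norm{\mu-\nu}^2 = 2/\sqrt n$ fixes $\alpha = \frac{2}{\sqrt n\,\norm{\mu-\nu}^2}$, and their sum fixes $b_0 = -\tfrac12\alpha(\mu+\nu)^\top(\mu-\nu)$. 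With this choice the deterministic part of $Xw + b$ is exactly the signal vector $s$.

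Next I would identify the residual noise. In coordinate $i$ the residual is $g_i^\top w$, a fixed linear functional of the spherical Gaussian $g_i\sim\mathcal{N}(0,\sigma^2 I_m)$, hence $\mathcal{N}(0,\sigma^2\norm{w}^2)$; and since the $g_i$ are independent across vertices, the vector $g := (g_i^\top w)_{i\in V}$ is i.i.d.\ $\mathcal{N}(0,\sigma'^2 I_n)$ with $\sigma'^2 = \sigma^2\norm{w}^2$. Substituting $\norm{w} = \alpha\norm{\mu-\nu} = \frac{2}{\sqrt n\,\norm{\mu-\nu}}$ yields $\sigma' = \frac{2\sigma}{\sqrt n\,\norm{\mu-\nu}}$, which lies within the stated bound $\frac{4\sigma}{\sqrt n\,\norm{\mu-\nu}}$; the looseness in the constant is harmless, and one can in fact match it exactly by adding to $w$ a component orthogonal to $\mu-\nu$, which leaves both class means unchanged (up to a shift absorbed into $b_0$) while scaling up the noise. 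This establishes $Xw + b = s + g$ as claimed.

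The argument is essentially bookkeeping, so there is no single hard obstacle; the only points requiring care are (i) checking that the projected noise is genuinely i.i.d.\ across vertices, which uses both the per-vertex rotational invariance of $g_i$ (so the variance is $\sigma^2\norm{w}^2$ independent of the direction of $w$) and the across-vertex independence of the $g_i$, and (ii) keeping $b$ class-independent, so that the reduction is a legitimate linear classifier and not the tautology $b = s + g - Xw$. Once $w$ and $b_0$ are pinned down by the two mean equations, the variance, and therefore $\sigma'$, is forced, and the verification is immediate.
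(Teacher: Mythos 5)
Your proof is correct and takes essentially the same route as the paper: project onto $\mu-\nu$, subtract the midpoint term $\tfrac12\inner{\mu+\nu}{\mu-\nu}$ via a constant bias, and rescale by $\Theta\bigl(1/(\sqrt{n}\norm{\mu-\nu}^2)\bigr)$ so the class means land at $\pm 1/\sqrt{n}$. The only difference is constant bookkeeping — your scaling yields $\sigma' = 2\sigma/(\sqrt{n}\norm{\mu-\nu})$, whereas the paper rescales by $4/(\sqrt{n}\norm{\mu-\nu}^2)$ to get the stated $\sigma' = 4\sigma/(\sqrt{n}\norm{\mu-\nu})$ (which actually places its signal at $\pm 2/\sqrt{n}$ rather than the claimed $\pm 1/\sqrt{n}$), so your accounting of the constants is, if anything, the more careful of the two.
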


In the 1-dimensional model, it is clear how our signal $s$ is present in our features. Our convolution matrix also captures the signal because it can be viewed as a perturbation of the matrix $ss^\top$. This is especially evident with the un-normalized convolution matrix $\Tilde{A}$, which satisfies the following
\begin{align} 
        \Tilde{A} = \eta ss^\top + \frac{1}{d}R + d' \one\one^\top
\label{eqn: convolution-expression}
\end{align}
where $\eta := {(p-q)n}/({2d}) $ is the \textit{signal strength}, $d':=({(p+q)n/2-d})/({nd})$ is the \textit{average} degree deviation, and $R$ is the ``edge-deviation" matrix, where $R_{i,j} = A_{i,j} - \E[A_{i,j}]$. Since $R$ has i.i.d. zero-mean entries with variance at most $p$, we can use standard matrix concentration inequalities to show it is not too big. Likewise, $d'$ is small due to degree concentration, and together, these two concentration results imply that $\Tilde{A}$ is close to $\gamma ss^\top$. In fact, if we show degree concentration for all vertices, then we can show that $\hat{A}$ also behaves like $\Tilde{A}$. We state these concentration results below.

\begin{proposition}\label{proposition:Matrix-concentration}
    Assume that $p = \Omega(\frac{\log^2{n}}{n})$, and let $\gamma = \frac{p-q}{p+q}$. With probability $1-n^{-\Omega(1)}$, we have the following concentration results
    \begin{enumerate}
        \item $|d'| \leq  O(1/n^{1.5})$, which implies that $\eta \in \gamma (1\pm o(1)) $.
        \item $\norm{R} \leq O(\sqrt{np})$
        %\item  $\norm{\Tilde{A} - \eta ss^\top} \leq O(\frac{1}{\sqrt{np}})$
        \item $\norm{\Tilde{A} - \gamma ss^\top} \leq O(\frac{1}{\sqrt{np}})$ and $\norm{\hat{A} - \gamma ss^\top} \leq O(\sqrt{\frac{\log{n}}{np}})$
    \end{enumerate}
\end{proposition}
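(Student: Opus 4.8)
The three items are proved in sequence, each feeding the next: item~1 controls the scalar quantities $d'$ and $\eta$, item~2 controls the spectral size of the noise matrix $R$, and item~3 simply assembles the operator-norm estimates from the decomposition~\eqref{eqn: convolution-expression} by the triangle inequality. The only genuinely probabilistic inputs are a scalar Bernstein/Chernoff bound for sums of independent edge indicators (used for both the global edge count and the per-vertex degrees) and a single spectral-norm bound for a symmetric random matrix with independent bounded entries; the rest is deterministic linear algebra on the event that these concentrations hold.

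For item~1, I would write $|E| = \sum_{i<j} A_{ij}$ as a sum of independent Bernoullis with $\E[d] = (p+q)n/2 - p$ and $\mathrm{Var}(|E|) \le \binom{n}{2} p = O(n^2 p)$; Bernstein then gives $|d - \E[d]| \le O(\sqrt{p\log n})$ with probability $1 - n^{-\Omega(1)}$. Since $(p+q)n/2 - d = p - (d-\E[d])$ and $d = \Theta(np)$, this yields $|d'| \le (p + O(\sqrt{p\log n}))/(nd) \le O(1/n^{1.5})$, where the hypothesis $p \ge \Omega(\log^2 n/n)$ is used to absorb the $\sqrt{\log n}$ factor. Writing $\eta = \gamma\cdot\frac{(p+q)n/2}{d}$ and noting the relative deviation of $d$ from $(p+q)n/2$ is $o(1)$ under the same hypothesis then gives $\eta \in \gamma(1 \pm o(1))$.

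For item~2, $R = A - \E[A]$ is symmetric with independent, mean-zero, $[-1,1]$-valued off-diagonal entries of variance at most $p$. Classical spectral-norm bounds for such matrices give $\E\norm{R} \lesssim \sigma\sqrt{n} + \sqrt{\log n}$ with $\sigma = \max_i(\sum_j \E[R_{ij}^2])^{1/2} \le \sqrt{np}$, and a matching tail bound upgrades this to $\norm{R} \le O(\sqrt{np})$ with probability $1 - n^{-\Omega(1)}$. The assumption $np \ge \log^2 n$ is precisely what makes $\sqrt{np}$ dominate the additive $\sqrt{\log n}$, so no logarithm survives.

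Item~3 for $\Tilde{A}$ is then immediate: from~\eqref{eqn: convolution-expression}, $\Tilde{A} - \gamma ss^\top = (\eta-\gamma)ss^\top + \tfrac{1}{d} R + d'\one\one^\top$ (up to a diagonal term $-\tfrac{p}{d} I$ of norm $O(1/n)$), and using $\norm{ss^\top}=1$ and $\norm{\one\one^\top}=n$ the three contributions are $|\eta-\gamma| = o(1/\sqrt{np})$, $\tfrac{1}{d}\norm{R} = O(\sqrt{np})/\Theta(np) = O(1/\sqrt{np})$, and $|d'|\,n = O(1/\sqrt{n}) \le O(1/\sqrt{np})$; the noise term dominates. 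The bound for $\hat{A}$ is the main obstacle, since $\hat{A}$ is built from the individual degrees. The plan is: (a) show all degrees concentrate, $D_{ii} = \bar d(1\pm\delta)$ with $\delta = O(\sqrt{\log n/np})$, via a per-vertex Chernoff bound and a union bound over the $n$ vertices --- this is exactly where the extra $\sqrt{\log n}$ enters; (b) writing $D = \bar d(I+\Delta)$ with $\norm{\Delta} \le \delta$, expand $D^{-1/2}AD^{-1/2} = \bar d^{-1}(I+\Delta)^{-1/2}A(I+\Delta)^{-1/2}$ and bound $\norm{D^{-1/2}AD^{-1/2} - \bar d^{-1}A} \le O(\delta)\norm{A}/\bar d = O(\delta)$ using $\norm{A} \le \norm{R} + \norm{\E[A]} = O(np)$; (c) check that the principal-eigenvector correction $D^{1/2}\one\one^\top D^{1/2}/(2|E|)$ is within $O(\delta)$ of $\tfrac{1}{n}\one\one^\top$ under the same degree concentration; and (d) combine with the $\Tilde{A}$ estimate (with $d$ replaced by $\bar d$, which costs only $o(1/\sqrt{np})$) through the triangle inequality to get $\norm{\hat{A} - \gamma ss^\top} \le O(\delta) + O(1/\sqrt{np}) = O(\sqrt{\log n/np})$. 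The hard part is propagating the diagonal perturbation $\Delta$ cleanly through both the two-sided normalization and the rank-one projection without inflating the $\sqrt{\log n}$ factor; these steps are routine but must be bookkept carefully.
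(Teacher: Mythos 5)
Your proposal is correct, and its overall architecture matches the paper's proof: a Chernoff/Bernstein bound on the edge count for item~1, a Bandeira--van Handel--type spectral bound for item~2, and for item~3 the triangle inequality $\norm{\hat{A}-\gamma ss^\top} \le \norm{\hat{A}-\Tilde{A}} + \norm{\Tilde{A}-\gamma ss^\top}$ together with per-vertex degree concentration and a union bound. The one genuinely different ingredient is how you control $\norm{D^{-1/2}AD^{-1/2} - \tfrac1d A}$ and the rank-one correction: you write $D = \bar d(I+\Delta)$ with $\norm{\Delta} \le \delta = O(\sqrt{\log n/np})$ and propagate the diagonal perturbation through operator norms, which requires the additional input $\norm{A} \le \norm{\E A} + \norm{R} = O(np)$ so that $O(\delta)\norm{A}/\bar d = O(\delta)$. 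The paper instead uses an elementary entrywise-times-sparsity lemma (its \cref{lemma:infinity-norm-spectral-norm}): each entry of the difference is $O(\eps/d)$ (resp.\ $O(\eps/n)$ for the $\one\one^\top$ part), each row has at most $\bar d(1+\eps)$ (resp.\ $n$) nonzeros, hence norm $O(\eps)$. Both yield $O(\sqrt{\log n/np})$; your route is more standard perturbation theory but needs the bound on $\norm{A}$, while the paper's avoids any appeal to $\norm{A}$ at the cost of a bespoke lemma. Two smaller observations: you are in fact more careful than the paper in places --- you explicitly account for the $(\eta-\gamma)ss^\top$ term and the deterministic diagonal entries of $R$, which the paper glosses over --- and there is a small notational inconsistency in your item~2 (you state the bound $\sigma\sqrt{n}+\sqrt{\log n}$ while defining $\sigma$ as the row-wise quantity $\max_i(\sum_j \E R_{ij}^2)^{1/2} \le \sqrt{np}$; you should either use entrywise standard deviation $\le \sqrt p$ with the factor $\sqrt n$, or the row-wise $\sigma$ without it), but the conclusion $\norm{R}\le O(\sqrt{np})$ under $np \ge \log^2 n$ is the right one either way.
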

These concentration properties are crucial for spectral analysis. Details are given in \cref{section: csbm-porperties-appendix}.

\section{Partial Classification}\label{section: partial-recovery}
In this section, we give a sketch of the proof of our partial classification result, \cref{theorem:partial-recovery}. The full proofs can be found in \cref{section: proof-section-6}. We will show that partial classification can be achieved if the convolved vector is well-correlated with our signal vector $s$ as defined in the beginning of \cref{section:csbm} in the centered-1 dimensional CSBM. We will analyze our result for convolutions using the matrix $M \in \{\Tilde{A},\hat{A}\}$.
\begin{proposition}\label{proposition:spectral-analysis-convolution}
    Given a centered 1-dimensional 2 block CSBM with parameters $n, p > q,\;\sigma$ and $\gamma(p,q) = \frac{p-q}{p+q}$, suppose our convolution matrix $M$ satisfies $\norm{M - \gamma ss^\top} \leq \delta$ and $\gamma \geq C\delta$ for a large enough constant $C$.  Let $x_k = M^kx$, be the result of applying $k$ rounds of graph convolution to our input feature $x$. Then with probability at least $1-\frac{1}{2}\exp(-\frac{1}{4\sigma^2})$, there exists a scalar $C_k$ and an absolute constant $C'$ such that
    \begin{align*}
        \norm{C_kx_k - s}^2 \leq O \left(\frac{\delta^2}{\gamma^2} + \Big(\frac{C'\delta}{\gamma}\Big)^{2k}n\sigma^2\log{n
        } \right)
    \end{align*}
\end{proposition}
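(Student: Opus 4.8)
The plan is to treat the convolution matrix as a rank-one signal plus a small perturbation, $M = \gamma ss^\top + E$ with $\norm{E}\le\delta$, and to exploit that repeated multiplication by $M$ behaves like power iteration: it amplifies the top eigendirection and geometrically suppresses everything else. Since $M$ is symmetric, I would first invoke the spectral theorem to write $M=\sum_i\lambda_i w_iw_i^\top$ with orthonormal $w_i$, ordered so that $\lambda_1$ is the largest eigenvalue. Applying Weyl's inequality to the perturbation $M = \gamma ss^\top + E$, whose unperturbed spectrum is $\gamma,0,\dots,0$, gives $\abs{\lambda_1-\gamma}\le\delta$ and $\abs{\lambda_i}\le\delta$ for all $i\ge 2$. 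Combined with the hypothesis $\gamma\ge C\delta$, this yields $\lambda_1\ge\gamma(1-1/C)\ge\gamma/2$ and a multiplicative spectral gap $\abs{\lambda_i}/\lambda_1\le 2\delta/\gamma$ for $i\ge 2$, which is the quantity decaying geometrically in $k$.

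Next I would connect the recovered top eigendirection back to the signal. By the Davis--Kahan theorem applied to the same perturbation, the top eigenvector $w_1$ (with its sign chosen so that $\inner{w_1}{s}\ge 0$) satisfies $\sin\angle(w_1,s)\le \delta/(\gamma-\delta)$, hence $\norm{w_1-s}^2\le O(\delta^2/\gamma^2)$; this is exactly the first, $k$-independent term in the bound. I would then expand the input in the eigenbasis, $x=\sum_i\alpha_iw_i$ with $\alpha_i=\inner{w_i}{x}$, so that $x_k=M^kx=\lambda_1^k\alpha_1 w_1 + r$ where $r:=\sum_{i\ge2}\lambda_i^k\alpha_i w_i$ and $\norm{r}^2=\sum_{i\ge2}\lambda_i^{2k}\alpha_i^2\le\delta^{2k}\norm{x}^2$ by Parseval. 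Choosing the scalar $C_k:=1/(\lambda_1^k\alpha_1)$ gives $C_kx_k=w_1+C_kr$, and the triangle inequality yields $\norm{C_kx_k-s}^2\le 2\norm{w_1-s}^2 + 2\norm{C_kr}^2$, reducing the proposition to controlling $\norm{C_kr}^2=\norm{r}^2/(\lambda_1^{2k}\alpha_1^2)$.

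The remaining work is probabilistic and uses only the Gaussian randomness of $g$ (the graph, hence $M$, $w_i$, $\lambda_i$, is fixed). Since $w_1$ is a fixed unit vector and $g\sim\cN(0,\sigma^2 I_n)$, we have $\inner{w_1}{g}\sim\cN(0,\sigma^2)$, so $\alpha_1=\inner{w_1}{s}+\inner{w_1}{g}$; as $\inner{w_1}{s}\ge 1-O(\delta^2/\gamma^2)$ can be forced above $1/\sqrt2$ by taking $C$ large, a one-sided Gaussian tail bound shows $\alpha_1$ exceeds a positive constant except with probability $\tfrac14\exp(-1/(4\sigma^2))$. Independently, a Gaussian concentration bound on $\norm{g}^2=\sigma^2\chi^2_n$ gives $\norm{x}^2\le O(n\sigma^2\log n)$; here the $\log n$ factor is precisely what keeps the tail probability of this event below $\tfrac14\exp(-1/(4\sigma^2))$ throughout the relevant noise regime, so the two events together account for the stated $1-\tfrac12\exp(-1/(4\sigma^2))$ success probability. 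On their intersection, $\norm{C_kr}^2\le \delta^{2k}\norm{x}^2/(\lambda_1^{2k}\alpha_1^2)\le (C'\delta/\gamma)^{2k}\,n\sigma^2\log n$, and combining with the Davis--Kahan term gives the claimed bound.

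I expect the main obstacle to be the probabilistic control of the normalization $\alpha_1=\inner{w_1}{x}$: the statement must hold with a failure probability expressed purely in terms of $\sigma$, so I must lower-bound $\inner{w_1}{x}$ by a constant and simultaneously upper-bound $\norm{x}$ while spending the probability budget carefully between these two events. This is the step that forces the precise $1/\sqrt2$-type thresholds and, through the $\norm{g}^2$ tail, introduces the $\log n$ factor. By contrast, the spectral and Davis--Kahan steps are routine once the perturbation bound $\norm{M-\gamma ss^\top}\le\delta$ and the gap condition $\gamma\ge C\delta$ are in hand.
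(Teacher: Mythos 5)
Your proposal follows essentially the same route as the paper's proof: the spectral theorem plus Weyl's inequality to pin the spectrum of $M$ at $(\gamma\pm\delta,\pm\delta,\dots)$, a Davis--Kahan argument giving $\norm{w_1-s}^2\le O(\delta^2/\gamma^2)$ (the paper's eigenvector-correlation lemma), the normalization $C_k = 1/(\lambda_1^k\inner{w_1}{x})$, and Gaussian tail bounds to lower-bound the overlap $\inner{w_1}{x}$ by a constant and to upper-bound $\norm{g}$. The only structural difference is bookkeeping: you expand $x=s+g$ in the eigenbasis all at once and peel off the $w_1$-component, whereas the paper splits $x_k = M^ks+M^kg$ and bounds the two error vectors $E_s$ and $E_g$ separately. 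That separation is not purely cosmetic, and this is where your write-up has a genuine (though easily repaired) gap.

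The problematic step is $\norm{C_kr}^2 \le (\delta/\lambda_1)^{2k}\norm{x}^2/\alpha_1^2$ followed by the claim $\norm{x}^2 \le O(n\sigma^2\log n)$. The vector $x = s+g$ contains the unit-norm signal, so $\norm{x}^2$ concentrates around $1+n\sigma^2$; when $\sigma^2 \ll 1/(n\log n)$ --- a regime allowed by the hypotheses, and indeed the favorable one --- the bound $\norm{x}^2\le O(n\sigma^2\log n)$ is false. Concretely, the residual $r=\sum_{i\ge2}\lambda_i^k\inner{w_i}{x}w_i$ contains the signal's mass on the sub-dominant eigenvectors, which contributes about $\delta^{2k}$ to $\norm{r}^2$, and $(\delta/\lambda_1)^{2k}$ is not $O\bigl((C'\delta/\gamma)^{2k}n\sigma^2\log n\bigr)$ for small $\sigma$. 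The fix is one line: split $\norm{x}^2 \le 2\norm{s}^2+2\norm{g}^2 = 2+2\norm{g}^2$, bound the $g$-part as you do, and absorb the signal part via $(\delta/\lambda_1)^{2k}\le(2\delta/\gamma)^{2k}\le(2\delta/\gamma)^2=O(\delta^2/\gamma^2)$ for $k\ge1$ (valid since $2\delta/\gamma<1$), so that it lands in the first, $k$-independent term of the proposition. This is precisely what the paper's separate treatment of $E_s$ accomplishes, where the term $\delta^{2k}\norm{s}^2$ is bounded by $O(\lambda_1^{2k}\delta^2/\gamma^2)$. A second, smaller remark: your claim that the failure probability of the event $\norm{g}^2\le O(n\sigma^2\log n)$ stays below $\tfrac14\exp(-1/(4\sigma^2))$ is also not correct for small $\sigma$, since that failure probability is $n^{-\Omega(1)}$ independently of $\sigma$; however, the paper's own proof elides the same issue when it combines its events, so this is not a point where you deviate from it.
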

The main idea of our analysis is to use the fact that the top eigenvector of $M$, denoted $\hat{s}$, is well correlated with our signal $s$. Since $\norm{M-\gamma ss^\top}\leq \delta$ and $\gamma > C\delta$ by assumption, standard Matrix perturbation arguments imply that the spectrum of $M$ will be in $(\gamma \pm \delta, \pm\delta,\pm\delta,...)$ with high probability. Given our assumption of $\gamma > C\delta$, there will be a large gap between the top eigenvalue of $M$ and the rest of its eigenvalues. A well-known result Davis and Kahan implies that $\norm{s-\hat{s}}^2 \leq O(\delta^2/\gamma^2)$, i.e. $s$ is close to $\hat{s}$. Thus, we prove \cref{proposition:spectral-analysis-convolution} by showing that the influence of the rest of the eigenvectors on our convolved vector, $x_k = M^kx$, decreases exponentially with $k$, which allows us to bound the squared norm distance between $x_k$ and $s$. In particular, we take our normalization constant to be $C_k \approx 1/\lambda_1^k$, where $\lambda_1$ is the maximum eigenvalue of $M$. Note that $\lim_{k\rightarrow \infty} ({1}/{\lambda_1^k})M^k = \hat{s}\hat{s}^\top$. Roughly speaking, we decompose our convolution vector as $C_kx_k \approx ({1}/{\lambda_1^k})M^ks + ({1}/{\lambda_1^k})M^kg$. To bound the distance of this vector from $s$, we analyze the contribution of each of the two terms to our error separately. That is, we show that with high probability $\|({1}/{\lambda_1^k})M^ks - s\|^2 \leq O(\delta^2/\gamma^2)$ and $\norm{M^kg}^2 \leq O((\delta/\gamma)^{2k}n\sigma ^2\log{n})$. Note that the first error term is from taking the convolution of the noisy graph with the true signal, and thus does not decrease with $k$. The second error term, on the other hand, comes from variance in the features, $g$, and thus decreases with our noise level $\sigma$ and drops exponentially with each convolution. 

Finally, given \cref{proposition:spectral-analysis-convolution}, we can prove the partial classification result by noting that if we partition the convolved 1-dimensional data around $0$, then each misclassified vertex contributes $1/n$ to the mean-squared error, which means the number of misclassified vertices is at most $\norm{C_kx_k - s}^2n$. This, combined with \cref{lemma:csbm-reduction} to generalize to the $m-$dimensional case will prove \cref{theorem:partial-recovery}
 
\section{Exact Classification}\label{section:exact-recovery}
In this section, we sketch the proof of \cref{theorem:exactly-recovery-k} for exact classification using the un-normalized corrected convolution matrix $\Tilde{A}$. Full proofs can be found in \cref{section: proof-section-7}. To show linear separability, we would like $x_k =\Tilde{A}^kx$ to have positive entries for all vertices in $S$ and negative entries for all vertices in $T$. This means that we want to show $\norm{C_kx_k - s}_\infty < 1/\sqrt{n}$ for some appropriate scalar $C_k$. In particular, we will take $C_k$ to be $1/\eta^k$, where $\eta$ is our empirical estimate of $\gamma(p,q)$. In partial classification, it sufficed to bound the mean squared error $\norm{C_kx_k - s}_2^2$, using spectral analysis but bounding $\norm{C_kx_k-s}_\infty$ requires more work because now we are bounding the \textit{entrywise} instead of average error. In our approach, we bound the volume of messages passed through ``incorrect paths'' in our graph and show that the contribution from these messages is small. Then, we show the other source of error, the feature variance, is reduced exponentially with each round of convolution. As with the partial classification result, we first prove our result in the 1-dimensional centered model:

\begin{proposition}\label{proposition:1-dim-exact}
    Suppose we are given a 1-dimensional centered 2-block CSBM with parameters $n,p,q,\sigma$ and $k=O(\log{n})$ such that $\gamma = \frac{p-q}{p+q} \geq \Omega\left(k\sqrt{\frac{\log{n}}{np}}\right)$, $p \geq \frac{\log^3{n}}{n}$, and $\sigma \leq O\left(\frac{1}{\sqrt{\log{n}}}\right)$. Then with probability at least $1-n^{-\Omega(1)}$:
    \begin{align*}
        \norm{\frac{1}{\eta^k}\Tilde{A}^kx - s}_\infty \leq \frac{1}{2\sqrt{n}} + O\left( \left(\frac{C}{\gamma \sqrt{np}}\right)^k \sigma\sqrt{\log{n}} \right)
    \end{align*}
\end{proposition}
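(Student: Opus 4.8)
The plan is to write the input as $x = s + g$ with $g \sim \cN(0,\sigma^2 I_n)$ and to control, in the $\infty$-norm, the two resulting contributions to $\frac{1}{\eta^k}\Tilde A^k x - s$ separately: the \emph{signal error} $\frac{1}{\eta^k}\Tilde A^k s - s$ and the \emph{noise error} $\frac{1}{\eta^k}\Tilde A^k g$. Throughout I would set $E := \Tilde A - \eta ss^\top$ and condition on the high-probability event of \cref{proposition:Matrix-concentration}, on which $\norm E \le \delta := O(1/\sqrt{np})$, $\eta \in \gamma(1\pm o(1))$, and (by Davis--Kahan) $\Tilde A$ has a top eigenpair $(\lambda_1,\hat s)$ with $\lambda_1 \approx \eta$, $\norm{\hat s - s}_2 = O(\delta/\gamma)$, and every other eigenvalue at most $\delta$ in magnitude. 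The goal is to show that the signal error together with the component of the noise error along $\hat s$ stay below $\frac{1}{2\sqrt n}$, while the component of the noise error orthogonal to $\hat s$ produces the exponentially decaying term.

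For the noise error, each coordinate $\frac{1}{\eta^k}(\Tilde A^k g)_i = \frac{1}{\eta^k}\langle \Tilde A^k e_i, g\rangle$ is a mean-zero Gaussian whose variance I would read off from the eigendecomposition $\Tilde A^k e_i = \lambda_1^k \hat s_i\,\hat s + \sum_{l\ge 2}\lambda_l^k w_{l,i} w_l$. The orthogonal tail has squared norm $\sum_{l\ge2}\lambda_l^{2k}w_{l,i}^2 \le \delta^{2k}$ deterministically, so after a Gaussian tail bound and a union bound over the $n$ coordinates it contributes at most $O((\delta/\eta)^k\sigma\sqrt{\log n})$, which (since $\delta/\eta \asymp 1/(\gamma\sqrt{np})$) is exactly the claimed decaying term. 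The aligned part contributes a standard deviation of order $\sigma|\hat s_i|$; bounding it by $\frac{1}{4\sqrt n}$ requires the delocalization estimate $\norm{\hat s}_\infty = O(1/\sqrt n)$, after which the hypothesis $\sigma \le O(1/\sqrt{\log n})$ gives $\max_i \sigma|\hat s_i|\sqrt{\log n} = O(1/\sqrt n)$.

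For the signal error I would expand $\Tilde A^k = (\eta ss^\top + E)^k$ as a sum of ordered products of factors from $\{\eta ss^\top,\, E\}$ applied to $s$. The all-clean product is $\eta^k s$, reproducing $s$ after normalization; every other product contains at least one factor $E$, an ``incorrect path.'' Collapsing maximal runs of $\eta ss^\top$ via $(ss^\top)s = s$ reduces each such product either to a scalar multiple of $s$ (which only rescales the signal by $1+o(1)$ and is harmless) or to a scalar multiple of $E^m s$ for some $m \ge 1$, the dominant one being $\frac1\eta Es$. The key leverage is that $s$ is delocalized, $\norm s_\infty = 1/\sqrt n$: because $\one^\top s = 0$ annihilates the $d'\one\one^\top$ part of $E$, we have $(Es)_i = \frac1d\sum_j R_{ij}s_j$, a $\frac1d$-multiple of a sum of $n$ independent mean-zero terms of total variance $O(p)$, so Bernstein's inequality and a union bound over $i$ give $\norm{Es}_\infty \le O\big(\tfrac1d\sqrt{p\log n}\big) = O\big(\tfrac1{\sqrt n}\sqrt{\tfrac{\log n}{np}}\big)$. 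With $\eta \approx \gamma \ge \Omega(k\sqrt{\log n/np})$ this forces $\norm{\tfrac1\eta Es}_\infty \le O(1/(k\sqrt n))$.

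The main obstacle is the combinatorial moment analysis needed to finish both halves entrywise. First, the higher incorrect paths $E^m s$ ($2\le m\le k = O(\log n)$) must be bounded \emph{entrywise} by $|(E^m s)_i|\lesssim \delta^m\sqrt{\log n/n}$; the naive operator-norm bound $\norm{E^m s}_\infty\le\delta^m$ loses the crucial $1/\sqrt n$ and, after the geometric summation, would leave a term of order $(\delta/\eta)^2\approx 1/(k^2\log n)\gg 1/\sqrt n$. Second, the delocalization $\norm{\hat s}_\infty = O(1/\sqrt n)$ used for the aligned noise is itself equivalent to a uniform bound on the diagonal entries $(\Tilde A^{2k})_{ii} = \norm{\Tilde A^k e_i}^2$. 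Both are obtained by estimating a high even moment $\E[(E^m s)_i^{2\ell}]$ (resp.\ a moment of $(\Tilde A^{2k})_{ii}$) through an enumeration of the pairings of walks of length $m$ (resp.\ $2k$) that survive the independence and zero mean of the entries $R_{ij}$, showing the surviving pairings contribute of order $(\delta^{2m}/n)^\ell$ up to combinatorial factors, and concluding with $\ell \asymp \log n$, Markov's inequality, and a union bound over $i$ and $m$. Making this walk enumeration uniform for lengths up to $\log n$ while retaining the $1/n$ gain from the delocalization of $s$, and handling the correlations between overlapping walks, is the technical heart of the proof. Granting it, summing $\sum_{m\ge1}\eta^{-m}\norm{E^m s}_\infty$ as a geometric series in $\delta/\eta \le 1/(k\sqrt{\log n})$ bounds the signal error by $\frac{1}{4\sqrt n}$, and combining with the two pieces of the noise error and a triangle inequality yields the stated estimate.
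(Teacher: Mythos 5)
Your overall architecture coincides with the paper's: both split the error into a signal part $\frac{1}{\eta^k}\Tilde{A}^k s - s$ and a noise part $\frac{1}{\eta^k}\Tilde{A}^k g$, both expand $(\eta ss^\top + E)^k$ into non-commutative words and collapse runs of $ss^\top$ (your $E$ is the paper's $R'$), and both rest on the same key ingredient: a high-probability \emph{entrywise} bound on $e_u^\top E^m s$ proved by estimating a $2t$-th moment with $t \asymp \log n/m$ via walk-pairing enumeration, Markov's inequality, and union bounds --- this is exactly \cref{lemma:R'-bound}, via \cref{proposition:message-passing-bound} and \cref{corollary:R'-to-R-reduction}. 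Your signal-term treatment is essentially the paper's. For the noise term you take a genuinely different route: the paper re-expands $\Tilde{A}^k g$ word by word, using that every word containing an $ss^\top$ factor ends in $s^\top E^{a_L} g \sim \mathcal{N}(0,\sigma^2\norm{E^{a_L}s}^2)$ and begins with $e_u^\top E^{a_1}s$, whereas you diagonalize $\Tilde{A}$ and split $g$ into the component along the top eigenvector $\hat s$ and the orthogonal rest. Your orthogonal part is clean and correct ($\sum_{l\ge 2}\lambda_l^{2k}w_{l,i}^2 \le \delta^{2k}$ deterministically, yielding precisely the decaying term); the price is that the aligned part needs entrywise control of $\hat s$, which the paper never requires.

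Two of your intermediate claims are overstated, though both are repairable without changing your structure. First, the bound $|(E^m s)_i| \lesssim \delta^m\sqrt{\log n/n}$ you ask for is too strong: $e_u^\top R^m s$ is an order-$m$ chaos in the independent entries of $R$, whose fluctuations at confidence level $1-n^{-\Omega(1)}$ are of size $(\log n)^{m/2}$ times the standard deviation, not $(\log n)^{1/2}$ times it; correspondingly, the moment computation you outline delivers (as in \cref{lemma:R'-bound}) only $\frac{1}{\sqrt n}\bigl(C\sqrt{\log n/(np)}\bigr)^m$, i.e.\ a factor $\sqrt{\log n}$ \emph{per power of} $E$. (For $m\ge 2$ you also need the reduction from $E$ to $\frac1d R$, since $\one^\top s = 0$ kills the $d'\one\one^\top$ part only at the first application; this is \cref{proposition:degree-matrix-concentration}.) The weaker, correct bound still closes your argument: each word of total order $\ell$ is at most $\rho^\ell/\sqrt n$ with $\rho = C\sqrt{\log n/(np)}/\eta$, and the sum over the $\binom{k}{\ell}$ words of each order (your ``geometric series'' elides this multiplicity) is $(1+\rho)^k - 1 \le 2k\rho \le 1/4$ precisely because the hypothesis $\gamma \ge \Omega\bigl(k\sqrt{\log n/(np)}\bigr)$ carries the factor $k$.

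Second, full delocalization $\norm{\hat s}_\infty = O(1/\sqrt n)$ neither follows from your proposed route nor is needed. Bounding $\norm{\Tilde{A}^k e_i}$ by expanding words (which again uses the entrywise lemma for $e_i^\top E^a s$) leaves an irreducible pure-noise term $\norm{E^k e_i} \le \delta^k$, so what this yields is $|\hat s_i| \le \norm{\Tilde{A}^k e_i}/\lambda_1^k \lesssim 1/\sqrt n + (C\delta/\eta)^k$; for small $k$ (say $k=1$, where $\delta/\eta \asymp 1/\sqrt{\log n} \gg 1/\sqrt n$ under the weakest admissible $\gamma$) this is far from $O(1/\sqrt n)$, and genuine delocalization would demand leave-one-out entrywise eigenvector analysis, a much heavier tool. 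Fortunately the weaker bound suffices: the $1/\sqrt n$ portion of $|\hat s_i|$ contributes $O(\sigma\sqrt{\log n}/\sqrt n) \le 1/(4\sqrt n)$ to the aligned noise (using $\sigma \le O(1/\sqrt{\log n})$), while the $(C\delta/\eta)^k$ portion contributes $O\bigl((C/(\gamma\sqrt{np}))^k\sigma\sqrt{\log n}\bigr)$, which is absorbed into the second term of the proposition. With these two corrections your proof goes through as a valid, mildly different alternative to the paper's.
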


Given, \cref{proposition:1-dim-exact}, \cref{theorem:exactly-recovery-k} follows immediately by applying \cref{lemma:csbm-reduction}. We now give a sketch of our proof for \cref{proposition:1-dim-exact}. To bound each entry of $C_kx_k - s$, we must bound $|e_u^\top \Tilde{A}^kx/\eta^k - e_u^\top s|$ for all $u\in V$. Similar to in partial classification, we will split our error into error from $\Tilde{A}^ks$ and error from $\Tilde{A}^kg$. That is, for each $u\in V$, we separately upper bound $|e_u^\top (\eta ss^\top + R')^k s - e_u^\top s|$ and $|e_u^\top (\eta ss^\top + R')^kg|$, where $R' = \Tilde{A}-\eta ss^\top$. The matrix $(\eta ss^\top+ R')^k$, when expanded out, can be written as $\eta^kss^\top$ plus a sum of $2^k-1$ terms, each of which is a non-commutative product of matrices of the form $\eta ss^\top$ or $R'$. We group these error matrices into terms of order $\ell$ for $\ell\in [k]$, where the $\ell^{th}$ order terms are comprised of products that contain $\ell$ copies of $R'$ and $k-\ell$ copies of $\gamma ss^\top$. 

In our analysis, we first use degree concentration to show that instead of analyzing the error w.r.t. $R'$, it suffices to analyze the error w.r.t. $\frac{1}{d}R$. To bound $e_u^\top  (\eta ss^\top + \frac1dR)^k s$, we expand it out and find that each term arising from an error matrix of order $\ell$ can be written as a multiple of $e_u^\top R^{a_1}s\cdot s^\top R^{a_2}s\cdot...s^\top R^{a_L}s$ where $a_1,...a_L$ are non-negative integers satisfying $a_1+a_2+...a_L = \ell$. The symmetric terms can be bounded by showing $s^\top R^a s\lesssim (\sqrt{np})^a$ using simple spectral arguments. To control the asymmetric term, we show that with high probability, $|e_u^\top R^{a_1} s|\leq \frac{1}{\sqrt{n}}(Cnp\log{n})^{a_1/2}$ for a constant $C$. This part is the most technical part and requires the slightly stronger graph density assumption: $p \geq \Omega({\log^3{n}}/{n})$. Thus, we have $|e_u^\top R^{a_1}s\cdot s^\top R^{a_2}s\cdot...s^\top R^{a_L}s|\leq \frac{1}{\sqrt{n}}{(Cn\log{n})^{\ell/2}}$ for some constant $C$. The analysis for bounding $e_u\Tilde{A}^kg$ is similar. Finally, by combining these bounds with our assumptions that $\gamma $ is large enough, we obtain \cref{proposition:1-dim-exact}.

Now, we take a closer look at the step of bounding the asymmetric term. Recall that $R$ is a random symmetric matrix with i.i.d. zero mean entries. The term $e_u^\top R^\ell s$ can be expressed as $\sum_w R_{w(0),w(1)}R_{w(1),w(2)},...R_{w(\ell-1),w(a)}s(w(a))$ where the sum is over all walks, $w$, of length $a$ in the complete graph over $n$ vertices starting at $w(0) := u$. From a message-passing perspective, one can interpret this as bounding the deviation between the amount of signal message $u$ receives over paths of a certain length and the amount of message it \textit{expects} to receive.
%By upper bounding this term, we are upper-bounding the total amount of message $u$ can receive through ``incorrect paths'' of length $\ell$ in the graph. In the case of $\ell=1$, we can think of this as bounding the difference between the amount of message $u$ receives from its neighbors and the amount of message it \textit{expects} to receive, which will always have more from the intra-class neighbors.
To bound this term, we use a path counting argument to control its higher moments, and then apply  Markov's inequality: $\Pr{|e_u^\top R^a s| \geq \lambda } < \frac{1}{\lambda^{2t}}\E[|e_u^\top R^a s|^{2t}]$. %Our argument is inspired by the proof of the central limit theorem by counting pairing partitions (see, for example, chapter 8 of \citet{Nica-Speicher:2010}). 

\section{Multi-class Analysis on Gaussian Mixture Model}\label{section:multi-class}
In this section, we will formally state and sketch our results for the multi-class analysis. Full proofs are in \cref{section:multi-class-proofs}. For simplicity, we will only analyze convolution with the un-normalized corrected convolution matrix, $\Tilde{A}$. The reason that the corrected convolution still gives good performance is that when class sizes are balanced, the second eigenspace of the \textit{expected} adjacency matrix has multiplicity $L-1$ and exactly captures the $L$ clusters (see \cref{lemma:multi-class-characterization}). Before formally stating our result, we will introduce some useful notation:
\begin{itemize}
    \item \textbf{Graph Signal:} $\lambda := \frac{(p-q)n}{dL}$ is the strength of the signal from the graph.
    \item \textbf{Graph Noise:} $\delta := C(\frac{1}{d}(\sqrt{np(1-p)/L} +\sqrt{nq(1-q)}))$ for some constant $C$. $\delta$ is an upper bound on the graph noise.
    \item Let $U:=\E[X]$ be the matrix whose $i^{th}$ column is $\mu_i$. We also assume our features are centered on expectation so that $U^\top \mathbf{1} = 0$. This is not restrictive since it can be satisfied by applying a linear shift to the features
    \item Let $\Delta = \min_{i,j\in [n]}\norm{\mu_i-\mu_j}$ be the minimum distance between the centers
\end{itemize}
\begin{theorem}\label{theorem: multi-class-partial}
    
Given the CSBM with parameters, $p,q,L,n,m$, suppose $\min(p,q) \geq \Omega(\frac{\log^{2}n}{n})$ and $|\lambda| > 4k\delta$. Let $X^{(k)} = \frac{1}{\lambda^k}\Tilde{A}^{k}X$ be the feature matrix after $k$ rounds of convolutions with scalaing factor $1/\lambda^k$. Let $x^{(k)}_i$ be the $i^{th}$ row of the matrix $X^{(k)}$. Then with probability $1-n^{-\Omega(1)}$, at least $n - n_e$ nodes, $i$, satisfy $\norm{x^{(k)}_i - \mu_i}< \Delta/2$ where 
$$n_e =  O\Big((k \delta/|\lambda|)^2\frac{\norm{U}_F^2}{\Delta^2} + (L + n(\delta/|\lambda|)^{2k})\frac{\sigma^2m\log{n}}{\Delta^2}\Big).$$

In particular, the quadratic classifer $x\mapsto \text{softmax}(\norm{x-c_l}^2)_{l=1}^L$ will correctly classify at least $n-n_e$ points, and when $n_e = o(n)$, then we can correctly classify $1-o(1)$ fraction of points.
\end{theorem}
 Intuitively, our theorem states that each convolution will cause the points in each cluster to ``contract'' towards their means up until a certain saturation point is reached. If after this contraction, many points are closer to their own centers than to any other centers, the softmax classifier will correctly classify them. Just as in \cref{theorem:partial-recovery}, our error bound consists of one component depending on the variance, $\sigma^2$, that is large at the beginning and decreases exponentially with each convolution. The classification accuracy at the point of saturation ($k\approx \log{n}$) depends on the squared product between graph's signal-to-noise ratio, $\delta/|\lambda|$, and the ``separation ratio'' of the datasets: $\norm{U}_F/\Delta$. As in the two-class setting, our theorem captures both the homophilic and heterophilic settings. However, for large $L$, our error parameter, $\delta$, can be much larger if $q > p$. This observation of noisier graphs in the heterophilic setting, leading to less accurate performance, is consistent with observations from previous studies \citep{CCKK23}.
 %Note that $\norm{U}_F^2 = \sum_i\norm{\mu}_i^2$ is can be thought of as the ``total length'' of all the data point centers, while $\Delta$ is the minimum distance between any two centers. Unlike in \cref{theorem:partial-recovery}, this component of error slightly increases with $k$, though this may be an artifact of the analysis rather than a real phenonmenon we expect to observe.

\vspace{-2mm}
\section{Experiments}
In this section, we demonstrate our results empirically. For synthetic data, we show \cref{theorem:partial-recovery,theorem:exactly-recovery-k} for linear binary classification. For real data, we show that removing the principal component of the adjacency matrix exhibits positive effects on multi-class node classification problems as well.
\vspace{-2mm}
\subsection{Synthetic Data}
For synthetic data from the CSBM, we demonstrate the benefits of removing the principal component of the adjacency matrix before performing convolutions for both variants of convolution described in \cref{eq:graph_convolution}. We choose $n=2000$ nodes with $20$ features for each node, sampled from a Gaussian mixture. The intra-edge probability is fixed to $p=O(\log^3n/n)$. We perform linear classification to demonstrate the results in \cref{theorem:partial-recovery} and \cref{theorem:exactly-recovery-k}, training a one-layer GCN network both with and without the corrected convolutions and perform an empirical comparison.

We provide plots for two different settings: (1) Fix $\gamma={|p-q|}/({p+q})=2/3$ and vary signal-to-noise ratio of the node features, $\|\mu-\nu\|/\sigma$, for different number of convolutions. We observe in \cref{fig:synthetic-sigma} that as the number of convolutions increases, the original GCN \citep{kipf:gcn} (in blue) starts performing poorly, while the corrected versions (in orange and green) retain the accuracy for lower signal-to-noise ratio; (2) Fix ${\|\mu-\nu\|}/{\sigma}=1$ and vary the graph relative signal strength, $\gamma$, for different number of convolutions. We observe the same trends in this setting, as depicted in \cref{fig:synthetic-gamma}. The vertical lines represent the threshold for exact classification from \cref{theorem:exactly-recovery-k}.

\begin{figure}[!ht]
    \centering
    \begin{subfigure}[b]{0.32\textwidth}
        \includegraphics[width=\linewidth]{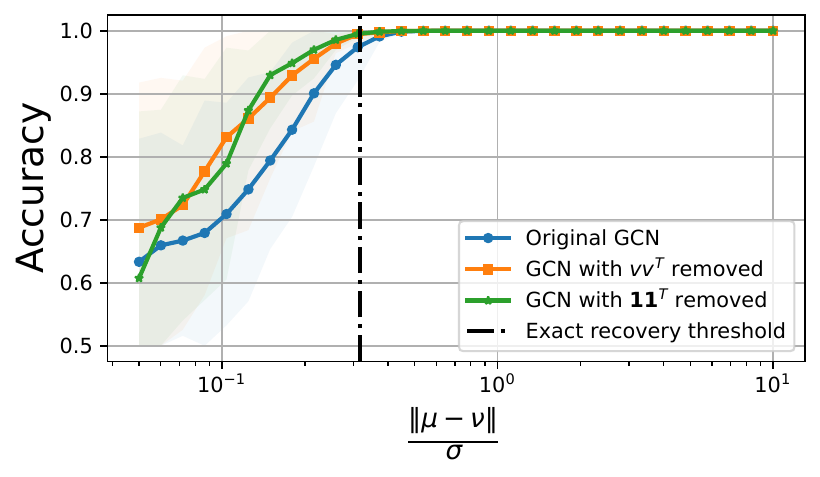}
        \caption{$1$ convolution.}
        \label{fig:synthetic-sigma-1-conv}
    \end{subfigure}
    \begin{subfigure}[b]{0.32\textwidth}
        \includegraphics[width=\linewidth]{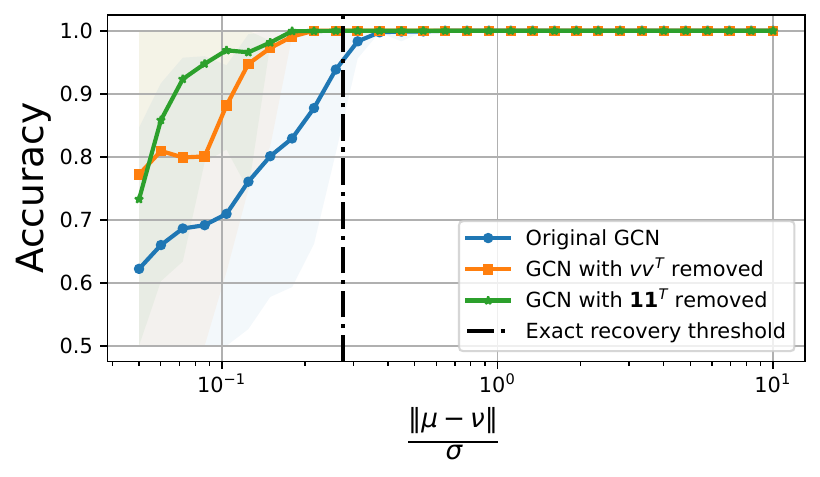}
        \caption{$2$ convolutions.}
        \label{fig:synthetic-sigma-2-conv}
    \end{subfigure}
    \begin{subfigure}[b]{0.32\textwidth}
        \includegraphics[width=\linewidth]{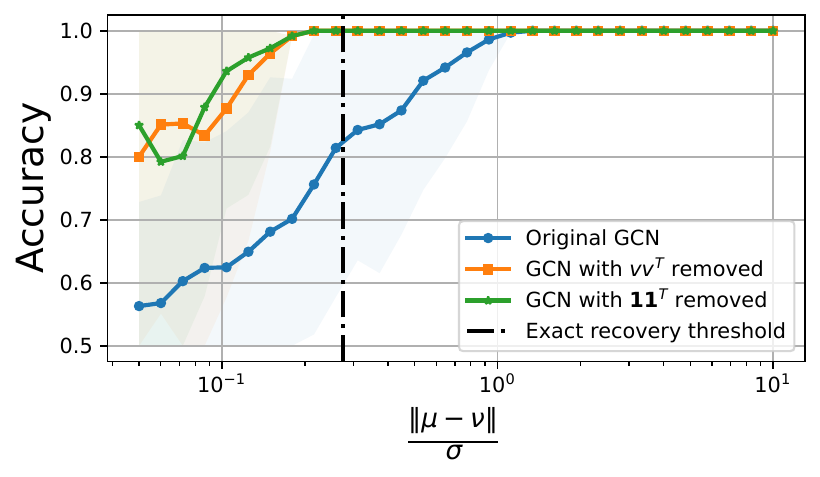}
        \caption{$4$ convolutions.}
        \label{fig:synthetic-sigma-4-conv}
    \end{subfigure}
    \begin{subfigure}[b]{0.32\textwidth}
        \includegraphics[width=\linewidth]{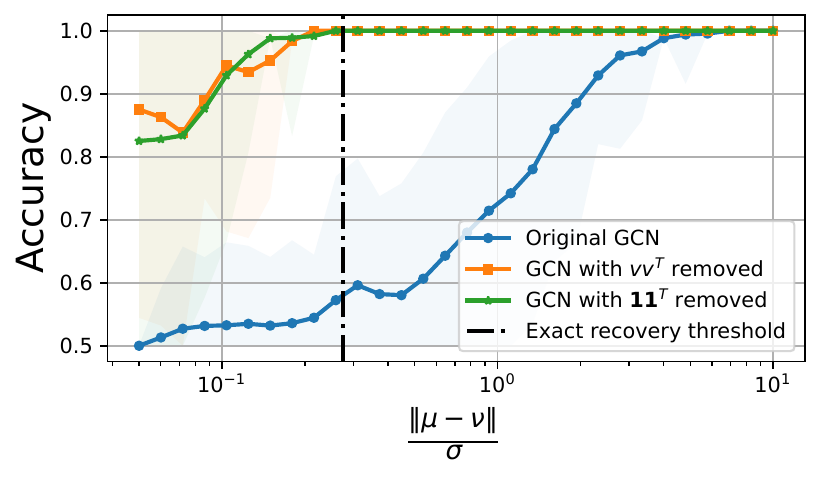}
        \caption{$8$ convolutions.}
        \label{fig:synthetic-sigma-8-conv}
    \end{subfigure}
    \begin{subfigure}[b]{0.32\textwidth}
        \includegraphics[width=\linewidth]{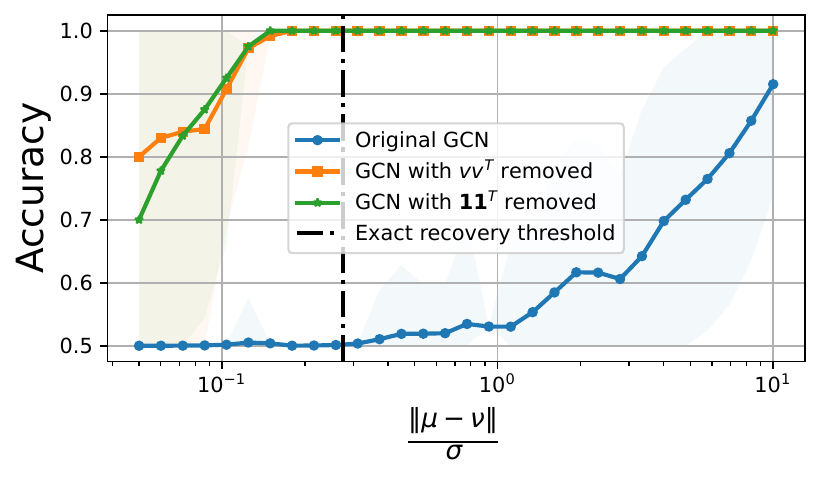}
        \caption{$12$ convolutions.}
        \label{fig:synthetic-sigma-12-conv}
    \end{subfigure}
    \begin{subfigure}[b]{0.32\textwidth}
        \includegraphics[width=\linewidth]{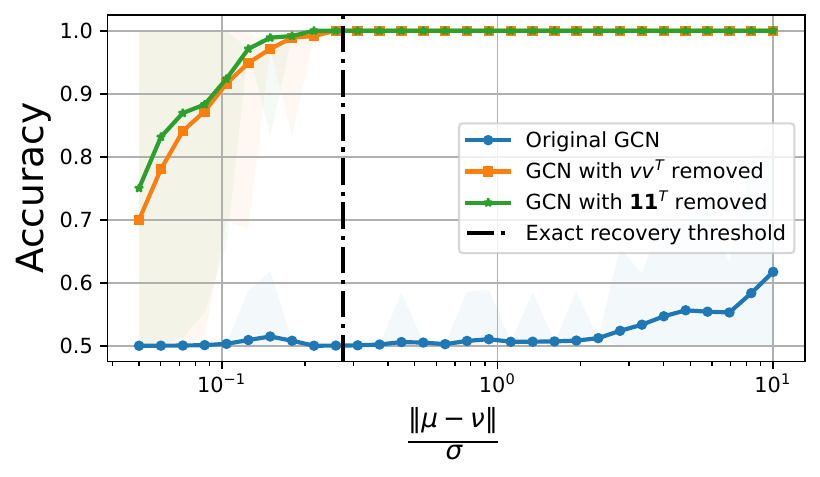}
        \caption{$16$ convolutions.}
        \label{fig:synthetic-sigma-16-conv}
    \end{subfigure}
    \caption{Accuracy plot (average over 50 trials) against the signal-to-noise ratio of the features (ratio of the distance between the means to the standard deviation) for increasing number of convolutions. Here, $v = D^{1/2}\one$ and the ``GCN with $vv^\top$ removed'' refers to convolution with the corrected, normalized adjacency matrix. ``GCN with $\one\one^\top$ removed'' is the corrected, unnormalized matrix. }
    \label{fig:synthetic-sigma}
\end{figure}

\begin{figure}[!ht]
    \centering
    \begin{subfigure}[b]{0.32\textwidth}
        \includegraphics[width=\linewidth]{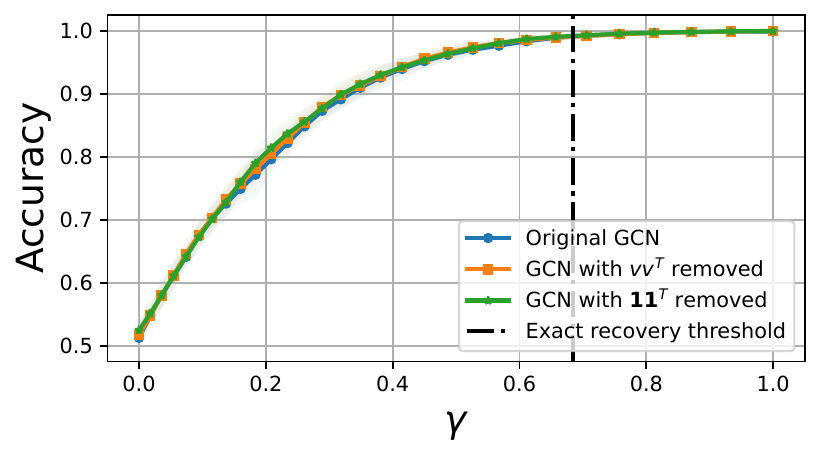}
        \caption{$1$ convolution.}
        \label{fig:synthetic-gamma-1-conv}
    \end{subfigure}
    \begin{subfigure}[b]{0.32\textwidth}
        \includegraphics[width=\linewidth]{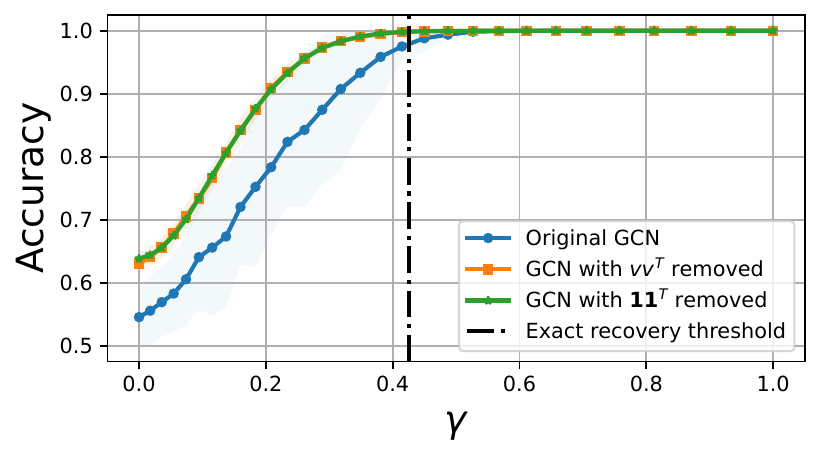}
        \caption{$2$ convolutions.}
        \label{fig:synthetic-gamma-2-conv}
    \end{subfigure}
    \begin{subfigure}[b]{0.32\textwidth}
        \includegraphics[width=\linewidth]{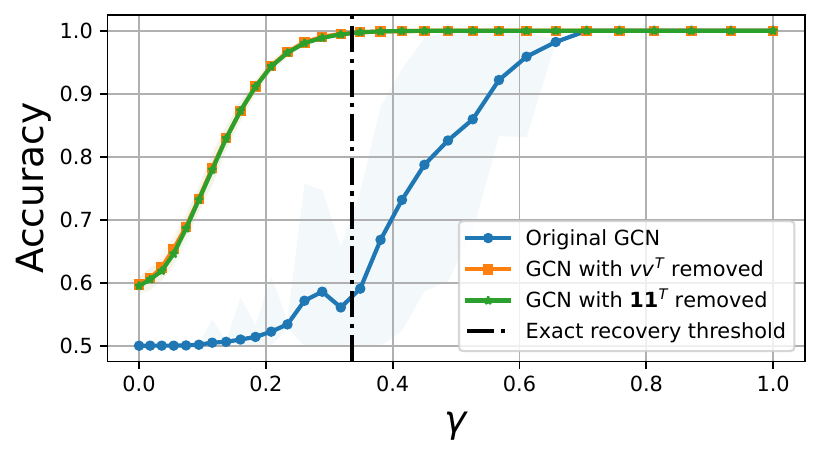}
        \caption{$4$ convolutions.}
        \label{fig:synthetic-gamma-4-conv}
    \end{subfigure}
    \begin{subfigure}[b]{0.32\textwidth}
        \includegraphics[width=\linewidth]{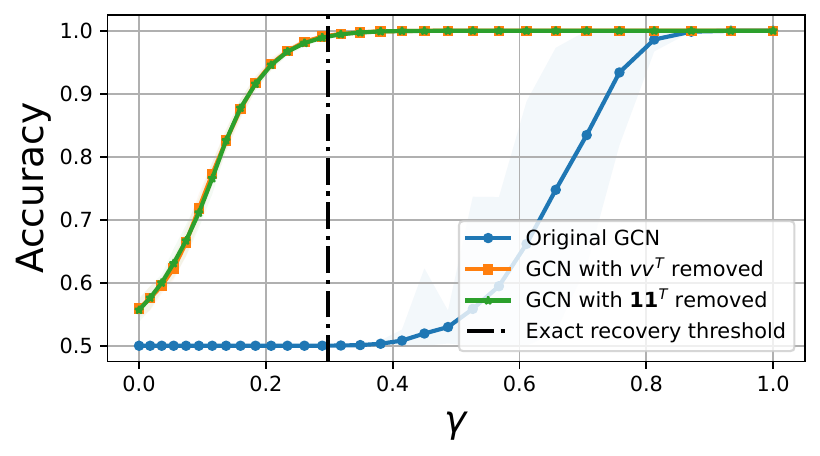}
        \caption{$8$ convolutions.}
        \label{fig:synthetic-gamma-8-conv}
    \end{subfigure}
    \begin{subfigure}[b]{0.32\textwidth}
        \includegraphics[width=\linewidth]{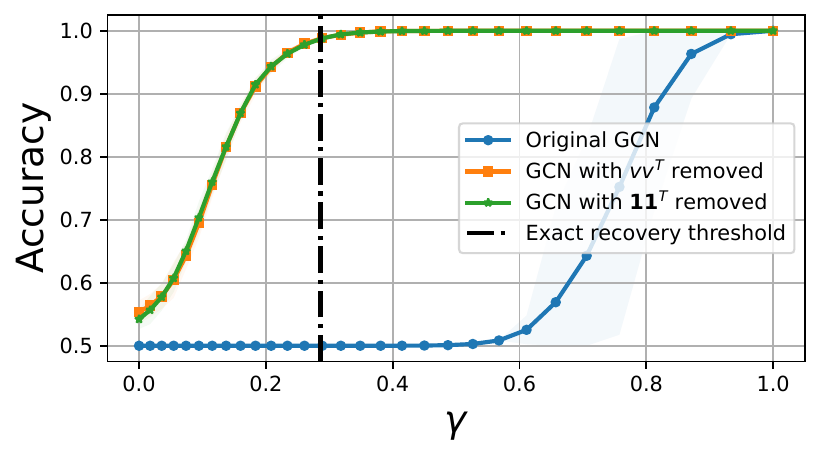}
        \caption{$12$ convolutions.}
        \label{fig:synthetic-gamma-12-conv}
    \end{subfigure}
    \begin{subfigure}[b]{0.32\textwidth}
        \includegraphics[width=\linewidth]{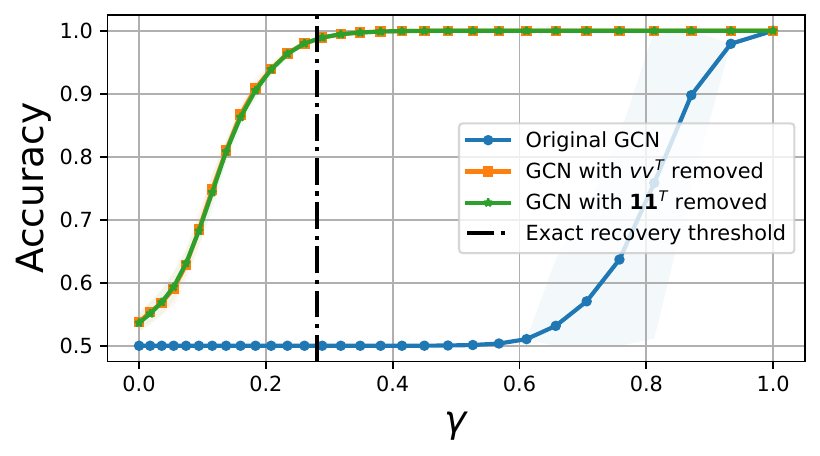}
        \caption{$16$ convolutions.}
        \label{fig:synthetic-gamma-16-conv}
    \end{subfigure}
    \caption{Accuracy plot (average over 50 trials) against graph relative signal strength ($\gamma=|p-q|/(p+q)$) for various values of the number of convolutions.}
    \label{fig:synthetic-gamma}
\end{figure}
\vspace{-2mm}
\subsection{Real Data}
Similar to synthetic data, we compare the results for corrected graph convolution to the original GCN on the following real graph benchmarks datasets: \emph{CORA}, \emph{CiteSeer}, and \emph{Pubmed} citation networks \citep{sen2008collective} in the multi-class setting. %It is observed in \cref{fig:real} that correcting the convolution as in \cref{eq:graph_convolution} by removing the principal component performs better as the number of layers increases compared to the original convolution. 
In \cref{fig:real}, we see that overall the accuracy of every learning method decreases as the number of convolutions increases but the corrected convolutions converge to an accuracy much higher than that of the uncorrected convolution. This is attributed to the fact that for multi-class classification on general graphs, the important information about class memberships is typically captured by the top $C$ eigenvectors (except the first one) where $C$ is greater than the number of classes \citep{LOT14}. In general, these eigenvectors could have different eigenvalues. Since the limiting behavior of many rounds of convolutions is akin to projecting the features onto the eigenvector(s) corresponding to the second eigenvalue, we only expect this to capture partial information about the multi-class structure. By contrast, we show, in \cref{section:supp-figs}, that for synthetic data with balanced classes, the classification accuracy only increases with more convolutions if they are corrected to remove the top eigenvector.
% However, removing only the top eigenvector in this case still exhibits a multi-class variant of oversmoothing, where repeated convolutions will converge to the second eigenvector.

\begin{figure}[!ht]
    \centering
    \begin{subfigure}[b]{0.32\textwidth}
        \includegraphics[width=\linewidth]{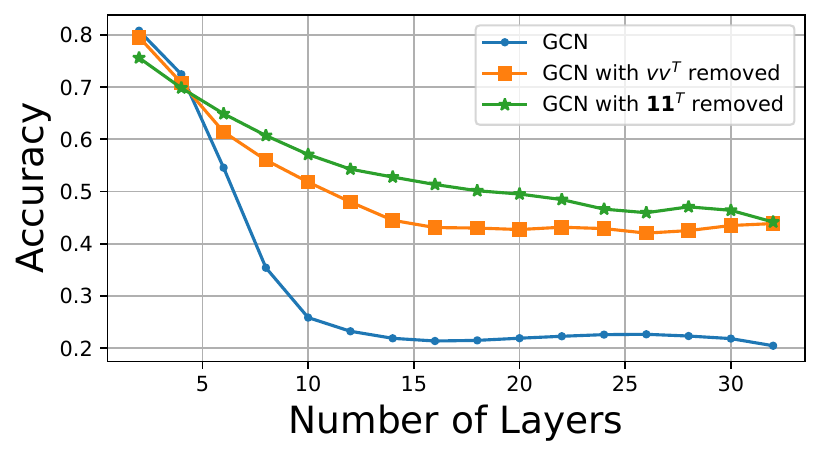}
        \caption{Cora.}
        \label{fig:cora}
    \end{subfigure}
    \begin{subfigure}[b]{0.32\textwidth}
        \includegraphics[width=\linewidth]{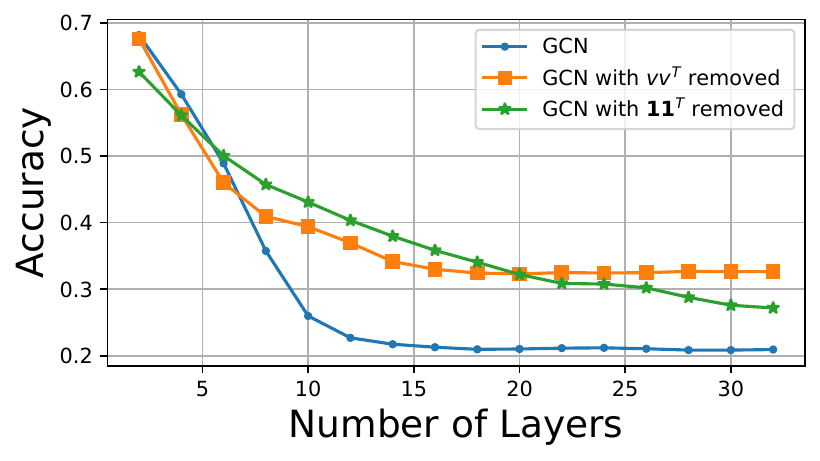}
        \caption{CiteSeer.}
        \label{fig:citeseer}
    \end{subfigure}
    \begin{subfigure}[b]{0.32\textwidth}
        \includegraphics[width=\linewidth]{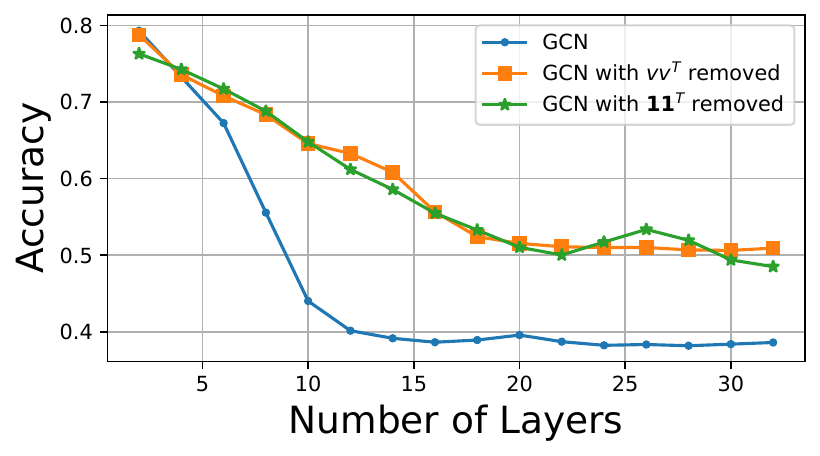}
        \caption{PubMed.}
        \label{fig:pubmed}
    \end{subfigure}
    \caption{Accuracy plots (average over 50 trials) against the number of layers for real datasets.}
    \label{fig:real}
\end{figure}
\vspace{-1mm}
\section{Conclusion and Future Work}
In this study, we utilized spectral methods to obtain partial and exact classification results for the linear classifier with corrected convolution matrices in the 2-block CSBM. Our spectral approach highlights, theoretically, how removing the top eigenvector can mitigate oversmoothing and improve classification accuracy. We prove that the removal of the top eigenvector results in reducing feature variance and correcting the asymptotic behavior of many rounds of convolution towards the second, rather than the top eigenvector of the adjacency matrix. Finally, we showed that our analysis can be generalized to the multi-class setting. We hope our analysis can lead to further developments in theoretical and practical studies of GNNs.  A natural extension of this work would be to generalize our analysis to broader classes of multi-class models. For example, if the size of classes are unbalanced, the second eigenspace may not capture all the information about the clusters. In addition, the distribution of features may not follow a standard Gaussian mixture model, but more complicated distributions, possibly with multiple centers \citep{baranwal2023effects}. Another natural setting to consider is when clusters in the feature distribution do not exactly match clusters in the graph. Extending our analysis to these settings will likely require more sophisticated network architectures and activation functions.

\section*{Acknowledgements}

K.~Fountoulakis would like to acknowledge the support of the Natural Sciences and Engineering Research Council of Canada (NSERC). Cette recherche a été financée par le Conseil de recherches en sciences naturelles et en génie du Canada (CRSNG), [RGPIN-2019-04067, DGECR-2019-00147].
%On the theoretical side, we believe our analysis can be extended to more general models. One potential avenue would be to analyze the multi-class CSBM using more sophisticated architectures, like the spectral convolutional network with low-degree matrix polynomials \citep{SCN} and non-linear activation functions. 
%On the practical side, we hope our observations about the spectral behaviors of multiple graph convolutions can lead to the design of improved architectures for deeper GNNs.

% \pagebreak
% Acknowledgements should only appear in the accepted version.

% \newpage
\bibliography{references,references_aseem}
\bibliographystyle{plainnat}

%%%%%%%%%%%%%%%%%%%%%%%%%%%%%%%%%%%%%%%%%%%%%%%%%%%%%%%%%%%%%%%%%%%%%%%%%%%%%%%
%%%%%%%%%%%%%%%%%%%%%%%%%%%%%%%%%%%%%%%%%%%%%%%%%%%%%%%%%%%%%%%%%%%%%%%%%%%%%%%
% APPENDIX
%%%%%%%%%%%%%%%%%%%%%%%%%%%%%%%%%%%%%%%%%%%%%%%%%%%%%%%%%%%%%%%%%%%%%%%%%%%%%%%
%%%%%%%%%%%%%%%%%%%%%%%%%%%%%%%%%%%%%%%%%%%%%%%%%%%%%%%%%%%%%%%%%%%%%%%%%%%%%%%
\newpage
\appendix
\onecolumn
\section{Linear Algebra and Probability Background}
Before beginning our proofs, we will establish the following theorems from the literature and basic facts that we will use throughout the proofs. 
\subsection{Linear Algebra}
For matrix inequalities, we use the following inequalities about matrix spectral norms.

\begin{theorem}(\citep{Vershynin:2018} theorem 4.5.3.)\label{theorem:matrix-perturbation}
    Let $A$ be a symmetric matrix with eigenvalues $\lambda_1\geq \lambda_2\geq ...\lambda_n$ and $B$ be a symmetric matrix $\mu_1\geq \mu_2\geq ....\mu_n$. Suppose $\norm{A-B}\leq \delta$. Then, $\max_i|\lambda_i - \mu_i|\leq \delta$ 
\end{theorem}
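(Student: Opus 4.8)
The plan is to prove this via the Courant--Fischer variational characterization of eigenvalues (this is Weyl's inequality). The starting observation is that the spectral-norm bound $\norm{A-B}\leq\delta$ translates directly into a uniform bound on the quadratic forms: for every unit vector $x$ we have $|x^\top(A-B)x|\leq\norm{A-B}\leq\delta$, so that $x^\top A x - \delta \leq x^\top B x \leq x^\top A x + \delta$. The entire proof then amounts to propagating this pointwise inequality through the min--max formula for each eigenvalue.

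First I would recall the Courant--Fischer theorem: for a symmetric matrix $A$ with eigenvalues $\lambda_1\geq\cdots\geq\lambda_n$,
\begin{align*}
    \lambda_i = \max_{\dim V = i}\;\min_{x\in V,\;\norm{x}=1} x^\top A x,
\end{align*}
and similarly $\mu_i = \max_{\dim V = i}\min_{x\in V,\norm{x}=1} x^\top B x$ for $B$. Fix an index $i$ and let $V^\ast$ be a subspace of dimension $i$ achieving the maximum for $A$, so that $\min_{x\in V^\ast,\norm{x}=1} x^\top A x = \lambda_i$. Applying the pointwise inequality above on $V^\ast$ gives $\min_{x\in V^\ast,\norm{x}=1} x^\top B x \geq \lambda_i - \delta$. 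Since $\mu_i$ is the maximum over \emph{all} $i$-dimensional subspaces, it is at least the value attained on the particular subspace $V^\ast$, yielding $\mu_i \geq \lambda_i - \delta$.

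Next I would invoke symmetry: the argument is entirely symmetric in $A$ and $B$, so repeating it with their roles swapped gives $\lambda_i \geq \mu_i - \delta$. Combining the two bounds yields $|\lambda_i - \mu_i|\leq\delta$, and taking the maximum over $i\in\{1,\dots,n\}$ completes the proof.

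There is no serious obstacle here, as this is a classical result; the only point requiring care is bookkeeping the min--max formula in the correct direction (using the $\max$-of-$\min$ form so that testing on a single subspace gives a lower bound on $\mu_i$, rather than the dual $\min$-of-$\max$ form). One could alternatively prove it from the additive form of Weyl's inequality, $\lambda_{i+j-1}(A)\leq\lambda_i(B)+\lambda_j(A-B)$, applied with the appropriate indices and with $\pm(A-B)$, but the direct min--max argument above is the cleanest route given that $\norm{A-B}\leq\delta$ bounds all eigenvalues of $A-B$ in absolute value by $\delta$.
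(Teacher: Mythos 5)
Your proof is correct and complete. Note, however, that the paper does not prove this statement at all: it is quoted verbatim as background from \citet{Vershynin:2018} (Theorem 4.5.3, Weyl's inequality), so there is no internal proof to compare against. Your Courant--Fischer argument is the standard derivation of exactly this result, and it is carried out correctly: the bound $|x^\top(A-B)x|\leq\delta$ for unit vectors $x$, propagated through the $\max$-of-$\min$ form of the variational characterization on an optimal subspace $V^\ast$ for $A$, gives $\mu_i\geq\lambda_i-\delta$, and swapping the roles of $A$ and $B$ gives the reverse bound. You are also right that the $\max$-of-$\min$ direction is the one that works here (testing a single subspace lower-bounds $\mu_i$), and your remark that the result follows alternatively from the additive Weyl inequality $\lambda_{i+j-1}(A)\leq\lambda_i(B)+\lambda_j(A-B)$ with $j=1$ and $\pm(A-B)$ is also valid, since $\norm{A-B}\leq\delta$ bounds every eigenvalue of $A-B$ in modulus by $\delta$. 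In effect, your proposal supplies the proof that the paper outsources to the literature.
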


We also use the following basic result to relate the matrix spectral norm to its maximum entry
\begin{lemma}\label{lemma:infinity-norm-spectral-norm}
    Let $M$ be an $n\times n$ symmetric matrix such that each row of $M$ has at most $m$ non-zero entries and each entry of $M$ has an absolute value at most $\eps$. Then $\norm{M}\leq \eps m$
\end{lemma}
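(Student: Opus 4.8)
The plan is to use the variational characterization of the spectral norm for symmetric matrices, $\norm{M} = \max_{x:\norm{x}=1}|x^\top M x|$, which is recorded in the preliminaries, and then control the quadratic form $x^\top M x$ entrywise using the two hypotheses directly: the entrywise bound $|M_{ij}|\le\eps$ and the row-sparsity bound of at most $m$ nonzero entries per row. An equivalent and slightly slicker route would be to invoke the standard interpolation inequality $\norm{M} \le \sqrt{\norm{M}_1\,\norm{M}_\infty}$ between induced norms together with $\norm{M}_1=\norm{M}_\infty$ for symmetric $M$, but I would prefer the self-contained Rayleigh-quotient argument below since it relies only on facts already stated in the paper.

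First I would fix a unit vector $x$ and expand $x^\top M x = \sum_{i,j} M_{ij}x_i x_j$. Applying the triangle inequality together with $|M_{ij}|\le\eps$ gives $|x^\top M x| \le \eps \sum_{(i,j):\,M_{ij}\ne 0}|x_i||x_j|$, where the sum ranges only over the index pairs corresponding to nonzero entries. The next step is to linearize the product $|x_i||x_j|$ by the AM-GM inequality $|x_i||x_j|\le \tfrac12(x_i^2+x_j^2)$, turning the bound into $\tfrac{\eps}{2}\sum_{(i,j):\,M_{ij}\ne 0}(x_i^2+x_j^2)$.

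The heart of the argument is a counting step. Splitting the sum into its $x_i^2$ and $x_j^2$ parts, I would observe that in $\sum_{(i,j):\,M_{ij}\ne 0}x_i^2$ each coordinate $x_i^2$ is counted once for every nonzero entry in row $i$, hence at most $m$ times by hypothesis; similarly, using that $M$ is symmetric so the nonzero pattern is symmetric and every column also has at most $m$ nonzero entries, each $x_j^2$ in $\sum_{(i,j):\,M_{ij}\ne 0}x_j^2$ is counted at most $m$ times. Therefore both sums are at most $m\norm{x}^2 = m$, and the whole expression is bounded by $\tfrac{\eps}{2}(m + m) = \eps m$. Since $x$ was an arbitrary unit vector, taking the maximum yields $\norm{M}\le \eps m$.

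I do not anticipate a genuine obstacle here; the only point requiring care is the bookkeeping in the double sum, specifically justifying that each squared coordinate appears with multiplicity at most $m$. This is exactly where symmetry is used, since the row-sparsity hypothesis alone controls the $x_i^2$ terms but we need the matching column-sparsity, which follows from $M=M^\top$, to control the $x_j^2$ terms.
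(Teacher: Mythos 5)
Your proof is correct and follows essentially the same route as the paper's: both arguments bound the Rayleigh quotient $|x^\top M x|$ for a unit vector $x$ using the entrywise bound $|M_{ij}|\le \eps$, linearize via $|x_i||x_j| \le \tfrac12(x_i^2+x_j^2)$, and then count each squared coordinate at most $m$ times using the sparsity hypothesis. Your write-up is in fact slightly more careful than the paper's, since you explicitly note that symmetry of $M$ is what transfers the row-sparsity bound to the column terms $x_j^2$, a point the paper leaves implicit.
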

\begin{proof}
    We will use the fact that for any scalars $a,b$, we have $2ab\leq a^2 + b^2$ since $a^2+b^2-2ab = (a-b)^2 \geq 0$. We will also use $\text{supp}(M)$ to denote the set of non-zero entries in $M$. Let $x$ be a unit vector. Then we have 
    \begin{align*}
        |x^\top Mx| &= |\sum_{i}\sum_{j=1}^nx(i)x(j)M_{i,j}|\\
        &\leq \eps \sum_{i,j\in \text{supp}(M)} x(i)x(j)\\
        &\leq \eps \sum_{i,j\in \text{supp}(M)} \frac{1}{2}x(i)^2 + \frac{1}{2}x(j)^2\\
        &\leq \eps m\sum_ix(i)^2\\
        &=\eps m
    \end{align*}
    Where the last inequality follows from the fact for every $i$, there are at most $m$ entries $j$ such that $M_{i,j} \neq 0$.
\end{proof}

\subsection{Concentration Inequalities}
Throughout our analysis, we will mainly use the following two concentration inequalities. The first is a bound on the the expected deviation of the sum of Bernoulli random variables
\begin{theorem}\label{theorem:Bernouli-Chernoff}
    Let $X_1,...X_n$ be Bernoulli random variables with mean at most $p$ and $S_n = \sum_{i=1}^nX_i$. Then 
    \begin{align*}
        \Pr{|S_n - \E[S_n]| > t} < \exp(-\Omega(\frac{t^2}{np}))
    \end{align*}
\end{theorem}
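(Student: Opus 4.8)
The plan is to use the standard exponential-moment (Chernoff) method. Write $p_i = \E[X_i] \le p$ and $\mu = \E[S_n] = \sum_{i=1}^n p_i \le np$. The starting point is the per-variable moment generating function bound: for any $\lambda \in \R$ and a Bernoulli variable $X_i$, we have $\E[e^{\lambda X_i}] = 1 + p_i(e^\lambda - 1) \le \exp(p_i(e^\lambda-1))$ by the elementary inequality $1+x \le e^x$. Since the $X_i$ are independent, multiplying these factorwise gives $\E[e^{\lambda S_n}] \le \exp(\mu(e^\lambda - 1))$, reducing everything to a one-parameter optimization over $\lambda$.

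For the upper tail I would apply Markov's inequality to $e^{\lambda S_n}$ with $\lambda > 0$ to obtain $\Pr{S_n \ge \mu + t} \le \exp(-\lambda(\mu+t) + \mu(e^\lambda-1))$, and then optimize by choosing $\lambda = \log(1 + t/\mu)$. With this choice $\mu(e^\lambda-1) = t$ and the exponent becomes $\mu\,[(1+u)\log(1+u) - u]$ where $u = t/\mu$. Invoking the standard convexity estimate $(1+u)\log(1+u) - u \ge \frac{u^2}{2+u}$ then collapses the multiplicative Chernoff bound to $\Pr{S_n \ge \mu + t} \le \exp(-\frac{t^2}{2\mu + t})$, and enlarging the denominator via $\mu \le np$ gives $\Pr{S_n \ge \mu + t} \le \exp(-\frac{t^2}{2np + t})$. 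The lower tail is handled by the symmetric choice $\lambda < 0$ (equivalently, running the same argument on $-X_i$), which yields the cleaner estimate $\Pr{S_n \le \mu - t} \le \exp(-\frac{t^2}{2\mu}) \le \exp(-\frac{t^2}{2np})$; here no regime restriction is needed since $S_n \ge 0$ forces $t \le \mu \le np$ for any nonvacuous lower deviation. A union bound over the two tails, with the factor of $2$ absorbed into the $\Omega(\cdot)$, then produces the claim.

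The only point that requires genuine care — and the main obstacle — is reconciling the multiplicative bound with the purely additive form $\exp(-\Omega(t^2/(np)))$ asserted in the statement. The denominator $2np + t$ is $\Theta(np)$ exactly when $t = O(np)$, which is precisely the regime in which every application of this inequality occurs (the degree and edge-count deviations controlled in \cref{proposition:Matrix-concentration} are always within a constant factor of the expected degree). In that range $\frac{t^2}{2np+t} = \Omega(t^2/(np))$ and the stated bound follows immediately; for $t \gg np$ the true tail obeys the heavier Poisson-type decay $\exp(-\Theta(t\log(t/np)))$, which is \emph{larger} than $\exp(-t^2/(np))$, so the additive form literally fails there. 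I would therefore either restrict the statement to $t = O(np)$ or simply remark that the applications never leave that range. Everything else reduces to routine convexity and the single-variable optimization, so no heavier machinery (e.g.\ Bernstein's inequality, which would give an equivalent result with an explicit $Mt/3$ term) is needed.
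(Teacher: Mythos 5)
The paper never actually proves this theorem---it is stated in the appendix as a standard concentration fact with no argument attached---so your Chernoff derivation is precisely what it implicitly relies on, and it is correct: the per-variable MGF bound, the choice $\lambda = \log(1+t/\mu)$, the convexity estimate $(1+u)\log(1+u)-u \ge u^2/(2+u)$, and the symmetric lower-tail bound are all standard and executed properly (with independence of the $X_i$ assumed, which the statement omits but clearly intends, since the claim is false without it). Your caveat is also genuinely right and worth stating: as written the bound fails for $t \gg np$, where the upper tail decays only like $\exp(-\Theta(t\log(t/(np))))$, so the theorem must be read with the restriction $t = O(np)$; this is harmless for the paper, since every invocation (the edge-count and degree deviations in \cref{proposition:Matrix-concentration} and \cref{proposition:degree-matrix-concentration}) takes $t$ far below the mean of the relevant sum.
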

The second concentration inequality we will upper-bounds the spectral norm of random matrices whose entries have zero mean and bounded variance.

\begin{theorem}(\citep{BvH15} Remark 3.13)\label{theorem:matrix-concentration}
    Let $R$ be a random matrix whose entries $R_{i,j}$ are independent, zero mean, with variance $\sigma^2$. Then with probability $1-n^{-\Omega(1)}$, we have
    \begin{align*}
        \norm{R}\leq O(\sigma\sqrt{n} + \log{n})
    \end{align*}
    In particular, if $\sigma^2 = \Omega(\frac{\log^2{n}}{n})$ then $\norm{R} \leq O(\sigma\sqrt{n})$
\end{theorem}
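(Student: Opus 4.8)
The plan is to prove this via the moment (trace) method, which is the standard route to a spectral-norm bound that stays sharp in the sparse regime $\sigma^2 \to 0$ that we actually need here (recall that in the application $\sigma^2 \le p$ can be as small as $\log^2 n / n$). Two preliminary remarks are in order. First, the stated conclusion genuinely requires the entries to be bounded and not merely of variance $\sigma^2$; in the application $R_{ij} = A_{ij} - \E[A_{ij}]$ satisfies $\norm{R_{ij}}_\infty \le 1$, so I would assume $\max_{ij}\norm{R_{ij}}_\infty \le 1$ throughout. Second, I would reduce to the symmetric case at the outset: if $R$ is not symmetric, pass to the Hermitian dilation $\tilde R = \bigl(\begin{smallmatrix} 0 & R \\ R^\top & 0\end{smallmatrix}\bigr)$, which is symmetric of size $2n$, satisfies $\norm{\tilde R} = \norm{R}$, and whose above-diagonal entries are still independent, zero-mean, and bounded by $1$ with variance at most $\sigma^2$. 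Hence it suffices to bound $\norm{R}$ for symmetric $R$.

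For symmetric $R$ and an even power $2t$, bound the spectral norm by the trace: $\norm{R}^{2t} \le \tr(R^{2t}) = \sum R_{i_0 i_1} R_{i_1 i_2}\cdots R_{i_{2t-1} i_0}$, the sum ranging over all closed walks of length $2t$ on $[n]$. Taking expectations, independence together with $\E[R_{ij}] = 0$ kills every walk that traverses some edge exactly once, so only walks that traverse each distinct edge at least twice survive. For such a walk with $r$ distinct edges, connectedness forces at most $r+1 \le t+1$ distinct vertices, and since $\E[\abs{R_{ij}}^m] \le \E[R_{ij}^2] = \sigma^2$ for every $m \ge 2$ (this is where boundedness enters), each distinct edge contributes a factor of at most $\sigma^2$. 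The leading contribution is from the ``double-tree'' walks ($r = t$, spanning $t+1$ vertices, each edge used exactly twice), of which there are at most $n^{t+1}$ times a Catalan factor $\le 4^t$, giving $\E[\tr(R^{2t})] \lesssim n\,(4 n \sigma^2)^t + (\text{defect terms})$.

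I would then apply Markov's inequality in the form $\Pr{\norm{R} \ge \lambda} \le \lambda^{-2t}\,\E[\tr(R^{2t})]$ and choose $2t \asymp \log n$, so that the prefactor $n^{1/(2t)} = O(1)$. Taking $\lambda$ equal to a large constant times $\sigma\sqrt n$ plus the additive correction coming from the higher-multiplicity walks makes $\lambda^{-2t}\, n\,(4n\sigma^2)^t \le n^{-\Omega(1)}$, which yields $\norm{R} \le O(\sigma\sqrt n + \log n)$ with probability $1 - n^{-\Omega(1)}$. The ``in particular'' clause is then immediate: when $\sigma^2 = \Omega(\log^2 n / n)$ we have $\sigma\sqrt n = \Omega(\log n)$, so the additive term is absorbed and $\norm{R} \le O(\sigma\sqrt n)$.

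The main obstacle is the combinatorial walk-counting, specifically the accounting of the ``defect'' walks that reuse an edge more than twice or revisit vertices: these admit fewer than $n^{t+1}$ vertex labelings but come with shape counts that grow polynomially in $t$, and it is exactly their bookkeeping — balanced against the per-edge bound $\sigma^2$ and the entrywise bound $1$ — that produces the sharp additive $\log n$ term rather than a spurious factor of $\sqrt n$. Since this careful counting is precisely the content of the Bandeira--van Handel analysis, in the paper I would invoke their Remark 3.13 directly rather than reproduce the full argument.
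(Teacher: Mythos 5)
The paper gives no proof of this theorem at all: it is imported directly from \citet{BvH15} (Remark 3.13), and since your proposal—after a sound sketch of the trace-method skeleton (Hermitian dilation, vanishing of single-edge walks, the per-edge bound $\E[|R_{ij}|^m]\le\sigma^2$ for $m\ge 2$, Markov with $2t\asymp\log n$)—ultimately defers the decisive combinatorial bookkeeping to that same reference, the two approaches coincide where it matters. Your observation that the statement as written is false without an entrywise boundedness hypothesis (satisfied in the application, where $|R_{ij}|\le 1$ since $R=A-\E[A]$ has centered Bernoulli entries) is correct and is a genuine clarification of a hypothesis the paper leaves implicit.
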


We will also use the standard tail bound on the norm of a Gaussian vector
\begin{lemma}\label{lemma:Gaussian-norm-bound}
    Let $g \sim \mathcal{N}(0, \sigma^2 I_n)$ be a random Gaussian vector. Then $\Pr{\norm{g} > t}\leq 2\exp(-\frac{t^2}{2n\sigma^2})$    
\end{lemma}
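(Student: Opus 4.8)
The plan is to reduce the statement to an exponential (Chernoff) bound on a sum of independent squared Gaussians. Writing $g=(g_1,\dots,g_n)$ with the $g_i$ i.i.d.\ $\mathcal{N}(0,\sigma^2)$, I would first observe that $\{\norm{g}>t\}=\{\norm{g}^2>t^2\}$ and $\norm{g}^2=\sum_{i=1}^n g_i^2$. Applying Markov's inequality to $e^{\lambda\norm{g}^2}$ for a free parameter $\lambda\in(0,1/(2\sigma^2))$ and using independence gives
\begin{align*}
\Pr{\norm{g}>t}\;\le\; e^{-\lambda t^2}\,\E\!\left[e^{\lambda\norm{g}^2}\right]\;=\;e^{-\lambda t^2}\prod_{i=1}^n \E\!\left[e^{\lambda g_i^2}\right].
\end{align*}
A one-line Gaussian integral yields the moment $\E[e^{\lambda g_i^2}]=(1-2\sigma^2\lambda)^{-1/2}$, valid exactly on the range $\lambda<1/(2\sigma^2)$, so the right-hand side becomes $e^{-\lambda t^2}(1-2\sigma^2\lambda)^{-n/2}$.

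The key idea is then \emph{not} to optimize $\lambda$ (that produces the sharper but unwieldy $\chi^2$ tail), but to pick $\lambda=1/(2n\sigma^2)$. This choice is engineered so that the exponential factor is exactly the target, $e^{-\lambda t^2}=e^{-t^2/(2n\sigma^2)}$, and it reduces the whole problem to controlling the leftover constant $(1-1/n)^{-n/2}$. The remaining step is the elementary inequality $(1-1/n)^{-n/2}\le 2$ for $n\ge 2$, equivalently $(1-1/n)^n\ge 1/4$; this holds because $(1-1/n)^n$ is increasing in $n$ and equals $1/4$ at $n=2$, so it is at least $1/4$ for all $n\ge 2$. Combining these pieces gives precisely $\Pr{\norm{g}>t}\le 2e^{-t^2/(2n\sigma^2)}$.

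The main obstacle is a boundary case rather than a hard estimate: the chosen $\lambda=1/(2n\sigma^2)$ lies strictly inside the admissible interval $(0,1/(2\sigma^2))$ only when $n\ge 2$, while for $n=1$ it falls exactly on the boundary where $\E[e^{\lambda g_i^2}]$ diverges and the Chernoff step degenerates. I would dispatch $n=1$ separately, where the claim is just the one-dimensional Gaussian tail bound recorded in the preliminaries, $\Pr{|g_1|>t}\le 2e^{-t^2/(2\sigma^2)}$. It is worth noting that a Gaussian Lipschitz-concentration argument (using that $g\mapsto\norm{g}$ is $1$-Lipschitz and $\E\norm{g}\le\sqrt{n}\,\sigma$) would give a strictly sharper tail, concentrated at scale $\sigma$ around $\sqrt{n}\,\sigma$; but it does not match the clean closed form in the statement, so the squared-norm Chernoff route is the natural one for reproducing the stated expression and constant exactly.
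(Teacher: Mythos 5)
Your proof is correct. The paper states this lemma as a standard tail bound and provides no proof of its own, so there is no argument to compare against; your squared-norm Chernoff derivation — Markov applied to $e^{\lambda\norm{g}^2}$, the deliberately non-optimized choice $\lambda = 1/(2n\sigma^2)$ so that the exponential factor matches the target exactly, the elementary bound $(1-1/n)^{-n/2}\le 2$ for $n\ge 2$, and the separate dispatch of the $n=1$ boundary case via the one-dimensional Gaussian tail bound — is a complete and valid proof of precisely the stated inequality, including its constant.
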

\subsection{Other Facts}

We will also be using the following basic fact about approximating the exponential function
\begin{lemma}\label{lemma:exp-fact}
    For all $x$, $1+x\leq e^x$. If $x\leq 1$, then $e^{x} \leq 1+2x$.
\end{lemma}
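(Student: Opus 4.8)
The plan is to treat the two inequalities separately, in both cases exploiting convexity of the exponential function.

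For the first inequality, $1 + x \le e^x$ for all real $x$, I would use the fact that $e^x$ is convex and that $1+x$ is precisely its tangent line at the origin: since $\exp(0) = 1$ and $\exp'(0) = 1$, the line $y = 1 + x$ meets the graph of $e^x$ at $x = 0$, and a convex differentiable function lies weakly above each of its tangent lines. Equivalently, and perhaps cleaner to write up, I would set $f(x) = e^x - 1 - x$, note $f(0) = 0$, compute $f'(x) = e^x - 1$, and observe that $f'$ is negative on $(-\infty, 0)$ and positive on $(0, \infty)$, so $f$ attains its global minimum value $0$ at $x = 0$; hence $f \ge 0$ everywhere.

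For the second inequality I would restrict attention to $0 \le x \le 1$, which is evidently the intended range (the claim is false for sufficiently negative $x$: e.g.\ $x = -10$ gives $e^{-10} > 0 > 1 + 2x = -19$). On this interval I would again invoke convexity, but now via the \emph{chord} rather than the tangent: on $[0,1]$ a convex function lies weakly below the secant through its endpoints $(0, e^0) = (0,1)$ and $(1, e^1) = (1, e)$. That secant is $g(x) = 1 + (e-1)x$, so $e^x \le 1 + (e-1)x$ for $x \in [0,1]$. Finally, since $e - 1 < 2$ and $x \ge 0$, we have $(e-1)x \le 2x$, yielding $e^x \le 1 + 2x$, as desired.

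The argument is entirely routine and I anticipate no genuine obstacle; the only point requiring care is the implicit domain restriction in the second part. If one prefers to avoid quoting the convexity–chord principle, an equivalent route is to define $h(x) = 1 + 2x - e^x$ and verify $h \ge 0$ on $[0,1]$ directly: here $h(0) = 0$, $h(1) = 3 - e > 0$, and $h'(x) = 2 - e^x$ is positive for $x < \ln 2$ and negative for $x > \ln 2$, so $h$ rises from $0$ and then descends to the positive value $3 - e$, never dipping below zero on the interval.
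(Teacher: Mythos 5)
Your proof is correct. The paper itself offers no proof of this lemma --- it is stated as a basic fact in the appendix's ``Other Facts'' subsection --- so there is no argument of the authors' to compare against; both of your routes (tangent/monotonicity for the first inequality, chord/direct derivative analysis for the second) are standard and perfectly adequate. The one substantive point in your write-up is the domain restriction, and you are right about it: as literally stated, ``if $x\leq 1$ then $e^x\leq 1+2x$'' is false for every $x<0$, since there $e^x\geq 1+x>1+2x$. The statement should read $0\leq x\leq 1$. This is a genuine (if minor) imprecision in the paper, though it is harmless downstream: the lemma is invoked only with the nonnegative arguments $x=k\norm{M_1}$ in \cref{proposition:degree-matrix-concentration} and $x=\rho k$ in the proof of \cref{proposition:1-dim-exact}, both of which are also at most $1$ under the stated assumptions, so every application falls inside the corrected hypothesis.
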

\section{Proofs in \texorpdfstring{\cref{section:csbm}}{Section \ref{section:csbm}}}
\subsection{Proof of \texorpdfstring{\cref{lemma:csbm-reduction}}{Lemma \ref{lemma:csbm-reduction}}}\label{section: csbm-reduction-proof}
In the following section, we give the missing proofs of the basic properties of the CSBM. The first is the reduction from the $m-$dimensional model to the centered $1-$dimensional model, which we will restate here:
\begin{lemma}
    Given an $m$-dimensional 2-block $CSBM 
    $, there exist $w\in \mathbb{R}^m$ and $b\in \mathbb{R}^n$ such that $Xw + b = s + g$ where for each vertex $i$, $g_i $ is i.i.d. $\mathcal{N}(0, \sigma'^2)$ with $\sigma' = \frac{4\sigma}{\sqrt{n}\norm{\mu-\nu}}$.
\end{lemma}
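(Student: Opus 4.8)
The plan is to construct $w$ and $b$ explicitly by projecting the features onto the direction that separates the two class means. Since the only difference between the two Gaussian components is their centers $\mu$ and $\nu$, the informative direction is $\mu-\nu$, so I would set $w = c\,(\mu-\nu)$ for a scalar $c$ to be fixed below, and take the bias to be a constant vector $b = b_0\one$ (which is the natural classifier bias and certainly lies in $\mathbb{R}^n$).

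First I would match the signal part. For $i\in S$ the $i$-th entry of $Xw+b$ has mean $\mu^\top w + b_0 = c\,\mu^\top(\mu-\nu)+b_0$, and for $j\in T$ it has mean $\nu^\top w + b_0 = c\,\nu^\top(\mu-\nu)+b_0$. Subtracting, the gap between the two class means equals $c\,\norm{\mu-\nu}^2$. To match $s$, whose entries are $\pm 1/\sqrt n$ and hence differ by $2/\sqrt n$ across classes, I would choose $c = \tfrac{2}{\sqrt n\,\norm{\mu-\nu}^2}$, and then pick $b_0 = \tfrac{1}{\sqrt n} - c\,\mu^\top(\mu-\nu)$ so that the $S$-means are exactly $1/\sqrt n$; a one-line substitution then shows the $T$-means equal exactly $-1/\sqrt n$. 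This makes the mean of $Xw+b$ equal $s$ entrywise.

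Next I would handle the noise. Writing the $i$-th row of $X$ as its center plus $g_i$ with $g_i\sim\mathcal{N}(0,\sigma^2 I_m)$ independent across $i$, the residual of $(Xw+b)_i$ about its mean is exactly $g_i^\top w$, a linear functional of a Gaussian and hence distributed as $\mathcal{N}(0,\sigma^2\norm{w}^2)$. Because the $g_i$ are mutually independent, these scalar residuals are i.i.d.\ across vertices, so $Xw+b = s+g'$ with $g'_i \sim \mathcal{N}(0,\sigma'^2)$ i.i.d.\ and $\sigma' = \sigma\norm{w}$. Finally $\norm{w} = c\,\norm{\mu-\nu} = \tfrac{2}{\sqrt n\,\norm{\mu-\nu}}$, giving $\sigma' = \tfrac{2\sigma}{\sqrt n\,\norm{\mu-\nu}} \le \tfrac{4\sigma}{\sqrt n\,\norm{\mu-\nu}}$ as claimed.

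There is no substantive obstacle: the statement is an exact change of variables, and the only points needing care are fixing $c$ and $b_0$ so the projected class means land precisely at $\pm 1/\sqrt n$, and observing that it is the mutual \emph{independence} of the per-vertex noise vectors $g_i$ (not merely their lack of correlation) that makes the projected noise genuinely i.i.d. The factor $4$ in the stated $\sigma'$ is simply a convenient upper bound on the exact value $2\sigma/(\sqrt n\,\norm{\mu-\nu})$.
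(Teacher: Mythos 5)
Your construction is correct and is essentially the paper's own proof: the paper takes $w = \mu-\nu$ and $b = -\frac12\inner{\mu+\nu}{\mu-\nu}\one$ and then rescales the resulting vector, which is exactly your $w = c(\mu-\nu)$, $b = b_0\one$ with the scaling folded into $c$ (indeed your $b_0 = \frac{1}{\sqrt n} - c\,\mu^\top(\mu-\nu)$ simplifies to $-\frac{c}{2}\inner{\mu+\nu}{\mu-\nu}$, the same bias). Your factor-of-2 remark is also apt --- the paper rescales by $4/(\sqrt n\norm{\mu-\nu}^2)$, which actually places the class means at $\pm 2/\sqrt n$ rather than $\pm 1/\sqrt n$, so your choice $c = 2/(\sqrt n\norm{\mu-\nu}^2)$ with resulting $\sigma' = 2\sigma/(\sqrt n\norm{\mu-\nu})$ is the consistent bookkeeping, and the smaller noise level only strengthens every downstream use of the lemma.
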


\begin{proof}
    
Let $X_{i,:}$ be the $i^{th}$ row of $X$. For each $i\in S$, we have 
$X_{i,:} = \mu + g_i$ and for each $j\in T$ we have $X_{j,:} = \nu + g_j$, where $g_i\sim \mathcal{N}(0, \sigma^2 I_m)$ are i.i.d. random Gaussian noise vectors. Now we pick $w =(\mu - \nu)$, $b = -\frac{1}{2}\inner{\mu+\nu}{\mu-\nu}\vec{\one}$. Now $x' = Xw + b$. For $i\in S$, we let  
\begin{align*}
     s'(i) := \inner{\mu - \frac{1}{2}(\mu + \nu)}{\mu - \nu} = \frac{1}{2}\norm{\mu-\nu}^2
\end{align*}
and for $i\in T$, we let
\begin{align*}
    s'(i) :=\inner{\nu - \frac{1}{2}(\mu + \nu)}{\mu - \nu} = -\frac{1}{2}\norm{\mu-\nu}^2
\end{align*}
The $s'$ part is our signal. On the other hand, the noise at each entry is given by $ x'(i)-s'(i) =  \inner{g_i}{\mu - \nu}$ which is Gaussian with standard deviation $\sigma\norm{\mu-\nu}$. Now, if we let $x = \frac{4}{\sqrt{n}\norm{\mu-\nu}^2}x'$, then we have $x = s+g$, where $s(i) \in \pm\frac{1}{\sqrt{n}}$ and each entry of $g$ is i.i.d. Gaussian with standard deviation $\sigma' := \frac{4\sigma}{\sqrt{n}\norm{\nu-\mu}}$. 
\end{proof}

\subsection*{Proof of \texorpdfstring{\cref{proposition:Matrix-concentration}}{Proposition \ref{proposition:Matrix-concentration}} and Characterizing the Convolution Matrix} \label{section: csbm-porperties-appendix}
In this section, we will provide some crucial properties of the graph convolution matrices in \eqref{eq:graph_convolution} and then use them to prove \cref{proposition:Matrix-concentration}. The adjacency matrix $A$ is a random matrix with expectation
\begin{align*}
    \E[A] = \begin{bmatrix}
        pJ_{n/2}&qJ_{n/2}\\
        qJ_{n/2}&pJ_{n/2}
    \end{bmatrix}
\end{align*}
Where $J_{m}$ denotes the $m\times m$ all-ones matrix. It can be seen that $\E[A]$ is a rank-2 matrix, with eigenvectors $\frac{1}{\sqrt{n}}\one$ and $s$ because $\E[A]\one = \frac12(p+q)n\one$ and $\E[A]s = \frac12(p-q)ns$, which means

$$\E[A] = \frac12(p+q)\one\one^\top + \frac12(p-q)nss^\top$$

 Thus, in the centered one-dimensional CSBM, our true signal, $s$, is encoded in two ways: as the second eigenvector of the matrix $\E[A]$ and as $\E[x]$. We can express our unnormalized corrected convolution matrix in terms of the signal, $ss^\top$, as follows:
 \begin{align*}
    \Tilde{A} &= \frac{1}{d}A - \frac1n\one\one^\top = \frac{(p-q)n}{2d}ss^\top + \frac{\frac12(p+q)n-d}{nd}\one\one^\top + \frac{1}{d}(A - \E[A]) 
\end{align*}
Recall from main text that we defined $\eta := \frac{(p-q)n}{2d} $ be the signal strength, $d':=\frac{(p+q)n/2-d}{nd}$ to be the degree deviation, and $R = A-\E[A]$ to be the error matrix. Thus, we have
\begin{align*}
    \Tilde{A} = \eta ss^\top + \frac{1}{d}R + d' \one\one^\top
\end{align*}
Note that $R$ has independent zero mean entries with bounded variance, and such random matrices have well-studied concentration properties. The main idea of our analysis is to show that $d'$ and $R$ are small compared to our matrix signal strength $\eta$  with high probability so that $\Tilde{A}$ behaves like $\eta ss^\top\approx \gamma ss^\top$. To analyze convolution with the normalized $\hat{A}$, we use degree-concentration to show that $\hat{A}$ is close to $\Tilde{A}$, and so also behaves like $\gamma ss^\top$. 

Now we prove the main proposition in our section, \cref{proposition:Matrix-concentration}, which we restate as follows:
\begin{proposition}
    Assume that $p = \Omega(\frac{\log{n}}{n})$, and let $\gamma = \frac{p-q}{p+q}$. With probability $1-n^{-\Omega(1)}$, we have the following concentration results
    \begin{enumerate}
        \item $|d'| = \frac{|d-\frac{1}{2}(p+q)n|}{nd} \leq  O(1/n^{1.5})$, which implies that $\eta \in \frac{p-q}{p+q} (1\pm o(1)) $.
        \item $\norm{R} \leq O(\sqrt{np})$
        %\item  $\norm{\Tilde{A} - \eta ss^\top} \leq O(\frac{1}{\sqrt{np}})$
        \item $\norm{\Tilde{A} - \gamma ss^\top} \leq O(\frac{1}{\sqrt{np}})$ and $\norm{\hat{A} - \gamma ss^\top} \leq O(\sqrt{\frac{\log{n}}{np}})$
    \end{enumerate}
\end{proposition}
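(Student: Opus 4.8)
The plan is to reduce all three claims to two elementary concentration facts — scalar concentration of the empirical average degree $d$, and spectral concentration of the mean-deviation matrix $R = A - \E[A]$ — and then to assemble the bounds on $\Tilde{A}$ by feeding the decomposition $\Tilde{A} = \eta ss^\top + \frac1d R + d'\one\one^\top$ into a triangle inequality. The only genuinely separate work is the normalized matrix $\hat{A}$, which additionally requires controlling every vertex degree simultaneously.

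First I would prove Claim~1. Since $d = 2|E|/n$ and $|E| = \sum_{i<j}A_{ij}$ is a sum of $\binom{n}{2}$ independent Bernoullis of mean at most $p$, a direct count gives $\E[d] = \tfrac12(p+q)n - p$, so the numerator of $d'$ equals $p - (d - \E[d])$. Applying \cref{theorem:Bernouli-Chernoff} with $t = \Theta(n\sqrt{p\log n})$ yields $|d - \E[d]| = \tfrac2n\bigl||E| - \E[|E|]\bigr| \le O(\sqrt{p\log n})$ with probability $1 - n^{-\Omega(1)}$; in particular $d = \tfrac12(p+q)n(1-o(1))$, since $p \ge \Omega(\log^2 n/n)$ forces $\sqrt{p\log n} = o((p+q)n)$. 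Combining the numerator bound $O(\sqrt{p\log n})$ (which absorbs the lower-order $p$) with the denominator estimate $nd = \Omega(pn^2)$ gives $|d'| \le O\!\bigl(\sqrt{\log n}/(n^2\sqrt{p})\bigr) \le O(n^{-1.5})$, again using $p \ge \Omega(\log^2 n/n)$. Finally $\eta/\gamma = (p+q)n/(2d) = 1 + o(1)$, which is exactly $\eta \in \gamma(1 \pm o(1))$.

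Claim~2 is immediate from \cref{theorem:matrix-concentration}: $R$ is symmetric with independent zero-mean entries (its diagonal is identically $0$), and each entry has variance $\E[A_{ij}](1-\E[A_{ij}]) \le p$, so with $\sigma^2 = p$ the theorem gives $\norm{R} \le O(\sqrt{np} + \log n) = O(\sqrt{np})$, the last step using $np \ge \log^2 n$. For the $\Tilde{A}$ half of Claim~3 I write
\begin{align*}
\norm{\Tilde{A} - \gamma ss^\top} \le |\eta - \gamma|\,\norm{ss^\top} + \tfrac1d\norm{R} + |d'|\,\norm{\one\one^\top},
\end{align*}
and bound each piece using $\norm{ss^\top} = 1$ and $\norm{\one\one^\top} = n$. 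The identity $\eta - \gamma = \gamma n d'$ gives $|\eta-\gamma| \le O(\gamma/\sqrt{n}) \le O(1/\sqrt{np})$; next $\tfrac1d\norm{R} \le O(\sqrt{np}/((p+q)n)) \le O(1/\sqrt{np})$; and $|d'|\,\norm{\one\one^\top} = n|d'| \le O(1/\sqrt{n}) \le O(1/\sqrt{np})$. Summing the three contributions yields $\norm{\Tilde{A} - \gamma ss^\top} \le O(1/\sqrt{np})$.

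I expect the main obstacle to be the normalized matrix $\hat{A}$, where the per-vertex degree fluctuations must be controlled uniformly. Here I would first show, by applying \cref{theorem:Bernouli-Chernoff} to each degree $d_i$ (a sum of $\approx n$ Bernoullis of mean $\le p$) and taking a union bound over the $n$ vertices, that with probability $1 - n^{-\Omega(1)}$ every degree obeys $|d_i - \bar d| \le O(\sqrt{np\log n})$, i.e.\ $d_i = \bar d\,(1 \pm O(\sqrt{\log n/(np)}))$ with $\bar d = \tfrac12(p+q)n$; this union bound is precisely what inflates the rate from $1/\sqrt{np}$ to $\sqrt{\log n/(np)}$. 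Writing $D = \bar d\,(I+\Delta)$ with $\norm{\Delta} = \max_i|d_i/\bar d - 1| = O(\sqrt{\log n/(np)})$, I would expand $D^{-1/2}AD^{-1/2} = \bar d^{-1}(I+\Delta)^{-1/2}A(I+\Delta)^{-1/2} = \bar d^{-1}A + E$, where a first-order expansion of $(I+\Delta)^{-1/2}$ together with $\norm{A} \le \norm{\E[A]} + \norm{R} \le \bar d\,(1+o(1))$ gives $\norm{E} \le O(\sqrt{\log n/(np)})$. Since $\bar d^{-1}\E[A] = \tfrac1n\one\one^\top + \gamma ss^\top$ and $\bar d^{-1}\norm{R} = O(1/\sqrt{np})$, the matrix $\bar d^{-1}A$ already equals $\gamma ss^\top + \tfrac1n\one\one^\top$ up to spectral error $O(\sqrt{\log n/(np)})$; it then remains to verify that the principal-component correction removed in $\hat{A}$ — the rank-one projection onto $D^{1/2}\one$, which the uniform degree bound shows is $(1+o(1))\sqrt{\bar d}\,\one$ — cancels the stray $\tfrac1n\one\one^\top$ up to the same error. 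Collecting these contributions gives $\norm{\hat{A} - \gamma ss^\top} \le O(\sqrt{\log n/(np)})$. The delicate steps are the $(I+\Delta)^{-1/2}$ conjugation estimate and the careful cancellation of the principal component, both of which must be kept to the $O(\sqrt{\log n/(np)})$ scale.
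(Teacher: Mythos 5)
Your proposal is correct. For Claims 1, 2 and the $\Tilde{A}$ half of Claim 3 it is essentially the paper's own argument --- Chernoff on the edge count, the matrix concentration bound (\cref{theorem:matrix-concentration}) with entry variance at most $p$, and a triangle inequality on the decomposition $\Tilde{A} = \eta ss^\top + \frac{1}{d}R + d'\one\one^\top$ --- though you are more careful than the paper in one respect: you isolate the term $|\eta-\gamma|\norm{ss^\top}$ and dispose of it via the identity $\eta - \gamma = \gamma n d'$, whereas the paper only bounds $\norm{\Tilde{A} - \eta ss^\top}$ and silently identifies $\eta$ with $\gamma$. The genuinely different part is your treatment of $\hat{A}$. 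The paper compares $\hat{A}$ to $\Tilde{A}$, bounding $\norm{D^{-1/2}AD^{-1/2} - \frac{1}{d}A}$ and $\norm{\frac{1}{\sum_v d_v}D^{1/2}\one\one^\top D^{1/2} - \frac{1}{n}\one\one^\top}$ by entrywise estimates fed into \cref{lemma:infinity-norm-spectral-norm}, which converts a uniform entry bound plus row sparsity into a spectral bound. You instead write $D = \bar{d}(I+\Delta)$ with $\norm{\Delta} = O(\sqrt{\log n/(np)})$, control the conjugation error by submultiplicativity together with $\norm{A} \le \bar{d}(1+o(1))$, and pivot through $\bar{d}^{-1}\E[A] = \gamma ss^\top + \frac{1}{n}\one\one^\top$ rather than through $\Tilde{A}$. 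Both routes consume exactly the same probabilistic inputs --- the union-bounded per-vertex degree concentration (which is where the extra $\sqrt{\log n}$ enters) and $\norm{R} \le O(\sqrt{np})$ --- so neither is stronger; yours avoids the sparsity lemma for the conjugation term, while the paper's entrywise mechanism handles both correction terms uniformly. The one step you leave as ``to verify,'' cancelling the stray $\frac{1}{n}\one\one^\top$ against the removed principal component, is indeed routine given your degree bound and is cleanest done on the rank-one factors: with $u = (\sum_v d_v)^{-1/2}D^{1/2}\one$ and $v = n^{-1/2}\one$, every entry of $u-v$ is $O(\sqrt{\log n/(np)}/\sqrt{n})$, hence $\norm{u-v} = O(\sqrt{\log n/(np)})$, and since $\norm{u}=\norm{v}=1$ one gets $\norm{uu^\top - vv^\top} \le \norm{u-v}\,(\norm{u}+\norm{v}) = O(\sqrt{\log n/(np)})$, which is exactly the scale required.
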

\begin{proof}
    To bound the deviation of the average degree is equivalent to bounding the total number of edges in the graph. Note that the expected number of edges in the graph is $\frac14(p+q)n^2$. Thus, we apply \cref{theorem:Bernouli-Chernoff} with $t = \Theta((p+q)n^{1.5})$ to obtain
    \begin{align*}
        \Pr{||E|-\frac{1}{4}(p+q)n^2| > t} \leq \exp(-\Omega(pn)) = n^{-\Omega(1)}
    \end{align*}
    Since $d = 2|E|/n$, we have that with high probability, $|d - \frac{1}{2}(p+q)n|\leq O((p+q) \sqrt{n})$ and $|d'| = \frac{|d-\frac{1}{2}(p+q)n|}{nd} \leq O(\frac{(p+q)\sqrt{n}}{(p+q)n^2(1-o(1)))} \leq O(1/n^{1.5})$. This gives us bound number 1. To get bound number 2, we apply \cref{theorem:matrix-concentration} on the error matrix $R$, noting that each entry of $R$ has variance at most $p$. To get bound number 3, let $R' = \frac1dR + d'\one\one^\top= \Tilde{A} - \eta ss^\top$. Then with high probability, we have
\begin{align*}
    \norm{R'} &\leq \frac1d\norm{R} + d'\norm{J}\\
    &\leq  \frac{1}{\frac{1}{2}(p+q)n(1-o(1))}\norm{R} + d'\norm{\one\one^\top}\\
    &\leq O(\frac{1}{\sqrt{pn}} + \frac{1}{\sqrt{n}})\quad \text{since $\norm{\one\one^\top} = n$}\\
    &= O(\frac{1}{\sqrt{pn}})
\end{align*}
%Finally, bound number 1 implies that $\eta \in \frac{(p-q)n}{(p+q)(n\pm\sqrt{n})} = \frac{p-q}{p+q}(1\pm O(\frac{1}{\sqrt{n}}))$. Since $\frac{p-q}{\sqrt{n}(p+q)} \leq \frac{1}{\sqrt{np}}$, we also have $\norm{\Tilde{A}-\frac{p-q}{p+q}ss^\top} \leq O(\frac{1}{\sqrt{np}})$ as well.

Finally, we analyze the corrected normalized adjacency matrix $\hat{A}$. In this case, we want to show degree concentration for every node. Let $d_v$ be the degree of node $v$, and $\bar{d} = \frac12(p+q)n$ be the expected degree. By applying \cref{theorem:Bernouli-Chernoff} and using our assumption $p > q$, we have
\begin{align*}
    \Pr{|d_v - \bar{d}| > \sqrt{Cnp\log{n}}} \leq \exp(-\Omega(\log{n}))
\end{align*}
Thus, with high probability, we have $|d_v - d| \leq O(\sqrt{n p\log{n}})$ for all $v\in V$. Now, to analyze the normalized adjacency matrix, we have
\begin{align*}
    \norm{\hat{A} - \gamma ss^\top} &= \norm{\hat{A} -\gamma ss^\top + \Tilde{A} - \Tilde{A}} \\
    &\leq \norm{\hat{A} - \tilde{A}} + \norm{\tilde{A} - \gamma ss^\top}\\
    &\leq \norm{D^{-1/2}AD^{-1/2} - \frac{1}{d}A} + \norm{\frac{1}{\sum_vd_v}D^{1/2}\one\one^\top D^{1/2} - \frac1n\one\one^\top} + O(\frac{1}{\sqrt{np}})
\end{align*}
To bound $\norm{D^{-1/2}AD^{-1/2} - \frac{1}{d}A}$, let $\eps = \sqrt{\frac{C\log{n}}{np}}$ for large enough $C$ so that for all $v$, $d_v\in \bar{d}(1\pm \eps)$. We note that the adjacency matrix has at most $\bar{d}(1+ \eps)$ non-zero entries for each row and for each $u,v$, we have
\begin{align*}
    |\frac{1}{\sqrt{d_ud_v}}-\frac{1}{d}| \leq \frac{1}{\bar{d}(1-\eps)} - \frac{1}{\bar{d}(1+\eps)} \leq O(\frac{\eps}{d})
\end{align*}
Thus, by applying \cref{lemma:infinity-norm-spectral-norm}, we have $\norm{D^{-1/2}AD^{-1/2} - \frac{1}{d}A} \leq O(\eps)$. Similarly, for all $u,v$, we have
\begin{align*}
    |\frac{\sqrt{d_ud_v}}{\sum_id_i}-\frac{1}{n}| \leq \frac{\bar{d}(1+\eps)}{\bar{d}n(1-\eps)} - \frac{1}{n} \leq O(\frac{\eps}{n})
\end{align*}
Thus, \cref{lemma:infinity-norm-spectral-norm} implies that $ \norm{\frac{1}{\sum_vd_v}D^{1/2}\one\one^\top D^{1/2} - \frac1n\one\one^\top}\leq O(\eps)$. Finally, this implies that
\begin{align*}
     \norm{\hat{A} - \gamma ss^\top} \leq O(\sqrt{\frac{\log{n}}{np}})
\end{align*}
\end{proof}

Note that the case $p > q$ corresponds to a homophilous graph, and the case $p < q$ corresponds to a heterophilous graph (see \citet{luan2021heterophily,ma2022:is-homophily} for more), however, for binary classification, it has been shown \citep{baranwal2023effects} that once can assume $p>q$ without loss of generality, and make corresponding adjustments in the classifier. Indeed in our case,
if $p<q$ then we can take $-\Tilde{A}$ as our convolution matrix so that its maximum modulus eigenvalue is positive. Moreover, since our signal, $s$, is centered, taking a convolution with $\Tilde{A}$ is equivalent to taking a convolution with $-\Tilde{A}$. Thus, we can assume without loss of generality that $\gamma$ is always non-negative, i.e. $p > q$ for the rest of our analysis.

\section{Proofs in \cref{section: partial-recovery}}\label{section: proof-section-6}
In this section, we will formally prove our first main theorem on partial classification, \cref{theorem:partial-recovery}. To start, we will show the correlation between the top eigenvector of the convolution matrix $M$ and $s$. Note that this result is well-known in the literature (for example see \cite{Vershynin:2018} chapter 4.5). We will give the proof here for completeness.

\begin{lemma} \label{lemma:eigenvector-correlation}
    Suppose $M$ satisfies $\norm{M - \gamma ss^\top}\leq \delta$ where $\gamma > C\delta$ for large enough constant $C$. Let $\hat{s}$ be the top eigenvector of $M$. Then $\inner{s}{\hat{s}}^2 \geq 1-4\frac{\delta^2}{\gamma^2}$, and $\norm{s-\hat{s}}^2 = O(\delta/\gamma)$.
\end{lemma}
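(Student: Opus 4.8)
The plan is to prove the eigenvector correlation bound via the variational characterization of the top eigenvector combined with the spectral perturbation bound already available from \cref{theorem:matrix-perturbation}. The setup is that $M$ is a symmetric perturbation of the rank-one matrix $\gamma ss^\top$, whose top eigenvector is exactly $s$ with eigenvalue $\gamma$ and all other eigenvalues zero. Since $\norm{M - \gamma ss^\top}\leq \delta$, \cref{theorem:matrix-perturbation} gives $\lambda_1(M) \in [\gamma - \delta, \gamma + \delta]$ and $\lambda_2(M) \leq \delta$, so there is an eigenvalue gap of at least $\gamma - 2\delta \geq (1 - 2/C)\gamma$ between the top and second eigenvalues; this gap is what will separate $\hat{s}$ from the orthogonal complement of $s$.

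First I would write $\hat{s} = \alpha s + \beta s_\perp$ where $s_\perp$ is a unit vector orthogonal to $s$ and $\alpha^2 + \beta^2 = 1$, so that $\inner{s}{\hat{s}}^2 = \alpha^2$; the goal is to lower bound $\alpha^2$. The key computation is to evaluate the quadratic form $\hat{s}^\top M \hat{s}$ in two ways. On one hand, since $\hat{s}$ is the top eigenvector, $\hat{s}^\top M \hat{s} = \lambda_1(M) \geq \gamma - \delta$. On the other hand, using $M = \gamma ss^\top + E$ with $\norm{E}\leq \delta$, I would expand
\begin{align*}
    \hat{s}^\top M \hat{s} = \gamma \inner{s}{\hat{s}}^2 + \hat{s}^\top E \hat{s} \leq \gamma \alpha^2 + \delta,
\end{align*}
using $\abs{\hat{s}^\top E \hat{s}} \leq \norm{E} \leq \delta$ for the unit vector $\hat{s}$. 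Combining the two gives $\gamma - \delta \leq \gamma \alpha^2 + \delta$, hence $\alpha^2 \geq 1 - 2\delta/\gamma$. This is slightly weaker than the claimed $1 - 4\delta^2/\gamma^2$, so I would need to sharpen the argument.

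To get the stronger quadratic-in-$(\delta/\gamma)$ bound, I would refine the lower bound on $\hat{s}^\top M \hat{s}$ rather than using $\lambda_1(M)\geq \gamma - \delta$ directly. The cleaner route is to bound the residual $\norm{M\hat{s} - \gamma \inner{s}{\hat{s}} s}$: since $M\hat{s} = \lambda_1 \hat{s}$ and $\gamma ss^\top \hat{s} = \gamma\inner{s}{\hat{s}} s$, we have $\norm{\lambda_1 \hat{s} - \gamma \inner{s}{\hat{s}} s} = \norm{E\hat{s}} \leq \delta$. Projecting this vector equation onto $s_\perp$ kills the $s$-term and leaves $\abs{\lambda_1}\,\abs{\beta} \leq \delta$, so $\abs{\beta} \leq \delta/\lambda_1 \leq \delta/(\gamma - \delta)$. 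Then $\inner{s}{\hat{s}}^2 = 1 - \beta^2 \geq 1 - \delta^2/(\gamma-\delta)^2 \geq 1 - 4\delta^2/\gamma^2$ for $C$ large enough (so that $\gamma - \delta \geq \gamma/2$), which is exactly the claim. For the final norm bound, I would fix the sign of $\hat{s}$ so that $\inner{s}{\hat{s}}\geq 0$, and then compute $\norm{s - \hat{s}}^2 = 2 - 2\inner{s}{\hat{s}} \leq 2(1 - \inner{s}{\hat{s}}^2) = 2\beta^2 = O(\delta^2/\gamma^2)$, using $1 - t \leq 1 - t^2$ for $t\in[0,1]$; note this actually yields $O(\delta^2/\gamma^2)$, consistent with (and stronger than) the stated $O(\delta/\gamma)$.

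The main obstacle is obtaining the quadratic rather than linear dependence on $\delta/\gamma$: the naive quadratic-form argument only gives $1 - O(\delta/\gamma)$, and one must instead exploit the eigenvalue gap by projecting the eigen-relation $M\hat{s} = \lambda_1\hat{s}$ onto the orthogonal complement of $s$. This projection step is precisely the content of the Davis--Kahan $\sin\Theta$ theorem specialized to the rank-one case, and getting the constant right requires the assumption $\gamma > C\delta$ to control $\lambda_1$ away from zero in the denominator $\delta/(\gamma - \delta)$.
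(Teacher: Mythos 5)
Your proof is correct and takes essentially the same route as the paper's: both are rank-one Davis--Kahan arguments that bound the component of $\hat{s}$ orthogonal to $s$ by combining the eigen-relation $M\hat{s}=\lambda_1\hat{s}$ with Weyl's inequality $|\lambda_1-\gamma|\le\delta$ and the perturbation bound $\norm{M-\gamma ss^\top}\le\delta$; the paper merely organizes the projection as $\gamma(I-ss^\top)\hat{s}=(\gamma I-M)\hat{s}+R'\hat{s}$ instead of dividing by $\lambda_1$. Your concluding step even gives the slightly sharper bound $\norm{s-\hat{s}}^2=O(\delta^2/\gamma^2)$, where the paper settles for $O(\delta/\gamma)$ via a Taylor expansion of $\sqrt{1-x}$.
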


\begin{proof}
   By assumption, we can express $M$ as $\gamma ss^\top + R'$ where $\norm{R'} \leq \delta$. Now let $\lambda_k$ be the $k^{th}$ largest eigenvalue of $M$. By \cref{theorem:matrix-perturbation}, we have $|\lambda_1-\gamma| \leq \delta$ and $\forall i> 1$, $|\lambda_i|\leq \delta$. Now consider the matrix $I-ss^\top$, which is the projection matrix onto the orthogonal complement subspace of $s$. Since $M = \gamma ss^\top + R'$,  we have
    \begin{align*}
        \gamma(I-ss^\top) = \gamma I - M + R'
    \end{align*}
    and the matrix $\gamma I - M$ has the same eigenvectors as $M$. In particular, $\hat{s}$ is an eigenvector with eigenvalues $\gamma - \lambda_1$. Thus, we have
    \begin{align*}
        \gamma \sqrt{1-\inner{s}{\hat{s}}^2} &= \gamma\sqrt{\hat{s}^\top (I-ss^\top) \hat{s}} \\
        &=\norm{\gamma (I-ss^\top)\hat{s}}\\
        &= \norm{(\gamma I-M) \hat{s} + R'\hat{s}}\\
        &\leq \norm{(\gamma I-M)\hat{s}} +\norm{R'\hat{s}}\\
        &\leq |\gamma-\lambda_1| + \delta\\
        &\leq 2\delta
    \end{align*}
    Thus, we have $\inner{s}{\hat{s}}^2 \geq 1-4\delta^2/\gamma^2$.
     Since by our assumption, $\delta/\gamma $ is small, $\hat{s}$ is very close to either $s$ or $-s$. For our analysis, it doesn't matter which of these is the case as both $s$ and $-s$ equally separate the points in our two classes. Thus, we can assume WLOG that $\inner{s}{\hat{s}} > 0$. As a consequence, we have
\begin{align*}
    \norm{s-\hat{s}}^2 &= 2-2\inner{s}{\hat{s}} = 2-2\sqrt{1-2\delta/\gamma} = O(\delta/\gamma)
\end{align*}
    where the second inequality follows from applying the first order approximation  $\sqrt{1-x} =  1-\frac{x}{2} - O(x^2)$ using the fact that $\delta/\gamma$ is small. Note that if $\inner{s}{\hat{s}}$ was negative, we can do the same analysis in the following sections with $-s$ instead of $s$. 
\end{proof}
Now we will prove the main result about partial classification in the $1$-dimensional model, \cref{proposition:spectral-analysis-convolution}, which we restate as follows:

\begin{proposition}
    Given a centered 1-dimensional 2 block CSBM with parameters $n, p > q,\;\sigma$ and $\gamma(p,q) = \frac{p-q}{p+q}$, suppose our convolution matrix $M$ satisfies $\norm{M - \gamma ss^\top} \leq \delta$ and $\gamma \geq C\delta$ for a large enough constant $C$.  Let $x_k = M^kx$, be the result of applying $k$ rounds of graph convolution to our input feature $x$. Then with probability at least $1-\frac{1}{2}\exp(-\frac{1}{4\sigma^2})$, there exists a scalar $C_k$ and an absolute constant $C'$ such that
    \begin{align*}
        \norm{C_kx_k - s}^2 \leq O \left(\frac{\delta^2}{\gamma^2} + \Big(\frac{C'\delta}{\gamma}\Big)^{2k}n\sigma^2\log{n
        } \right)
    \end{align*}
\end{proposition}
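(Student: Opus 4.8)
The plan is to decompose the convolution matrix spectrally and track how each eigencomponent evolves under repeated application. By assumption $M = \gamma ss^\top + R'$ with $\norm{R'} \leq \delta$, and by \cref{lemma:eigenvector-correlation} the top eigenvector $\hat s$ of $M$ satisfies $\norm{s - \hat s}^2 = O(\delta/\gamma)$, with top eigenvalue $\lambda_1 \in \gamma \pm \delta$ and all remaining eigenvalues bounded by $\delta$ in absolute value. I would set the normalization constant to be $C_k = 1/\lambda_1^k$, so that $C_k M^k = (1/\lambda_1^k)\sum_i \lambda_i^k w_i w_i^\top \to \hat s \hat s^\top$ as $k \to \infty$, since every other eigenvalue ratio $|\lambda_i/\lambda_1| \leq \delta/(\gamma - \delta) = O(\delta/\gamma) < 1$ decays geometrically.

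First I would split the error using $x = s + g$ and the triangle inequality:
\begin{align*}
    \norm{C_k x_k - s} \leq \norm{C_k M^k s - s} + \norm{C_k M^k g}.
\end{align*}
For the signal term, I would write $s = \inner{s}{\hat s}\hat s + s^\perp$ where $s^\perp$ lies in the span of the lower eigenvectors. Applying $C_k M^k$ fixes the $\hat s$ component up to the factor $(\lambda_1/\lambda_1)^k = 1$ and shrinks $s^\perp$ by at least $(\delta/\lambda_1)^k$. Thus $C_k M^k s$ is close to $\inner{s}{\hat s}\hat s$, which by \cref{lemma:eigenvector-correlation} is within $O(\delta/\gamma)$ of $s$ in squared norm; the residual from $s^\perp$ only adds a lower-order exponentially small term. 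This yields $\norm{C_k M^k s - s}^2 \leq O(\delta^2/\gamma^2)$, the $k$-independent contribution.

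For the noise term, I would expand $g = \sum_i \inner{g}{w_i} w_i$ so that $\norm{C_k M^k g}^2 = (1/\lambda_1^{2k})\sum_i \lambda_i^{2k}\inner{g}{w_i}^2$. Separating the top component from the rest, the top term is $\inner{g}{\hat s}^2$, a single one-dimensional Gaussian with variance $\sigma^2$, which I would control by the Gaussian tail bound to get $\inner{g}{\hat s}^2 = O(\sigma^2 \log n)$ with the stated failure probability. The remaining sum is bounded by $(\delta/\lambda_1)^{2k}\norm{g}^2 \leq (C'\delta/\gamma)^{2k}\norm{g}^2$, and \cref{lemma:Gaussian-norm-bound} gives $\norm{g}^2 \leq O(n\sigma^2\log n)$ with high probability. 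Combining gives the exponentially decaying term $(C'\delta/\gamma)^{2k}n\sigma^2\log n$. The main obstacle I anticipate is handling the top noise component $\inner{g}{\hat s}^2$ correctly: naively it contributes an $O(\sigma^2 \log n)$ term that does \emph{not} decay with $k$, so I must verify this is absorbed into the $O(\delta^2/\gamma^2)$ bound (or argue it is negligible under the assumptions), and ensure the failure probabilities from the Gaussian tail bound and the norm bound combine to the claimed $1 - \tfrac12\exp(-1/(4\sigma^2))$ rather than something weaker.
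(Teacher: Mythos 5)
Your decomposition and your treatment of the signal term and of the lower-eigenvector noise match the paper's proof, but the obstacle you flag at the end is not a verification detail --- it is a genuine gap, and it cannot be closed with your choice of normalization. With $C_k = 1/\lambda_1^k$, the noise component along the top eigenvector passes through every convolution undamped: $C_k M^k g$ contains the term $\inner{g}{\hat s}\hat s$ with coefficient $(\lambda_1/\lambda_1)^k = 1$, so your squared error carries an additive $\inner{g}{\hat s}^2$, which is typically $\Theta(\sigma^2)$ (and your proposed bound $O(\sigma^2\log n)$ for it holds with probability $1-n^{-\Omega(1)}$, not with the stated probability --- but that is secondary). This term does not decay with $k$ and is not dominated by the claimed bound: once $k$ is large enough that $(C'\delta/\gamma)^{2k} n \log n < 1$, absorbing it would require $\sigma^2 \leq O(\delta^2/\gamma^2)$, a relation nowhere assumed in the proposition. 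So your argument only yields $\norm{C_k x_k - s}^2 \leq O(\delta^2/\gamma^2 + \sigma^2 + (C'\delta/\gamma)^{2k}n\sigma^2\log n)$, which saturates at the wrong level and defeats the point of the proposition (error saturating at the graph-quality level $\delta^2/\gamma^2$ alone).

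The paper closes exactly this hole by exploiting the freedom in ``there exists a scalar $C_k$'': it takes the \emph{data-dependent} normalization $C_k = \bigl(\lambda_1^k \inner{s+g}{\hat s}\bigr)^{-1}$, i.e., it divides out the projection of the whole input $x = s+g$ (noise included) onto $\hat s$. With this choice one can write $x_k = \lambda_1^k\inner{s+g}{\hat s}\, s + E_s + E_g$, where the error vectors $E_s, E_g$ contain the top-eigenvector noise only through the misalignment vector $\hat s - s$, so its contribution is $|\inner{g}{\hat s}|^2\norm{\hat s - s}^2 = O\bigl(|\inner{g}{\hat s}|^2\,\delta^2/\gamma^2\bigr)$, which is harmless once $|\inner{g}{\hat s}| \leq 1/2$. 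The price is that one must lower-bound the denominator $\inner{s}{\hat s} + \inner{g}{\hat s}$ by a constant so that $C_k = O(\lambda_1^{-k})$; this is precisely where the probability $1 - \tfrac12\exp(-1/(4\sigma^2))$ in the statement comes from: $\inner{g}{\hat s} \geq 0$ with probability $1/2$, and otherwise $|\inner{g}{\hat s}| \leq 1/2$ by the Gaussian tail bound, combined with $\inner{s}{\hat s} \geq 3/4$ from \cref{lemma:eigenvector-correlation}. If you replace your fixed $C_k$ with this one, the rest of your argument goes through essentially as the paper's does.
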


\begin{proof}
     Let $\lambda_1\geq ...\lambda_n$ be the eigenvalues of $M$ and $w_1 =: \hat{s},w_2...w_n$ be their corresponding eigenvectors. By \cref{lemma:eigenvector-correlation}, the eigenvalues satisfy $\lambda_1\in \gamma \pm \delta$ and $|\lambda_i|\leq \delta$ for $i > 1$. The convolution vector after $k$ rounds can be expressed as $x_k = M^kg + M^ks$, and we will analyze each term individually. 
    \begin{align*}
        M^kg &= \lambda_1^k\inner{g}{w_1}\hat{s} + \sum_{i>1}\lambda_i^k\inner{g}{w_i}w_i\\
        &= \lambda_1^k\inner{g}{w_1}s + \lambda_1^k\inner{g}{w_1}(\hat{s}-s) + \sum_{i>1}\lambda_i^k\inner{g}{w_i}w_i
    \end{align*}
    and similarly, 
     \begin{align*}
        M^ks &= \lambda_1^k\inner{s}{w_1}\hat{s} + \sum_{i>1}\lambda_i^k\inner{s}{w_i}w_i\\
        &= \lambda_1^k\inner{s}{w_1}s + \lambda_1^k\inner{s}{w_1}(\hat{s}-s) + \sum_{i>1}\lambda_i^k\inner{s}{w_i}w_i
    \end{align*}
    Now we will let $E_g$ and $E_s$ be the error vectors, i.e. vectors that are not in the span of $s$, from each of the terms respectively. That is
    \begin{align*}
        E_g &= \lambda_1^k\inner{g}{w_1}(\hat{s}-s) + \sum_{i>1}\lambda_i^k\inner{g}{w_i}w_i\\
        E_s &= \lambda_1^k\inner{s}{w_1}(\hat{s}-s) + \sum_{i>1}\lambda_i^k\inner{s}{w_i}w_i
    \end{align*}
    Thus, we have
    \begin{align*}
        x_k = \lambda_1^k\inner{s+g}{\hat{s}}s + E_g + E_s
    \end{align*}
    Thus, we will let $C_k = (\lambda_1^k\inner{s+g}{\hat{s}})^{-1}$ and bound the squared norm of the error terms $C_k(E_g + E_s)$. We will also use the fact that for any vectors $y,z$, we have $\norm{y+z}^2\leq 2\norm{y}^2 + 2\norm{z}^2$. Using this, we have
    \begin{align*}
        \norm{E_g}^2 &\leq 2\lambda_1^{2k}|\inner{\hat{s}}{g}|^2\norm{s-\hat{s}}^2 + 2\norm{\sum_{i>1}\lambda_i^k\inner{g}{w_i}w_i}^2\\
        &\leq O(\lambda_1^{2k}\delta^2/\gamma^2)|\inner{\hat{s}}{g}|^2 + 2\delta^{2k}\norm{\sum_{i>1}\inner{g}{w_i}w_i}^2\\
        &\leq O(\lambda_1^{2k}\delta^2/\gamma^2)|\inner{\hat{s}}{g}|^2 + 2\delta^{2k}\norm{g}^2
    \end{align*}
    where in the second inequality, we used the fact that \cref{lemma:eigenvector-correlation} implies $\norm{s-\hat{s}}^2 \leq O(\delta^2/\gamma^2)$. By \cref{lemma:Gaussian-norm-bound}, we have $\norm{g}^2 = O(\sigma^2n\log{n})$ with probability $1-n^{-\Omega(1)}$. Also, $\inner{\hat{s}}{g}$ has distribution $\mathcal{N}(0,1)$ because $\hat{s}$ is a unit vector. Thus with probability $\exp(-\frac{1}{4\sigma^2})$, $|\inner{\hat{s}}{g}|\leq 1/2$. If both these events happen, then we have $\norm{E_g}^2\leq O(\lambda_1^{2k}\delta^2/\gamma^2 + \delta^{2k}\sigma^2n\log{n})$. Now, to bound the other error term, we have
    \begin{align*}
        \norm{E_s}^2 &\leq 2\lambda_1^{2k}\norm{s-\hat{s}}^2\inner{\hat{s}}{s}^2+ 2\norm{\sum_{i>1}\lambda_2^{k}\inner{w_i}{s}w_i}^2\\
        &\leq 2\lambda_1^{2k}\norm{s-\hat{s}}^2\inner{\hat{s}}{s}^2 + 2\delta^{2k}\norm{s}^2\\
        &\leq O(\lambda_1^{2k}\delta^2/\gamma^2),
    \end{align*}
    where the last inequality follows from the fact that $\norm{s} = 1$ and $|\lambda_1-\gamma|\leq \delta$, which is a constant factor smaller than $\lambda_1$, which means $\delta^{2k} = O(\lambda_1^{2k}\delta^2/\gamma^2)$. Finally, we want to lower bound $\inner{\hat{s}}{s} + \inner{s}{g}$ by a constant so $C_k = O(\lambda_1^k)$. By \cref{lemma:eigenvector-correlation}, we have $\inner{\hat{s}}{s} > 1-O(\delta/\gamma)$. We can assume that our constant $C$ is large enough so that $\gamma/\delta < 1/4$, which means that $\inner{\hat{s}}{s} \geq 3/4$. Then $\inner{\hat{s}}{s} + \inner{s}{g} \geq 1/4$ as long as $\inner{s}{g} \geq 0$ or $|\inner{s}{g}|\leq 1/2$. This happens with probability at least $\frac{1}{2} + \frac{1}{2}(1-\exp(-\frac{1}{4\sigma^2})) = 1-\frac12\exp(-\frac{1}{4\sigma^2})$, and if this does occur, we have
    \begin{align*}
        \norm{C_kx_k - s}^2 &\leq 2C_k^{2}\norm{E_g}^2 + 2C_k^2\norm{E_s}^2\\
        &\leq O(\lambda_1^{2k}(\lambda_1^{-2k}\delta^2/\gamma^2 + \delta^{2k}n\sigma^2\log{n}))\\
        & O(\delta^2/\gamma^2 + (\delta/\lambda_1)^{2k}n\sigma^2\log{n})
    \end{align*}
    Moreover, since $\lambda_1\geq \gamma-\delta$, we have $\delta/\lambda_1 \leq \delta/(\gamma - \delta) = O(\delta/\gamma)$ since $\gamma $ is a least a constant factor larger than $\delta$, and this gives us our final bound.
\end{proof}

With our main proposition established, we can prove our main theorem as follows:

\begin{proof} (\cref{theorem:partial-recovery})
    First, we apply \cref{lemma:csbm-reduction} to reduce to the one dimensional centered CSBM with parameters $p,q$ and $\sigma' = O(\frac{\sigma}{\sqrt{n}\norm{\mu-\nu}})$. Now let $x = Xw+b$ be our transformed one-dimensional feature vector and let  $x_k = M^kx$ where $M$ is either $\Tilde{A}$ or $\hat{A}$. To bound the mean-squared error of our convolved data, we will apply \cref{proposition:spectral-analysis-convolution}. By \cref{proposition:Matrix-concentration}, we can take $\delta = O(\frac{1}{\sqrt{np}})$ if $M = \Tilde{A}$ and $\delta = O(\sqrt{\frac{\log{n}}{np}})$ if $M = \hat{A}$. Then, as long as $\gamma \geq C\delta$ for large enough constant $C$, we have a scalar $C_k$ and constant $C'$ such that 
    \begin{align*}
        \norm{C_kx_k - s}^2 \leq  O\left(\frac{\delta^2}{\gamma^2} + \Big(\frac{C'\delta}{\gamma}\Big)^{2k}\frac{\sigma^2\log{n}}{\norm{\mu-\nu}^2}\right)
    \end{align*}
     Now, we take $0$ to be the threshold, meaning we put vertex $i$ in the first class if $x_k(i) < 0$ and put it in the second class otherwise. Note that this partitioning scheme is indifferent to the scaling of the vector $x_k$, so we can equivalently apply it to $C_k x_k$ (note this does not necessitate computing $C_k$ directly). Since each $i\in S$ has $s(i) = 1/\sqrt{n}$ and each $j\in T$ has $s(j) = -1/\sqrt{n}$, a vertex can only be misclassified if it contributes at least $1/n$ to the total squared distance $\norm{C_kx_k - s}^2$. Thus, the total number of misclassified vertices is at most $\norm{C_kx_k - s}^2 n$, which gives us the main theorem after substituting the appropriate value of $\delta$.
\end{proof}

\section{Proofs in \cref{section:exact-recovery}}\label{section: proof-section-7}
In this section, we will formally prove \cref{proposition:1-dim-exact}. Before we begin, we will, as a warm-up, analyze the behavior of 1 convolution for the centered 1-dimensional CSBM. The following proposition is essentially equivalent to theorem 1.2 in \citet{pmlr-v139-baranwal21a}. The proof here is not used later on, but gives some intuition about how to analyze exact classification using a matrix-focused framework.

\begin{proposition}
    Suppose we are given the 1-dimensional centered CSBM with parameters $n,p,q,\sigma$ such that $p\geq \Omega(\frac{\log^2n}{n})$, $\gamma \geq \Omega(\sqrt{\frac{\log{n}}{np}})$, and $\sigma \leq O(\frac{1}{\sqrt{\log{n}}})$. Then with probability $1-n^{-\Omega(1)}$, we have
    \begin{align*}
        \norm{s-\frac{1}{\eta} x_1}_\infty\leq O(\sigma\Big(\frac{1}{\gamma}\sqrt{\frac{\log{n}}{np}} +\sqrt{\frac{\log{n}}{n}}\Big)) + \frac{1}{2\sqrt{n}} 
    \end{align*}
\end{proposition}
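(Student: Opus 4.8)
The plan is to expand a single round of convolution via the decomposition $\Tilde{A} = \eta ss^\top + \frac{1}{d}R + d'\one\one^\top$ from \eqref{eqn: convolution-expression} and then control the resulting error vector entrywise. Writing $x = s+g$ and using $s^\top s = 1$ (as $s$ is a unit vector) together with $\one^\top s = 0$ (as $s$ is balanced), one round of convolution reproduces the signal term $ss^\top s = s$ exactly, leaving
\begin{align*}
\frac{1}{\eta}x_1 - s = (s^\top g)\,s + \frac{1}{\eta d}Rs + \frac{1}{\eta d}Rg + \frac{d'}{\eta}(\one^\top g)\one.
\end{align*}
Thus bounding $\norm{\frac{1}{\eta}x_1 - s}_\infty$ reduces to bounding $e_u^\top(\cdot)$ for each of the four residual vectors, over all $u\in V$, and taking a union bound. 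I would record at the outset the exact identity $\eta d = \tfrac{1}{2}(p-q)n = \gamma\bar d$, so that the factor $\tfrac{1}{\eta d}$ appearing in the two $R$-terms is precisely $\tfrac{1}{\gamma\bar d}$ with $\bar d = \Theta(np)$.

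Next I would dispatch the three ``easy'' terms. For the pure-graph term $\tfrac{1}{\eta d}Rs$: since $e_u^\top Rs = n^{-1/2}\big(\sum_{j:s(j)>0}R_{uj} - \sum_{j:s(j)<0}R_{uj}\big)$ is a signed sum of independent centered Bernoulli deviations of total variance at most $p$, \cref{theorem:Bernouli-Chernoff} with $t = \Theta(\sqrt{np\log n})$ gives $|e_u^\top Rs| \leq O(\sqrt{p\log n})$ for all $u$ simultaneously, whence $\tfrac{1}{\gamma\bar d}|e_u^\top Rs| \leq \tfrac{1}{\sqrt n}\cdot O\big(\tfrac{1}{\gamma}\sqrt{\tfrac{\log n}{np}}\big)$; the hypothesis $\gamma \geq \Omega(\sqrt{\log n/(np)})$ is exactly what forces this below $\tfrac{1}{2\sqrt n}$, which is the origin of the standalone $\tfrac{1}{2\sqrt n}$ in the claim. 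For the feature-projection term $(s^\top g)s$: $s^\top g \sim \mathcal{N}(0,\sigma^2)$ and $|s(u)| = 1/\sqrt n$, so the Gaussian tail bound yields $O(\sigma\sqrt{\log n/n})$, matching the second $\sigma$-term. For the average-degree term, \cref{proposition:Matrix-concentration} supplies $|d'| \leq O(n^{-1.5})$ and $\eta \in \gamma(1\pm o(1))$, while $\one^\top g \sim \mathcal{N}(0,n\sigma^2)$, giving $O\big(\tfrac{\sigma}{\gamma}\tfrac{\sqrt{\log n}}{n}\big)$, which is dominated by the cross-term bound below.

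The main obstacle is the cross term $\tfrac{1}{\eta d}Rg$, which is bilinear in the two independent sources of randomness $R$ and $g$; a naive bound such as $\norm{R}\norm{g}/(\gamma\bar d)$ is too lossy. Here I would condition on $R$: for fixed $R$, $e_u^\top Rg = \inner{R_u}{g}$ is Gaussian with variance $\sigma^2\norm{R_u}^2$, where $R_u$ is the $u$-th row of $R$. The key estimate is $\norm{R_u}^2 = \sum_j R_{uj}^2 \leq d_u + np^2 = O(np)$, which holds for all $u$ once the degrees concentrate, using $p \geq \Omega(\log^2 n/n)$; a Gaussian tail bound and a union bound over $u$ then give $|e_u^\top Rg| \leq O(\sigma\sqrt{np\log n})$, so $\tfrac{1}{\gamma\bar d}|e_u^\top Rg| \leq O\big(\tfrac{\sigma}{\gamma}\sqrt{\tfrac{\log n}{np}}\big)$. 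Assembling the four bounds by the triangle inequality and intersecting the underlying high-probability events (the matrix and degree concentration events over $R$, then the Gaussian events over $g$ conditioned on $R$) yields the stated bound with probability $1 - n^{-\Omega(1)}$; the assumption $\sigma \leq O(1/\sqrt{\log n})$ is precisely what keeps the two $\sigma$-terms below $\tfrac{1}{2\sqrt n}$, so that the overall bound falls under $1/\sqrt n$ and is meaningful for separability.
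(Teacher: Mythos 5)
Your proof is correct and takes essentially the same route as the paper's: the paper writes $\frac{1}{\eta}x_1 - s = \frac{1}{\eta}R's + (ss^\top + \frac{1}{\eta}R')g$ with $R' = \frac{1}{d}R + d'\one\one^\top$ (which you merely expand term by term, using $\one^\top s = 0$ exactly as the paper does for the $R's$ term), then bounds $e_u^\top Rs$ by the same Bernoulli--Chernoff argument to get the $\frac{1}{2\sqrt{n}}$ term, and handles the $g$-terms by conditioning on the graph and applying Gaussian tails with a union bound over $u$. The only local difference is in the cross term: you bound the row norm directly via $\norm{R_u}^2 \le d_u + np^2 = O(np)$ using degree concentration, whereas the paper uses $\norm{R'e_u} \le \norm{R'} \le O(1/\sqrt{np})$ from the matrix concentration inequality---both yield the same estimate (one small caveat: your closing remark that the hypotheses force the full bound below $1/\sqrt{n}$ is imprecise, since under $\gamma \ge C\sqrt{\log n/(np)}$ and $\sigma \le O(1/\sqrt{\log n})$ the term $\frac{\sigma}{\gamma}\sqrt{\log n/(np)}$ is only $O(1/\sqrt{\log n})$; this plays no role in establishing the stated inequality).
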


\begin{proof}
By the definition of our convolution matrix, we have
\begin{align*}
     \frac{1}{\eta}x_1 = \frac{1}{\eta}(\eta ss^\top + R')(s+g) = s  + \frac{1}{\eta}R's + (ss^\top+\frac{1}{\eta}R')g
\end{align*}
Thus, we can bound the infinity norm error as:
\begin{align*}
    \norm{s - \frac{1}{\eta}x_1}_\infty = \norm{\frac{1}{\eta}R's + (ss^\top+\frac{1}{\eta}R')g}_\infty \leq \norm{(ss^\top+\frac{1}{\eta}R')g}_\infty + \norm{ \frac{1}{\eta}R's}_\infty
\end{align*}

Since the infinity norm is the maximum absolute value of all entries, we need to bound terms $|e_u^\top R's|/\eta$ and $|e_u^\top(ss^\top+ R'/\eta)g|$ for all $u\in V$. We will start by bounding the Gaussian part of the error. Since $e_u^\top (ss^\top+R'/\eta)g \sim \mathcal{N}(0, \sigma^2\norm{R'e_u/\eta + \inner{s}{e_u}s}^2)$, we have that with high probability $\max_{i\in n}|e_u^\top(ss^\top+R'/\eta)g| \leq O(\sigma\sqrt{\log{n}}\cdot \norm{R'e_u/\eta + \inner{s}{e_u}s})$. Now note that $|\inner{s}{e_u}| = 1/\sqrt{n}$ and by \cref{theorem:matrix-concentration}, $\norm{R'e_u}\leq \norm{R'}\leq O(\sigma \sqrt{\frac{1}{np}})$. Finally, by \cref{proposition:Matrix-concentration}, we have $\eta = \gamma(1\pm o(1))$. Thus, we have 
$$|e_u^\top (ss^\top+R'/\eta)g| \leq O(\sigma\Big(\frac{1}{\gamma}\sqrt{\frac{\log{n}}{np}} +\sqrt{\frac{\log{n}}{n}}\Big))$$
Now, to bound the error term $|e_u^\top R's|$, we have $e_u^\top R's = e_u^\top(\frac{1}{d}R + d'\one\one^\top)s = \frac{1}{d}e_u^\top Rs$ since $s$ is orthogonal to the all-ones vector. Now, WLOG, suppose $u\in S$. Notice that $e_u^\top Rs = \frac{1}{\sqrt{n}}(\sum_{i\in S} R_{u,i} -\sum_{i\in T}R_{u,i})$, where both sums are sums over independent shifted Bernoulli random variables with variance at most $p$. Thus, by applying \cref{theorem:Bernouli-Chernoff}, we have $|e_u^\top R's|\leq \frac{1}{\sqrt{n}}\cdot O(\sqrt{np\log{n}}) = O(\sqrt{p\log{n}})$. Now, using the fact that $d\geq \frac{1}{2}(p+q)n(1-o(1))$, we have $|\frac{1}{d}e_u^\top Rs| \leq O(\frac{1}{\sqrt{n}}\cdot\sqrt{\frac{\log{n}}{np}})$. By our assumption, $\gamma \geq C\sqrt{\frac{\log{n}}{np}})$ for large enough constant $C$. Then, we can take $C$ to be large enough so that 
$$|\frac{1}{\eta}e_u^\top R's| = O\Big(\frac{1}{\gamma(1-o(1))}\sqrt{\frac{\log{n}}{np}}\Big)\leq \frac{1}{2\sqrt{n}}$$
\end{proof}

\subsection{Proof of Main Result}
In this section, we prove \cref{proposition:1-dim-exact}, which we restate as follows:
\begin{proposition}
    Suppose we are given a 1-dimensional centered 2-block CSBM with parameters $n,p,q,\sigma$ and $k=O(\log{n})$ such that $\gamma = \frac{p-q}{p+q} \geq \Omega(k\sqrt{\frac{\log{n}}{np}})$, $p \geq \frac{\log^3{n}}{n}$, and $\sigma \leq O(\frac{1}{\sqrt{\log{n}}})$. Then with probability at least $1-n^{-\Omega(1)}$:
    \begin{align*}
        \norm{\frac{1}{\eta^k}\Tilde{A}^kx - s}_\infty \leq \frac{1}{2\sqrt{n}} + O((\frac{C}{\gamma \sqrt{np}})^k \sigma\sqrt{\log{n}})
    \end{align*}
\end{proposition}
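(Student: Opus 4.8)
The plan is to control the infinity-norm error entrywise. Writing $x = s+g$ and $\Tilde{A} = \eta ss^\top + R'$ with $R' := \Tilde{A}-\eta ss^\top$, I would split $\frac{1}{\eta^k}\Tilde{A}^kx - s$ into a \emph{signal} part $\frac{1}{\eta^k}\Tilde{A}^ks - s$ and a \emph{noise} part $\frac{1}{\eta^k}\Tilde{A}^kg$, and bound $|e_u^\top(\cdot)|$ for every $u\in V$. Expanding the non-commutative power, $\Tilde{A}^k=(\eta ss^\top + R')^k$ is a sum of $2^k$ products of the factors $\eta ss^\top$ and $R'$; the unique order-$0$ product $\eta^k(ss^\top)^k=\eta^k ss^\top$ applied to $s$ reproduces $\eta^k s$, so after normalization it yields $s$ exactly, and all error comes from the $2^k-1$ products containing at least one copy of $R'$, which I group by \emph{order} $\ell\in[k]$, the number of $R'$ factors.

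Next come two reductions. First, since $|d'|\le O(n^{-1.5})$ by \cref{proposition:Matrix-concentration} and $s\perp\one$, replacing $R'$ by $\frac1dR$ in every product changes each contracted term only negligibly, so it suffices to analyze products in $\frac1dR$. Second, and algebraically key, each order-$\ell$ product, once sandwiched between $e_u^\top$ on the left and $s$ on the right, telescopes at every rank-one factor $ss^\top$ into a product of scalars
\[
(e_u^\top R^{a_1}s)(s^\top R^{a_2}s)\cdots(s^\top R^{a_L}s), \qquad a_1+\cdots+a_L=\ell,
\]
where the $a_i\ge 0$ are the lengths of the maximal runs of consecutive $R$'s separated by the $ss^\top$ factors. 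The $k-\ell$ signal factors contribute an overall $\eta^{k-\ell}$, which together with the $d^{-\ell}$ from $\frac1dR$ and the global $\eta^{-k}$ leaves a normalization $(\eta d)^{-\ell}=(\gamma\bar d)^{-\ell}$.

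Two ingredients then bound each scalar factor. For the interior factors I use $|s^\top R^{a}s|\le\norm{R}^{a}\le O(\sqrt{np})^{a}$, which follows from $\norm{R}\le O(\sqrt{np})$ in \cref{proposition:Matrix-concentration}. The leading factor $e_u^\top R^{a_1}s$ is the crux and the main obstacle: it equals $\sum_w R_{w(0),w(1)}\cdots R_{w(a_1-1),w(a_1)}s(w(a_1))$ summed over length-$a_1$ walks $w$ from $u$ in $K_n$, i.e.\ the net message reaching $u$ along incorrect paths. I would bound its $2t$-th moment for $t=O(\log n/\ell)$ by expanding into closed-walk sums, discarding every term containing an edge of multiplicity one (zero mean), and counting the surviving pairing/partition structures as in the combinatorial proof of the CLT via pairings (chapter~8 of \citet{Nica-Speicher:2010}). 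This yields, with probability $1-n^{-\Omega(1)}$ and uniformly over $u$ and $\ell\in[k]$, the bound $|e_u^\top R^\ell s|\le (Cnp\log n)^{\ell/2}/\sqrt n$, where the extra $\log n$ and the density hypothesis $p\ge\Omega(\log^3 n/n)$ are exactly what make a union bound over all $u$ and $\ell$ succeed.

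Finally I assemble. Combining the two ingredients, each order-$\ell$ signal term is at most $\frac{1}{\sqrt n}\bigl(\frac{C\sqrt{np\log n}}{\gamma\bar d}\bigr)^{\ell}=\frac{1}{\sqrt n}\bigl(\frac{C'\sqrt{\log n}}{\gamma\sqrt{np}}\bigr)^{\ell}$ using $\bar d=\Theta(np)$; summing over the $\binom{k}{\ell}$ products of each order and using $\binom{k}{\ell}\le k^\ell/\ell!$ together with the hypothesis $\gamma\ge\Omega(k\sqrt{\log n/(np)})$ turns $\sum_{\ell\ge1}\binom{k}{\ell}(\cdot)^\ell$ into a convergent series bounded by a small constant, so the full signal error is at most $\frac{1}{2\sqrt n}$. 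For the noise part I condition on $R$: each order-$\ell$ product applied to $g$ is, entrywise, a Gaussian whose variance is controlled by the same scalar bounds, so a Gaussian tail plus a union bound over $u$ gives per-term contributions $O\bigl((\frac{C}{\gamma\sqrt{np}})^{\ell}\sigma\sqrt{\log n}\bigr)$; the order-$k$ term dominates and yields the stated $O\bigl((\frac{C}{\gamma\sqrt{np}})^{k}\sigma\sqrt{\log n}\bigr)$, while all lower-order noise terms (including the $s(s^\top g)$ term of size $O(\sigma/\sqrt n)$) are absorbed into $\frac{1}{2\sqrt n}$ since $\sigma\le O(1/\sqrt{\log n})$. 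Adding the signal and noise bounds gives \cref{proposition:1-dim-exact}.
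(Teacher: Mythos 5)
Your proposal is correct and follows essentially the same route as the paper's proof: the same signal/noise decomposition with the binomial expansion of $(\eta ss^\top + R')^k$ grouped by the number of $R'$ factors, the same telescoping of each product into scalar factors $(e_u^\top R^{a_1}s)(s^\top R^{a_2}s)\cdots$, the same reduction from $R'$ to $\frac1d R$ via degree concentration, the same moment/path-counting bound on $|e_u^\top R^\ell s|$ (the paper's \cref{proposition:message-passing-bound} and \cref{lemma:R'-bound}), and the same assembly in which the lower-order noise terms are absorbed into $\frac{1}{2\sqrt n}$ using $\sigma \leq O(1/\sqrt{\log n})$ while the pure order-$k$ term yields the $(\frac{C}{\gamma\sqrt{np}})^k\sigma\sqrt{\log n}$ contribution. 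No gaps beyond what a sketch necessarily omits.
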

Before proving the main proposition, we will use degree-concentration to reduce analyzing the error matrix $R' = \Tilde{A}-\eta ss^\top$ to analysing $\frac{1}{d}R$ instead. This is useful because our analysis crucially uses the fact that our error matrix $R$ has zero-mean Radamacher entries. 
\begin{proposition}\label{proposition:degree-matrix-concentration}
    Let $R' = \Tilde{A} - \eta ss^\top$ and suppose $k \leq O(\log{n})$. Then with high probability, we have that $\norm{R'^k - \frac{1}{d^k}R^k} \leq O(\frac{C^kk\sqrt{\log{n}}}{(\sqrt{pn})^k\cdot\sqrt{n}})$ for a constant $C$.
\end{proposition}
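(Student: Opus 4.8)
The plan is to write $R' = B + E$ with $B := \frac1d R$ and $E := d'\one\one^\top$; by \eqref{eqn: convolution-expression} this is exactly the decomposition $R' = \Tilde{A} - \eta ss^\top = \frac1d R + d'\one\one^\top$. The matrix we must bound, $R'^k - \frac{1}{d^k}R^k = (B+E)^k - B^k$, is precisely the error incurred by the rank-one degree-correction term $E$, and the whole argument rests on $E$ being negligible: \cref{proposition:Matrix-concentration} gives $|d'| \leq O(n^{-3/2})$, so $\norm{E} = |d'|\,\norm{\one\one^\top} = |d'|\, n \leq O(n^{-1/2})$, whereas $\norm{B} = \frac1d\norm{R} \leq O(1/\sqrt{np})$ using $\norm{R}\leq O(\sqrt{np})$ and $d = \Theta(pn)$ (valid since $p > q$). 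In particular $\norm{E} \leq \norm{B}$ because $p \leq 1$.

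First I would apply the telescoping identity
\begin{align*}
    (B+E)^k - B^k = \sum_{i=0}^{k-1} (B+E)^i\, E\, B^{\,k-1-i},
\end{align*}
which writes the difference as a sum of $k$ matrices, each carrying at least one factor of $E$. Since $E = d'\one\one^\top$ has rank one, the $i$-th summand is itself rank one and its spectral norm factorizes as $|d'|\,\norm{(B+E)^i\one}\,\norm{\one^\top B^{k-1-i}}$. I would bound the two boundary vectors submultiplicatively, $\norm{(B+E)^i\one} \leq (\norm{B}+\norm{E})^i\sqrt n$ and $\norm{\one^\top B^{k-1-i}} \leq \norm{B}^{k-1-i}\sqrt n$, so that each summand is at most $|d'|\, n\, (C/\sqrt{np})^{\,k-1}$ for an absolute constant $C$. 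Summing the $k$ terms and inserting $|d'|\, n \leq O(n^{-1/2})$ gives a total of order $k\,C^{k}\,(np)^{-(k-1)/2}\,|d'|\,n$, which decays geometrically like $(\sqrt{np})^{-k}$ and is therefore negligible at the scale of the claim; the $\sqrt{\log n}$ factor is the price paid to make the associated degree-deviation estimate $\norm{R\one}_\infty \leq O(\sqrt{np\log n})$ (via \cref{theorem:Bernouli-Chernoff}) hold simultaneously for all vertices with high probability.

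The payoff of this reduction is that, once the error is transferred onto $\frac{1}{d^k}R^k$, the remaining analysis works with $R$, whose entries are independent and mean-zero, which is exactly the Rademacher-type structure that the subsequent moment and path-counting bound on $e_u^\top R^\ell s$ relies on; the stray $d'\one\one^\top$ pieces would otherwise destroy that structure. This is why it is worth paying the cost of \cref{proposition:degree-matrix-concentration} before attacking the main proposition.

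The step I expect to be the main obstacle is keeping the combinatorial blow-up of cross terms under control. The telescoping sum conceals a geometric factor $\sum_i(\norm{B}+\norm{E})^i$, and if the mixed products $(B+E)^i$ are expanded and bounded carelessly one picks up a spurious $2^k$; the assumption $k = O(\log n)$ together with the fact that every occurrence of $E$ is charged a full factor of the tiny $|d'|$ is what prevents this from dominating. Concretely, after fully expanding into the $2^k-1$ nonzero products one should verify that the single-$E$ terms are genuinely leading and that every additional factor of $E$ costs another $|d'|\,n \leq O(n^{-1/2})$, so the higher-order-in-$E$ contributions are strictly smaller; this bookkeeping, driven entirely by degree concentration, is the crux that renders the bound negligible against the main error term $(C/(\gamma\sqrt{np}))^{k}$ in \cref{proposition:1-dim-exact}.
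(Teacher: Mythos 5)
Your decomposition $R' = B + E$ and the telescoping/expansion bookkeeping are essentially the paper's argument (the paper expands $(M_0+M_1)^k - M_0^k$ binomially with $M_0 = \frac{1}{d\delta}R$, $M_1 = \frac{d'}{\delta}\one\one^\top$, which is the same accounting as your telescoping sum). But there is a genuine quantitative gap: you bound $\norm{E} = |d'|\,n$ using the coarse estimate $|d'| \leq O(n^{-3/2})$ from \cref{proposition:Matrix-concentration}, giving $\norm{E} \leq O(n^{-1/2})$. Plugging this into your own final expression gives a total of order
\begin{align*}
    k\,C^k\,(np)^{-(k-1)/2}\cdot n^{-1/2} \;=\; \frac{k\,C^k\,\sqrt{np}}{(\sqrt{np})^{k}\,\sqrt{n}},
\end{align*}
which carries a factor $\sqrt{np}$ where the claimed bound has $\sqrt{\log n}$. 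Under the regime where this proposition is used ($p \geq \log^3 n / n$, so $np \geq \log^3 n$), your bound \emph{exceeds} the claimed one by a factor $\sqrt{np/\log n} \geq \log n$; your assertion that it is ``negligible at the scale of the claim'' is false. This is not cosmetic: with your bound, the $k=1$ error term is $O(1/\sqrt{n})$, which already swamps the target $\frac{1}{\sqrt{n}}\sqrt{\log n/(np)}$ in \cref{lemma:R'-bound}, so \cref{corollary:R'-to-R-reduction} would become useless and the expansion in \cref{proposition:1-dim-exact} (where terms like $\eta^{-\ell}|e_u^\top R'^{a_1}s|\cdots$ must stay summable against $\binom{k}{\ell}$) would no longer close.

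The missing ingredient is the first step of the paper's proof: a \emph{sharper} concentration bound for the average degree, obtained by applying \cref{theorem:Bernouli-Chernoff} to the edge count with $t = \Theta(n\sqrt{p\log n})$ rather than $t = \Theta((p+q)n^{1.5})$. This yields $|d'| \leq O\big(\sqrt{\log n}/(n^2\sqrt{p})\big)$ with high probability, hence $\norm{E} \leq O\big(\sqrt{\log n}/(n\sqrt{p})\big) = O\big(\sqrt{\log n}/(\sqrt{n}\sqrt{np})\big)$, which is smaller than your estimate by exactly the factor $\sqrt{np/\log n}$ you are missing. With this replacement, your telescoping argument goes through verbatim and produces the stated bound. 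Incidentally, your explanation that the $\sqrt{\log n}$ ``is the price paid'' for a union bound on $\norm{R\one}_\infty$ over all vertices is a misdiagnosis — no such per-vertex bound appears in the argument; the $\sqrt{\log n}$ comes from the Chernoff concentration of the single scalar $|E|$ (equivalently of $d$), and noticing that the coarse bound in \cref{proposition:Matrix-concentration} is not tight enough is precisely the crux of this proposition.
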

\begin{proof}
    Before beginning the proof, we first need to give a tighter degree-concentration bound to show that with high probability, $|d'|\leq O(\frac{\sqrt{\log{n}}}{n^2\sqrt{p}})$. By applying \cref{theorem:Bernouli-Chernoff} with $t = \Theta(n\sqrt{p\log{n}})$.
    \begin{align*}
        \Pr{||E|-\frac{1}{4}(p+q)n^2| > t} \leq \exp(-\Omega\Big(\frac{n^2p\log{n}}{(p+q)n^2}\Big)) = n^{-\Omega(1)}
    \end{align*}
     Since $d = 2|E|/n$ and with high probability at, $|E| \geq \frac14(p+q)n^2 - t \geq \frac14(p+q)n^2(1-o(1))$, we have $d\geq \frac{1}{2}(p+q)n(1-o(1)) = \Omega(np)$. Putting these together, we get the bound:
    \begin{align*}
        |d'| = \frac{|d - \frac{1}{2}(p+q)n|}{nd} = 2\frac{||E|-\frac{1}{4}(p+q)n^2|}{n^2d} \leq O(\frac{n\sqrt{p\log{n}}}{n^3p}) \leq O(\frac{\sqrt{\log{n}}}{n^2\sqrt{p}})
    \end{align*}
    
    Now, recall from \cref{eqn: convolution-expression} that $R' = \frac1dR + d'\one\one^\top$. By \cref{proposition:Matrix-concentration}, we have that with high probability, $\frac{1}{d}\norm{R} \leq \sqrt{\frac{C}{np}}$ for some constant $C$. We will let $\delta := \sqrt{\frac{C}{np}}$ be this upper bound. Now consider the matrix $(\frac{1}{\delta}R')^k$. We will let $M_0 = \frac{1}{d\delta}R$ and $M_1 = \frac{d'}{\delta}\one\one^\top$. Then, we have $\norm{M_0}\leq 1$, and $\norm{M_1}\leq \frac{d'n}{\delta}$. Since $|d'|\leq \frac{\sqrt{\log{n}}}{n^{2}\sqrt{p}}$ with high probability, we will assume this event occurs. Thus, we have $\norm{M_1}\leq O(\frac{n\sqrt{np\log{n}}}{n^2\sqrt{p}}) \leq O(\sqrt{\frac{\log{n}}{n}})$.
    Finally, we have $(\frac{1}{\delta}R')^k = (M_0 + M_1)^k$. We are going to expand this out, and so let us introduce some notation. Let $\binom{[k]}{\ell} = \{i:=(i_1, i_2,...i_k)\in \{0,1\}^k: i_1+i_2+...i_k = \ell\}$. That is, it's the set of length $k$ binary sequences with exactly $\ell$ terms equaled to $1$. Now, we have
    \begin{align*}
        (M_0 + M_1)^k - M_0^k &= \sum_{\ell=0}^k \sum_{i\in \binom{[k]}{\ell}}\prod_{j=1}^kM_{i_j} - M_0^k\\
        &= \sum_{\ell=1}^k \sum_{i\in \binom{[k]}{\ell}}\prod_{j=1}^kM_{i_j}
    \end{align*}
    For any matrices $M,N$, their spectral norms satisfy $\norm{MN}\leq \norm{M}\norm{N}$. Thus, for each $i\in \binom{[k]}{\ell}$, we have $\norm{\prod_{j=1}^kM_{i_j}} \leq \norm{M_1}^\ell$ since $\norm{M_0}\leq 1$. Thus, we have
    \begin{align*}
        \norm{(M_0 + M_1)^k - M_0^k }&\leq \sum_{\ell=1}^k\binom{k}{\ell}\norm{M_{1}}^\ell\\
        &= (1+\norm{M_1})^k - 1\\
        &\leq e^{k\norm{M_1}} - 1\\
        &\leq 2k\norm{M_1} 
    \end{align*}
    where the second last inequality follows from \cref{lemma:exp-fact} because $k\norm{M_1} \leq k\sqrt{\frac{\log{n}}{n}} \leq 1$ by our assumptions on $k$. Finally, we have 
    \begin{align*}
        \norm{R'^k - \frac{1}{d^k}R^k}&= \delta^k\norm{(M_0 + M_1)^k - M_0}\\
        &\leq 2\delta^kk\norm{M_1} \\
        &\leq O(\frac{k C^k\sqrt{\log{n}}}{(\sqrt{pn})^k\cdot \sqrt{n}})
    \end{align*}
    For a constant $C$.
\end{proof}
As a corollary, we can show the following:
\begin{corollary}\label{corollary:R'-to-R-reduction}
    Let $x$ and $y$ be unit vectors. Then $|x^\top R'^ky| \leq |x^\top(\frac{1}{d^k}R^k)y| + O(\frac{C^kk\sqrt{\log{n}}}{(\sqrt{pn})^k\cdot \sqrt{n}}))$ for some constant $C$.
\end{corollary}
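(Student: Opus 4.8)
The plan is to reduce this statement directly to \cref{proposition:degree-matrix-concentration} using the elementary fact that a bilinear form against unit vectors is controlled by the spectral norm. Concretely, I would first write $R'^k = \frac{1}{d^k}R^k + \bpar{R'^k - \frac{1}{d^k}R^k}$, and apply the triangle inequality to split the quantity of interest as
\begin{align*}
    |x^\top R'^k y| \leq \abs{x^\top \tfrac{1}{d^k}R^k y} + \abs{x^\top \bpar{R'^k - \tfrac{1}{d^k}R^k} y}.
\end{align*}

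The first term is exactly the quantity appearing on the right-hand side of the claimed bound, so it remains only to control the second term. Here I would invoke the standard inequality that for any matrix $M$ and unit vectors $x, y$, we have $\abs{x^\top M y} \leq \norm{x}\,\norm{M}\,\norm{y} = \norm{M}$, which follows immediately from Cauchy-Schwarz together with the definition of the spectral norm. Applying this with $M = R'^k - \frac{1}{d^k}R^k$ gives $\abs{x^\top \bpar{R'^k - \frac{1}{d^k}R^k} y} \leq \norm{R'^k - \frac{1}{d^k}R^k}$.

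Finally I would cite \cref{proposition:degree-matrix-concentration}, which guarantees that with high probability $\norm{R'^k - \frac{1}{d^k}R^k} \leq O\bpar{\frac{C^k k \sqrt{\log n}}{(\sqrt{pn})^k \cdot \sqrt{n}}}$ for a constant $C$, and assuming we are on this high-probability event, substitute this bound into the triangle inequality above to conclude. Since the corollary merely transports the operator-norm estimate of the proposition to an arbitrary bilinear form, there is no genuine technical obstacle; the only thing to be careful about is that the high-probability event from the proposition is the same one under which the corollary is asserted, so the statement holds with the same probability $1 - n^{-\Omega(1)}$ rather than deterministically.
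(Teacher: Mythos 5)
Your proposal is correct and follows essentially the same argument as the paper's proof: the same decomposition $R'^k = \frac{1}{d^k}R^k + \bigl(R'^k - \frac{1}{d^k}R^k\bigr)$, the triangle inequality, the bound $|x^\top M y| \leq \norm{M}$ for unit vectors, and an appeal to \cref{proposition:degree-matrix-concentration}. Your closing remark that the inequality holds only on the high-probability event inherited from that proposition is a point the paper's statement leaves implicit, so it is a welcome clarification rather than a deviation.
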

\begin{proof}
    This follows from the basic properties of the spectral norm
    \begin{align*}
        |x^\top R'^ky| &= |x^\top(\frac{1}{d^k}R^k + R'^k - \frac{1}{d^k}R^k)y|\\
        &\leq |x^\top(\frac{1}{d^k}R^k)y| + \norm{x}\norm{y}\norm{R'^k - \frac{1}{d^k}R^k}\\
        &\leq |x^\top(\frac{1}{d^k}R^k)y| + O(\frac{C^kk\sqrt{\log{n}}}{(\sqrt{pn})^k\cdot \sqrt{n}}))\quad \text{by \cref{proposition:degree-matrix-concentration}}
    \end{align*}
\end{proof}

\cref{corollary:R'-to-R-reduction} is key to proving our main technical lemma, the proof of which we will defer to the next section. 

\begin{lemma}\label{lemma:R'-bound}
    Given $p \geq \frac{\log^3{n}}{n}$, we have that with probability $1-n^{-\Omega(1)}$, for all $u\in V$ and all $k \in \{1,2,...O(\log{n})\}$
    \begin{align*}
        |e_u^\top R'^ks| \leq \frac{1}{\sqrt{n}}\Big(C\sqrt{\frac{\log{n}}{pn}}\Big)^k
    \end{align*}
    for some constant $C$
\end{lemma}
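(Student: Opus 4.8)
The plan is to combine the degree-concentration reduction that is already available with a high-moment path-counting bound on the ``pure noise'' quantity $e_u^\top R^k s$. By \cref{corollary:R'-to-R-reduction} applied to the unit vectors $e_u$ and $s$, on the high-probability event of \cref{proposition:degree-matrix-concentration} we have, for every $u$ and every $k=O(\log n)$,
$$|e_u^\top R'^k s| \le \frac{1}{d^k}|e_u^\top R^k s| + O\!\left(\frac{C^k k\sqrt{\log n}}{(\sqrt{pn})^k\sqrt n}\right).$$
Since $p\ge \log^3 n/n$ and $k=O(\log n)$, the additive term is already of the target form $\tfrac{1}{\sqrt n}\bigl(C'\sqrt{\log n/(pn)}\bigr)^k$, the polynomial factor $k\sqrt{\log n}$ being absorbed into the base constant because $(\log n)^{k/2}$ dominates $k$ over the relevant range. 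On the same event \cref{proposition:Matrix-concentration} gives $d=\Omega(np)$, so the whole lemma reduces to the clean estimate
$$|e_u^\top R^k s| \le \frac{(Cnp\log n)^{k/2}}{\sqrt n}$$
holding with probability $1-n^{-\Omega(1)}$ uniformly in $u$ and $k$; dividing by $d^k=\Omega(np)^k$ then produces exactly $\tfrac{1}{\sqrt n}\bigl(C'\sqrt{\log n/(np)}\bigr)^k$.

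To prove the clean estimate I would fix $u,k$ and estimate the even moment of $Y:=e_u^\top R^k s=\sum_{w}\prod_{j=1}^{k}R_{w(j-1),w(j)}\,s(w(k))$, the sum running over length-$k$ walks $w$ in the complete graph on $[n]$ rooted at $w(0)=u$. I would take $t=\Theta(\log n/k)$, so that $tk=\Theta(\log n)$ (this forces $t\ge 1$ precisely because $k=O(\log n)$). Expanding $Y^{2t}$ gives a sum over $2t$-tuples of rooted walks; since the off-diagonal entries of $R$ are independent, symmetric, and mean zero, a term survives in expectation only if every unordered pair $\{a,b\}$ is traversed at least twice across the $2t$ walks. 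Using $|R_{a,b}|\le 1$ and $\E[R_{a,b}^2]\le p$, each distinct traversed edge contributes a factor at most $p$, so a surviving configuration with $e$ distinct edges contributes at most $p^e$, while the $2t$ terminal weights contribute $\prod_r|s(w_r(k))|=n^{-t}$. Because all walks share the root $u$, the union multigraph is connected, giving $v\le e+1$ vertices, and the at-least-twice condition forces $e\le tk$.

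The dominant contribution is the ``doubled-tree'' case $e=tk$, $v=tk+1$, where every edge is traversed exactly twice. Here each configuration is encoded by a pairing of the $2tk$ steps into $tk$ matched pairs (in the spirit of the pairing-partition proof of the CLT in Chapter 8 of \citet{Nica-Speicher:2010}); the number of pairings is at most $(2tk-1)!!\le (C_0 tk)^{tk}$, and choosing the $tk$ fresh vertices costs at most $n^{tk}$, so the leading term is at most $n^{-t}(C_0 tk)^{tk}(np)^{tk}$. Markov's inequality $\Pr{|Y|\ge\lambda}\le \E[Y^{2t}]/\lambda^{2t}$ with $\lambda=(Cnp\log n)^{k/2}/\sqrt n$ then gives a ratio of $\bigl(C_0 tk/(C\log n)\bigr)^{tk}$; since $tk=\Theta(\log n)$, a large enough constant $C$ makes this at most $2^{-tk}=n^{-\Omega(1)}$ with a tunable exponent, which survives a union bound over the $n$ choices of $u$ and the $O(\log n)$ values of $k$.

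The main obstacle is the combinatorial heart of the moment computation: bounding the number of surviving walk-tuples by $(C_0 tk)^{tk}n^{tk}$ and verifying that all non-doubled-tree configurations are genuinely lower order. This is a Füredi--Komlós/genus-type accounting, and it is exactly where the stronger density $p\ge\Omega(\log^3 n/n)$ is used. For instance, a configuration in which one edge is traversed four times (rather than two edges twice each) comes with $O((tk)^2)=O(\log^2 n)$ extra choices of which steps merge, but loses a factor $np$ from having one fewer fresh vertex; the net effect is $O(\log^2 n/(np))=o(1)$ precisely when $np\ge\log^3 n$. Surplus (cycle-closing) edges are suppressed analogously by factors of order $tk/n=o(1)$. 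Thus the doubled-tree term dominates and the shape count stays at $(\log n)^{\Theta(tk)}$ rather than scaling with $n$, which is what makes the final bound close.
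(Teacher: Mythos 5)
Your proposal follows essentially the same route as the paper: the reduction via \cref{corollary:R'-to-R-reduction} and \cref{proposition:Matrix-concentration}, the even-moment method on $e_u^\top R^k s$ with $t=\Theta(\log n/k)$ (the paper's \cref{proposition:message-passing-bound}), Markov's inequality with the same threshold, and the final union bound over $u$ and $k$ all coincide with the paper's argument, as does your observation that the additive term from the degree-concentration corollary is already of the target form. The one piece you leave as the ``main obstacle'' is exactly what the paper supplies in \cref{proposition:path-counting}: rather than isolating the doubled-tree case and arguing that merged or cycle-closing configurations are lower order, the paper bounds the number of surviving walk-tuples with $\ell$ distinct edges uniformly by $\binom{2tk}{2\ell}(2\ell-1)!!\,\ell^{2tk-2\ell}n^{\ell}$ via an explicit edge-grouping and vertex-assignment procedure, and then shows the resulting sum over $\ell$ is dominated by the $\ell=tk$ term precisely under $np\gtrsim\log^3 n$ --- the same suppression mechanism you describe heuristically, so your sketch is on track and would be completed by that counting lemma.
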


Now, we are ready to prove our main result, \cref{proposition:1-dim-exact}. The proof is similar to that of \cref{proposition:degree-matrix-concentration} but requires the use of \cref{lemma:R'-bound} to obtain a sharper bound on the error terms than by simply applying the spectral norm bound as we did in the degree-concentration proof. 

\begin{proof} (\cref{proposition:1-dim-exact})
    Just like in the $k=1$ case, we express $\Tilde{A}$ as $\eta ss^\top + R'$ and decompose our convolution vector into the following terms:
    \begin{align*}
        \frac{1}{\eta^k}\Tilde{A}^kx &= \frac{1}{\eta^k}\Tilde{A}^k(s+g)\\
        &= s + (\frac{1}{\eta^k}\Tilde{A}^k - ss^\top)s + \frac{1}{\eta^k}\Tilde{A}^kg
    \end{align*}
    In order to bound $\norm{s - \frac{1}{\eta^k}\Tilde{A}^kx}_\infty$, we will bound, for all $u\in V$, the error terms $|e_u^\top (\frac{1}{\eta^k}\Tilde{A}^k - ss^\top)s|$ and $|e_u^\top \frac{1}{\eta^k}\Tilde{A}^kg|$. Similar to in the proof of \cref{proposition:degree-matrix-concentration}, we start by expanding out $\frac{1}{\eta^k}\Tilde{A}^k$. Let $Y_0 =   ss^\top$ and $Y_1 = R'$. Then we have
    \begin{align*}
        \frac{1}{\eta^k}\Tilde{A}^k &= \frac{1}{\eta^k}(\eta Y_0 + Y_1)^k\\
        &= \frac{1}{\eta^k}\sum_{\ell=0}^k \eta^{k-\ell}\sum_{i \in \binom{[k]}{\ell}} \prod_{j=1}^k Y_{i_j}\\
        &= \sum_{\ell=0}^k \eta^{-\ell}\sum_{i \in \binom{[k]}{\ell}} \prod_{j=1}^k Y_{i_j}
    \end{align*}
    Note that $(ss^\top)^2 = ss^\top$, which means that the first term in our summation (i.e. the $\ell=0$ terms) is simply $ss^\top$. Now we start by bounding the error terms involving $s$. 
    \begin{align*}
        |e_u^\top (\frac{1}{\eta^k}\Tilde{A}^k - ss^\top)s| &= |\sum_{\ell=1}^k \eta^{-\ell}\sum_{i \in \binom{[k]}{\ell}} e_u^\top\prod_{j=1}^k Y_{i_j}s|\\
        &\leq \sum_{\ell=1}^k \eta^{-\ell}\sum_{i \in \binom{[k]}{\ell}} |e_u^\top\prod_{j=1}^k Y_{i_j}s|
    \end{align*}
    For each $i\in \binom{[k]}{\ell}$, we have
    \begin{align*}
         |e_u^\top\prod_{j=1}^k Y_{i_j}s| &= |e_u^\top R'^{a_1}s\cdot s^\top R'^{a_2}s\cdot ...s^{\top}R'^{a_{L}}s|
    \end{align*}
    where $a_1,...a_L$ are non-negative integers satisfying $a_1+...a_L = \ell$. This is because the product $\prod_{j=1}^kY_{i_j}$ has exactly $\ell$ terms of the form $R'$ and $ss^\top$ for the rest of the terms. By \cref{proposition:Matrix-concentration}, the matrix $R'$ has spectral norm at most $\frac{C_1}{\sqrt{np}}$ for some constant $C_1$ with high probability. For each term of the form $|s^\top R'^{a_j}s|$ for $j > 1$, we bound it by $|s^\top R'^{a_j}s| \leq \norm{R'}^{a_j} \leq \Big(\frac{C_1}{\sqrt{np}}\Big)^{a_j}$. For the first term $|e_u^\top R'^{a_1}s|$, we apply \cref{lemma:R'-bound} and give $|e_u^\top R'^{a_1}s| \leq \frac{1}{\sqrt{n}}\Big(C_2\sqrt{\frac{\log{n}}{np}}\Big)^{a_1}$ for some constant $C_2$.
    Thus, we have:
    \begin{align*}  
    |e_u^\top\prod_{j=1}^k Y_{i_j}s| \leq \frac{1}{\sqrt{n}}\Big(\frac{\max(C_1,C_2)}{\sqrt{np}}\Big)^{a_1+...a_L}\sqrt{\log{n}}^{a_1} \leq \frac{1}{\sqrt{n}}\Big(C\sqrt{\frac{\log{n}}{np}}\Big)^{\ell}
    \end{align*}
    where $C$ is a large enough constant such that $C \geq \max(C_1, C_2)$. Now, we let $\rho := \frac{C}{\eta}\sqrt{\frac{\log{n}}{np}}$. Assuming that $\gamma \geq 9Ck\sqrt{\frac{\log{n}}{np}}$ and with high probability, $\eta \in \gamma(1\pm o(1))$, we have $\rho \leq \frac{1}{8k}$ with high probability. Thus, we have
    \begin{align*}
        |e_u^\top (\frac{1}{\eta^k}\Tilde{A}^k - ss^\top)s|
        &\leq \sum_{\ell=1}^k \eta^{-\ell}\sum_{i \in \binom{[k]}{\ell}} |e_u^\top\prod_{j=1}^k Y_{i_j}s|\\
        &\leq \frac{1}{\sqrt{n}}\sum_{\ell=1}^k \binom{k}{\ell}\eta^{-\ell}\Big(C\sqrt{\frac{\log{n}}{np}}\Big)^\ell\\
        &= \frac{1}{\sqrt{n}}\sum_{\ell=1}^k\binom{k}{\ell}\rho^\ell\\
        &= \frac{1}{\sqrt{n}}((1+\rho)^k-1)\\
        &\leq \frac{1}{4\sqrt{n}}\quad \text{by \cref{lemma:exp-fact} and $\rho \leq \frac{1}{8k}$}
    \end{align*}
    Now, we bound the Gaussian part of the error:
    \begin{align*}
        e_u^\top \Tilde{A}g = \sum_{\ell=0}^{k-1} \eta^{-\ell}\sum_{i \in \binom{[k]}{\ell}} e_u^\top\prod_{j=1}^{k} Y_{i_j}g + \eta^{-k}e_u^\top R'^kg
    \end{align*}
     For each $i\in \binom{[k]}{\ell}$ where $\ell < k$, we have
    \begin{align*}
        |e_u^\top\prod_{j=1}^{k} Y_{i_j}g| = |e_u^\top R'^{a_1}s\cdot s^\top R'^{a_2}s\cdot...s^\top R'^{a_L}g|
    \end{align*}
    where $a_1+a_2+...a_L = \ell$. This is because when $\ell < k$, there is at least one $Y_{i_j}$ term that is equalled to $ss^\top$, which means in the product, the leftmost term must be $e_u^\top R'^{a_1}s$ for some $a_1$ and the right most term must be $s^\top R'^{a_L}g$ for some $a_L$. Once again, we use the fact that $|s^\top R'^{a_j}s|\leq \norm{R'}^{a_j} \leq (\frac{C_1}{\sqrt{np}})^{a_j}$ for all $j > 1$, and by \cref{lemma:R'-bound}, we have $|e_u^\top R'^{a_1}s|\leq (C_2\sqrt{\frac{\log{n}}{np}})^{a_1}$ where $C_1,C_2$ are absolute constants. To bound the last term, we have that $s^\top R'^a g\sim \mathcal{N}(0, \sigma^2\norm{R'^a s}^2)$ for all $a > 0$. Thus, with high probability, we have that $|s^\top R'^{a_L} g| \leq O(\sigma\norm{R'^{a_L} s}\sqrt{\log{n}}) \leq O(\sigma(\frac{C_1}{\sqrt{np}})^{a_L}\sqrt{\log{n}})$. Thus, we can apply the same bounds as we did for the error term involving $s$ but now with an extra $\sigma\sqrt{\log{n}}$ factor:
    \begin{align*}
        |e_u^\top\prod_{j=1}^{k} Y_{i_j}g| &= |e_u^\top R'^{a_1}s\cdot s^\top R'^{a_2}s\cdot...s^\top R'^{a_L}g|\\
        &\leq O(\frac{\sigma\sqrt{\log{n}}}{\sqrt{n}}\cdot \sum_{\ell=0}^{k-1}\binom{k}{\ell}\rho^\ell)\\
        &\leq O(\frac{\sigma\sqrt{\log{n}}}{\sqrt{n}}\cdot (1+\rho)^k)\\
        &\leq O(\frac{\sigma\sqrt{\log{n}}}{\sqrt{n}})\quad \text{by \cref{lemma:exp-fact}}\\
        &\leq O(\frac{1}{4\sqrt{n}})
    \end{align*}
    Where the last inequality follows assuming $\sigma \leq \frac{1}{C'\sqrt{\log{n}}}$ for some large enough constant $C'$. Finally, to bound the $k^{th}$ order term, we have that with high probability, $\eta^{-k}|e_u^\top R'^{k}g| \leq (\frac{C_1}{\eta \sqrt{np}})^k \sigma\sqrt{\log{n}}$. Putting everything together, and using the fact that $\eta \geq \gamma(1- o(1))$ with high probability, we have that for all $u\in V$, 
    \begin{align*}
        |e_u^\top s - \frac{1}{\eta^k}\Tilde{A}^kx| \leq \frac{1}{\eta^k}(|e_u^\top(\Tilde{A}^k - ss^\top)s| + |e_u^\top \Tilde{A}^kg|) \leq \frac{1}{2\sqrt{n}} + \Big(\frac{C}{\gamma \sqrt{np}}\Big)^k \sigma\sqrt{\log{n}})
    \end{align*}
    for some absolute constant $C$
\end{proof}
Given \cref{proposition:1-dim-exact}, we can derive \cref{theorem:exactly-recovery-k} by using the standard reduction in \cref{lemma:csbm-reduction}.

\begin{proof}(\cref{theorem:exactly-recovery-k})
    By applying \cref{lemma:csbm-reduction}, we see that given $m$-dimensional features with feature matrix $X$, we can transform it to a centered 1-dimensional feature vector $x = Xw + b = s+g$ where $g\sim \mathcal{N}(0, \sigma'^2I)$ and $\sigma' = \frac{4\sigma}{\sqrt{n}\norm{\nu-\mu}}$. Thus, we have $\norm{\nu - \mu} = \frac{4\sigma}{\sigma'\sqrt{n}}$. By \cref{proposition:1-dim-exact}, our 1-dimensional features become linearly separable as long as $\Big(\frac{C}{\gamma \sqrt{np}}\Big)^k \sigma'\sqrt{\log{n}} < \frac{1}{2\sqrt{n}}$ for some absolute constant $C$. Given our expression of $\sigma'$ in terms of the mean distance, this is equivalent to
    \begin{align*}
        \norm{\nu-\mu}\geq \sigma\Big(\frac{C}{\gamma \sqrt{np}}\Big)^k\sqrt{\log{n}}
    \end{align*}
    In \cref{proposition:1-dim-exact}, we also needed to assumed that $\sigma' \leq O(\frac{1}{\sqrt{\log{n}}})$, which implies that
    \begin{align*}
        \norm{\nu-\mu} \geq \Omega(\sigma\sqrt{\frac{\log{n}}{n}})
    \end{align*}
\end{proof}

\subsection{Message Passing Error Bound}
In this section, we will prove our main technical lemma, \cref{lemma:R'-bound}. By \cref{corollary:R'-to-R-reduction}, it suffices to control the term $|e_u^\top R^ks|$ in order to control $|e_u^\top R'^k s|$. Since this is the sum over many dependent random variables, we cannot easily compute its moment generating function. Instead, we will compute the moments directly. In particular, we will apply Markov's inequality using the $2t^{th}$ moment of this random variable $\E[(e_u^\top R^k s)^{2t}]$ for an appropriately chosen $t$. We now state our main result for this section as follows:

\begin{proposition}\label{proposition:message-passing-bound}
    Suppose $R$ is an $n\times n$ symmetric random matrix where for each $1\leq i\leq j$, $R_{i,j} = 1-p_{i,j}$ with probability $p_{i,j}$ and $-p_{i,j}$ with probability $1-p_{i,j}$ and are independent. Let $ p = \max_{i,j}p_{i,j}$, and suppose $p \geq \Omega(\frac{\log^3{n}}{n})$. Then we have 
    \begin{align*}
        \Pr{|e_u^\top R^ks| > \frac{1}{\sqrt{n}}(C\sqrt{np\log{n}})^{k}} \leq \exp(-\Omega(C\log{n}))
    \end{align*}
\end{proposition}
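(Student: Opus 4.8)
The plan is to bound the $2t$-th moment $\E[(e_u^\top R^k s)^{2t}]$ for a well-chosen $t$ and then apply Markov's inequality $\Pr{|e_u^\top R^k s| > \lambda} \le \E[(e_u^\top R^k s)^{2t}]/\lambda^{2t}$ with $\lambda = \frac{1}{\sqrt n}(C\sqrt{np\log n})^k$. First I would expand $e_u^\top R^k s = \sum_w R_{w(0)w(1)}\cdots R_{w(k-1)w(k)}\,s(w(k))$ over walks $w$ of length $k$ in the complete graph starting at $w(0)=u$, so that the moment becomes a sum over $2t$-tuples of such walks. Since the entries of $R$ are independent and mean zero, a tuple contributes nothing unless every undirected edge appearing in the union of the $2t$ walks is traversed at least twice; this is the analogue of the pairing constraint in the CLT proof of \citet{Nica-Speicher:2010}. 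For a surviving tuple whose union graph is connected (all walks share the root $u$) with $V'$ non-root vertices and $E'$ distinct edges, I would bound the expected product of entries by $p^{E'}$ (using $|R_{ij}|\le 1$, so every higher moment is at most $\E[R_{ij}^2]\le p$) and the $s$-factors by $n^{-t}$ in absolute value.

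Next I would group tuples by their \emph{shape} (isomorphism type as a rooted labeled walk-tuple). The number of embeddings of a shape into $[n]$ is at most $n^{V'}$, so the moment is at most $\sum_{\text{shapes}} n^{V'} p^{E'} n^{-t}$. Writing the excess multiplicity $m := 2tk - 2E' \ge 0$ and the cyclomatic number $c := E' - V' \ge 0$, and using $E' \le tk$, each shape contributes $(np)^{tk}\,(np)^{-m/2}\,n^{-c}\,n^{-t}$. The dominant contribution comes from the ``double-tree'' shapes with $m = c = 0$ (each edge traversed exactly twice, union graph a tree with $V'=E'=tk$), and the factors $(np)^{-m/2}$ and $n^{-c}$ penalize every extra edge-repetition and every independent cycle.

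The combinatorial heart is bounding the number of shapes. Following a Füredi--Komlós-style encoding, I would classify each of the $2tk$ steps as innovative (reaching a new vertex, of which there are exactly $V'$) or closing, and argue that the leading double-tree shapes are counted by a Catalan-type quantity at most $C_0^{tk}$, while shapes of excess $(m,c)$ cost an additional factor at most $(C_0 tk)^{m+c}$ for locating the repeated or cycle-closing steps. Substituting into Markov's inequality with $t = \Theta(\log n / k)$ (so $tk = \Theta(\log n)$) and the target $\lambda$, the leading ratio is $\big(C_0/(C^2\log n)\big)^{tk}$, which is super-polynomially small once $C$ is a fixed constant.

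The main obstacle I expect is controlling the sub-leading shapes, and this is exactly where the stronger density hypothesis $p \ge \Omega(\log^3 n / n)$ is needed. For a fixed excess $(m,c)$ the combined factor is of order $\big(C_0 tk\,(np)^{-1/2}\big)^{m}\,\big(C_0 tk / n\big)^{c}$; since $tk \asymp \log n$ and $np \ge \log^3 n$, we have $tk\,(np)^{-1/2} = O(1/\sqrt{\log n}) = o(1)$, so the sums over $m$ and $c$ are geometric and dominated by $m=c=0$. Thus the $\sqrt{\log n}$ per step in $\lambda$ together with the density assumption provide exactly the slack needed to absorb the shape-count entropy. Finally, since the bound holds for fixed $u$ with probability $1 - \exp(-\Omega(C\log n)) = 1 - n^{-\Omega(C)}$, taking $C$ large and a union bound over the $n$ vertices and the $O(\log n)$ values of $k$ upgrades it to the uniform statement needed for \cref{lemma:R'-bound}.
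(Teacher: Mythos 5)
Your overall strategy coincides with the paper's: expand $e_u^\top R^k s$ as a sum over walks of length $k$ rooted at $u$, bound the $2t$-th moment with $t=\Theta(\log n/k)$ using the constraint that every edge in the union of the $2t$ walks must appear at least twice, bound each surviving tuple by $p^{E'}n^{-t}$ times an embedding count, and finish with Markov's inequality; the paper's \cref{proposition:path-counting} plays exactly the role of your shape count, and both arguments invoke $p\geq\Omega(\log^3 n/n)$ for the same purpose of making the sum over defects converge geometrically. Your bookkeeping via the cyclomatic number $c=E'-V'$ (embedding count $n^{V'}$ rather than the paper's cruder $n^{E'}$) is a legitimate refinement, not a discrepancy.

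The genuine gap is the combinatorial claim at the heart of your argument: that the double-tree shapes ($m=c=0$) number at most $C_0^{tk}$. Catalan-type counts of this form hold for a \emph{single closed} walk (the trace setting of F\"uredi--Koml\'os), but they fail for a family of $2t$ \emph{open} walks that share only the root: edges may be paired \emph{across} different walks, and the number of such cross-matchings is factorial in $t$, not exponential. This is already visible at $k=1$: $e_u^\top Rs=\sum_v R_{uv}s(v)$ has variance roughly $p$, so $\E[(e_u^\top Rs)^{2t}]\approx (2t-1)!!\,p^t$; correspondingly the double-star shapes are exactly the perfect matchings of the $2t$ walks, of which there are $(2t-1)!!\approx (2t/e)^t\gg C_0^t$ once $t=\Theta(\log n)$. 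This is precisely why the paper's count, $\binom{2tk}{2\ell}(2\ell-1)!!\,\ell^{2tk-2\ell}n^\ell$, carries the pairing factor $(2\ell-1)!!$, and why the target deviation has a $\sqrt{\log n}$ per convolution: the slack $(\log n)^{tk}$ in $\lambda^{2t}$ exists to absorb a shape count of order $(2tk)^{tk}\approx(C_1\log n)^{tk}$. Your proof is repairable within your own framework: replace $C_0^{tk}$ by a pairing-type bound $(C'tk)^{tk}$, whereupon the Markov ratio becomes $(C''/C^2)^{tk}$ rather than your claimed $(C_0/(C^2\log n))^{tk}$; since $tk=\Theta(\log n)$ this is still $\exp(-\Omega(\log n))$ for $C$ large, so the proposition survives. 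But as written, the assertion that the leading shape count is exponential in $tk$ is false, and any accounting built on it (in particular the claimed super-polynomial decay rate) needs to be corrected.
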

\begin{proof}
    The term $e_u^\top R^k s$ can be written as the sum over all walks of length $k$ originating from $u$. Let $\W_{u,k}$ denote the set of all walks of length $k$ originating at $u$. For brevity, we will use $\W$ to denote $\W_{u,k}$. For each $w\in \W$, let $w(j)$ be the $j^{th}$ vertex in the walk. Note that $w(0) = u$ always. Then we have
    \begin{align*}
        e_u^\top R^k s &= \sum_{w\in \W}\prod_{j=1}^kR_{w(j-1), w(j)}s(w(k))
    \end{align*}
    Thus, we have
    \begin{align*}
        \E[(e_u^\top R^{k}s)^{2t}] &= \sum_{w_1,w_2...w_{2t}\in \W}\E[\prod_{i=1}^{2t}\prod_{j=1}^kR_{w_i(j-1), w_i(j)}s(w_i(k))]\\
        &\leq \sum_{w_1,w_2...w_{2t}\in \W}|\E[\prod_{i=1}^{2t}\prod_{j=1}^kR_{w_i(j-1), w_i(j)}]|\cdot\prod_{i=1}^{2t}|s(w_i(k))| \\
        &= \frac{1}{n^t}\sum_{w_1,w_2...w_{2t}\in \W}|\E[\prod_{i=1}^{2t}\prod_{j=1}^kR_{w_i(j-1), w_i(j)}]|
    \end{align*}
    Where the last equality follows from the fact that $|s(v)| = 1/\sqrt{n}$ for all $v\in V$. Now, for each $\vec{w} = (w_1,...w_{2t})\in \W^{2t}$ and edge $h\in n\times n$, let $\#_{\vec{w}}(h)$ be the number of times the edge $h$ occurs in the graph $w_1\cup w_2\cup,...w_{2t}$. In other words, for $h = \{a,b\}$, $\#_{\vec{w}}(h)$ is the number of times the variable $R_{a,b}$ occurs in the product $\prod_{i=1}^{2t}\prod_{j=1}^kR_{w_i(j-1), w_i(j)}$. We will also let $|\vec{w}|$ denote the number of unique edges in the graph formed by the union of these $2t$ walks. For example, if $\vec{w}$ consist of the walks $(1,2,3)$, and $(1,2,4)$, then we have $\#_{\vec{w}}(1,2) = 2$, $\#_{\vec{w}}(2,3) = 1$, $\#_{\vec{w}}(2,4) = 1$, and $|\vec{w}| = 3$.  Using this notation, we have
    \begin{align*}
        \sum_{w_1,w_2...w_{2t}\in \W}|\E[\prod_{i=1}^{2t}\prod_{j=1}^kR_{w_i(j-1), w_i(j)}]| &= \sum_{\vec{w}\in \W^{2t}}\prod_{h\in [n]\times [n]}|\E[R_h^{\#_{\vec{w}}(h)}]|
    \end{align*}
    Now we note that each $R_h$ has $\E[R_h] = 0$ and for all $k\geq 2$, we have 
    $$|\E[R_h^k]|\leq \E[|R_h^k|] = p_h(1-p_h)^k + p_h^k(1-p_h) = p_h((1-p_h)^k + p_h^{k-1}(1-p_h)) \leq p_h\leq p$$
    Thus, the term $\prod_{h\in [n]\times [n]}|\E[R_h^{\#_{\vec{w}}(h)}]|$ is only nonzero if in the union of the edges in the walks $w_1,...w_{2t}$, each edge is counted at least twice, and if it is non-zero, then it is the product of at most $|\vec{w}|$ terms of value at most $p$. We now let $\W_{pair}^{2t}$ be the set of such walks, which we will denote as ``valid walks''. As an example, when $t=2$ and $k=2$, the walks $\{(1,2,3), (1,2,3)\}$ would be valid but $\{(1,2,3), (1,2,4)\}$ would not be valid. Now we have
    \begin{align*}
        \sum_{\vec{w}\in \W^{2t}}\prod_{h\in [n]\times [n]}|\E[R_h^{\#_{\vec{w}}(h)}]| &\leq \sum_{w\in \W_{pair}^{2t}}p^{|\vec{w}|}\\
        &= \sum_{\ell = 1}^{tk}p^{\ell}|\{\vec{w}\in \W_{pair}^{2t},\; |\vec{w}|=\ell\}|
    \end{align*}
    Now, what we have left is a counting problem. We need to count the number of valid sets of walks whose union has exactly $\ell$ distinct edges. We note that $\ell$ can be at most $tk$ because otherwise, there must be an edge that is counted at most once, making the walks invalid.  Since it is difficult to count this quantity exactly, we will just upper-bound it as follows:

    \begin{proposition}\label{proposition:path-counting} Suppose $p \geq \frac{\log^3{n}}{n}$ and $t \leq \frac{\log{n}}{2k}$. Then we have:
        \begin{align*}
        |\{\vec{w}\in \W_{pair}^{2t},\; |\vec{w}|=\ell\}| \leq \binom{2tk}{2\ell}(2\ell - 1)!!\cdot \ell^{2kt-2\ell}(n)^\ell
    \end{align*}
    \end{proposition}
    The notation of $n!!$ denotes $n\cdot n-2\cdot n-4...1$. Given this upper bound on the number of valid sets of walks, we are ready to give our final bound. We will let $t = \frac{C_1\log{n}}{2k}$ for large enough constant $C_1$. Note that the only constraint on $t$ is that it is a positive integer, and this is feasible as long as $k=O(\log{n})$. Since $p \geq \frac{\log^{3}n}{n}$, we have $np \geq (tk/C_1)^3$. We will make the substitution $b = tk-\ell$. Then we have
    \begin{align*}
        \sum_{\ell = 1}^{tk}p^{\ell}|\{\vec{w}\in \W_{pair}^{2t},\; |\vec{w}|=\ell\}| &\leq \sum_{b=0}^{tk-1} (np)^{tk-b}(tk-b)^{2b}(2tk-2b-1)!!\binom{2tk}{2b}\\
        &\leq \sum_{b=0}^{tk-1} (np)^{tk-b}(2tk)^{tk-b}(2tk)^{2b}\frac{(2tk)^{2b}e^{2b}}{(2b)^{2b}}\\
        &\leq (np)^{tk}(2tk)^{tk}\sum_{b=0}^{tk-1} \frac{(2tk)^{3b}e^{2b}}{(2b)!(np)^b}\\
        &\leq (np)^{tk}(2tk)^{tk}\sum_{b\geq 0}\frac{(C_1e^2)^{b}}{(2b)!}\\
        &\leq O((np)^{tk}(2tk)^{tk})
    \end{align*}
    In the second line, we used the fact that $(2tk-2b-1)!!$ is the product of $tk-b$ terms of value at most $2tk$ as well as the standard upper bound of $\binom{2tk}{2b} \leq \frac{(2tk)^{2b}e^{2b}}{(2b)^{2b}}$, the second last bound follows from our assumption on $p$, and the final bound follows from the fact that the sum converges. Putting all of this together, we have that for $t  = \frac{C_1\log{n}}{2k}$ and $p\geq \frac{\log^{3}n}{n}$,
    \begin{align*}
        \E[(e_u^\top R^k s)^{2t}] \leq O(\frac{(2tknp)^{tk}}{n^t})
    \end{align*}
    Now, we can apply Markov's inequality on the $t^{th}$ moment with $t = \frac{C_1\log{n}}{2k}$
    \begin{align*}
        \Pr[|e_u^\top R^k s| > \frac{1}{\sqrt{n}}(C\sqrt{np\log{n}})^k] &\leq \frac{n^t\E[(e_u^\top R^k s)^{2t}]}{C^{2tk}(np\log{n})^{tk}}\\
        &\leq O(\frac{(np\log{n})^{tk} C_1^{tk}}{C^{2tk}(np\log{n})^{tk}})\\
        &\leq O(\frac{\sqrt{C_1}^{\log{n}}}{C^{\log{n}}})\\
        &= \exp(-\Omega(\log{n}))
    \end{align*}
    for a sufficiently large constant $C$
\end{proof}

Now, we combine our results from degree-concentration and the message passing error bound to prove the main result of this section, \cref{lemma:R'-bound}.

\begin{proof} (\cref{lemma:R'-bound})
    First, note that for $k = 0$, the bound clearly holds, as $|e_u^\top s| = \frac{1}{\sqrt{n}}$. Now for general $k$, we start by applying \cref{corollary:R'-to-R-reduction} to obtain
    \begin{align*}
        |e_u^\top R'^ks| &\leq \frac{1}{d^k}|e_u^\top R^ks| + O(\frac{kC_1^k\sqrt{\log{n}}}{\sqrt{np}^k\cdot \sqrt{n}})
    \end{align*}
    For some constant $C_1$. Now by \cref{proposition:message-passing-bound}, we have that with high probability $e_u^\top R^ks \leq \frac{1}{\sqrt{n}}(C_2\sqrt{np\log{n}})^k$ for some constant $C_2$ and $d \geq \frac12(p+q)n(1-o(1))$. Thus, we have $\frac{1}{d^k}|e_u^\top R^ks| \leq \frac{1}{\sqrt{n}}\Big(C_2\sqrt{\frac{\log{n}}{pn}}\Big)^k$. Note that $\sqrt{\log{n}}^k$ is bigger than $k \sqrt{\log{n}}$ for large enough $n$. Thus, the $\frac{1}{d^k}|e_u^\top R^ks|$ term clearly dominates so we can take $C$ to be around $\max(C_1,C_2)$ to obtain our upper bound. Finally, by applying union bound, we can ensure that with high probability, bound holds for all $u\in V$.
\end{proof}

\subsubsection{Proof of proposition \cref{proposition:path-counting}}
    In this section, we complete our proof of the main message passing error bound by proving the upper bound on the number of valid sets of walks of length $k$.
    \begin{proof}(\cref{proposition:path-counting})
    The idea to bound the quantity $|\{\vec{w}\in \W_{pair}^{2t},\; |\vec{w}|=\ell\}|$ is to first count all the ways to partition the edges of the $2t$ walks into exactly $\ell$ groups of size at least $2$. Then, for each partition, we count the number of ways to assign the vertices of the walks in a way such that all edges of the same group are assigned to the same pair of vertices. In the grouping stage, we first pick $2\ell$ walk edges and pair them with each other, assigning a new group to each pair. Then for the rest of the $2tk-2\ell$ edges, we assign each of them to one of the $\ell$ groups. This ensures that each group has at least two edges. The number of ways to pick $2\ell$ edges is $\binom{2kt}{2\ell}$. The number of ways to pair these edges is $(2\ell-1)!! = (2\ell - 1)(2\ell - 3)...\cdot 1$. Finally, the number of way to assign the rest of the edges to one of the groups is $\ell^{(2kt-2\ell)}$. Thus, the total number of valid partitions of the edges is at most $\binom{2tk}{2\ell}(2\ell - 1)!!\cdot \ell^{2kt-2\ell}$. Note that this upper bound is tight when $\ell=tk$ but becomes less tight as $\ell$ becomes smaller. 

    Given a partition of the edges, we now count the number of ways to assign the vertex variables $w_i(j)$. We will give a simple assignment algorithm such that every valid assignment is a possible output of the algorithm. Note that not every output of the algorithm is necessarily valid, so we are actually over-counting.
    \begin{framed}\textbf{Vertex Assignment Procedure}
        \begin{itemize}
            \item Pick an arbitrary partitioning of the edges into $\ell$ groups without singletons
            \item For $i = 1$ to $2t$:
            \begin{itemize}
                \item For $j = 1$ to $k$
                    \begin{itemize}
                        \item If the edge $(w_i(j-1), w_{i}(j))$ is not part of a group which contains an already assigned edge, then pick a new vertex for $w_i(j)$.
                        \item Else, let $(w_{i'}(j'-1), w_{i'}(j'))$ be an edge in the same group as $(w_i(j-1), w_i(j))$ such that both $w_{i'}(j'-1)$ and $w_{i'}(j')$ are already assigned. Then assign $w_i(j)$ such that $ \{w_{i}(j-1), w_{i}(j)\}=\{w_{i'}(j'-1), w_{i'}(j')\}$. If no such assignment is possible, then the procedure fails.
                    \end{itemize}
            \end{itemize}
        \end{itemize}
    \end{framed}
    To see that every possible valid set of walks can be outputted by this procedure, if we are given some $\vec{w} = (w_1,...w_{2t})\in \W_{pair}^{2t}$, we can take the partitioning of the edges in the first step to be to be the partition induced by the union of the edges of the $2t$ walks. Then, we can simply follow the vertices of the walks in the same order as in our assignment procedure and see that each time we encounter a new edge corresponds to the first case of the inner for-loop and each time we traverse an already traversed edge corresponds to the second case in the inner for-loop where the procedure does not fail. Thus, the number of valid sets of walks with $\ell$ distinct edges is indeed upper bounded by the total number of possible outputs to our vertex assignment procedure.
    
    To bound the number of possible outcomes in our procedure, we see that in first case of the inner for-loop, there are at most $n$ possible assignments for the vertex $w_i(j)$. In the second step, there is only one possible assignment if the procedure doesn't fail. Since we encounter the first case of the for-loop at most $\ell$ times, the total number of vertex assignments given a fixed partitioning of edges is at most $n^\ell$. Finally, this gives us the desired bound:
    \begin{align*}
        |\{\vec{w}\in \W_{pair}^{2t},\; |\vec{w}|=\ell\}| \leq \binom{2tk}{2\ell}(2\ell - 1)!!\cdot \ell^{2kt-2\ell}(n)^\ell
    \end{align*}
    \end{proof}

\newpage

\section{Proofs for \cref{section:multi-class}}\label{section:multi-class-proofs}
In this section, we will prove our main results for multi-class analysis, \cref{theorem: multi-class-partial}. First, we will characterize the convolution matrix in terms of the expected adjacency matrix:

\begin{lemma}\label{lemma:multi-class-characterization}
    In the multi-class setting, the convolution matrix, $\tilde{A}$, can be decomposed as $\tilde{A} = M + R'$ where:
    \begin{itemize}
        \item $M$ has rank $L-1$, with $L-1$ eigenvalues equalled to $\frac{(p-q)n}{dL} = \lambda$. Also, $MU = \lambda U$
        \item $R'$ is a random matrix such that with probability at least $ 1-n^{-\Omega(1)}$, $\norm{R'} \leq C(\frac{1}{d}(\sqrt{np(1-p)/L} + \sqrt{nq(1-q)})) = \delta$
    \end{itemize}
\end{lemma}
\begin{proof}
The expected adjacency matrix in the multi-class setting can be written as
\[
\E[A] = \begin{bmatrix}
    pJ_{n/L}& q J_{n/L}&\dots qJ_{n/L}\\
    qJ_{n/L}& p J_{n/L}&\dots qJ_{n/L}\\
    qJ_{n/L}& \dots&\dots qJ_{n/L}\\
    qJ_{n/L}& q J_{n/L}&\dots pJ_{n/L}\\
\end{bmatrix} 
%= \begin{bmatrix}
   % p&q&\dots &q\\
    %q&p&\dots &q\\
    %q&\dots &\dots &q\\
    %q&q&\dots &p\\
%\end{bmatrix}
%\otimes J_{n/L} 
= (q J_L + (p-q)I_L) \otimes J_{n/L}
\]

 The top (normalized) eigenvector of $\E[A]$ is $\frac{1}{\sqrt{n}}\one$. The top eigenvalue is the expected degree $d = pn/L + (L-1)qn/L$. Moreover, $\E[A]$ has rank $L$ and its eigenvectors are of the form $\sqrt{\frac{L}{n}}v\otimes \one_{n/L}$ where $v$ is an eigenvector of the matrix $(q J_L + (p-q)I_L)$. Since the top eigenvector is $\frac{1}{\sqrt{n}}\one = \frac{1}{\sqrt{L}}\one_L \otimes \sqrt{\frac{L}{n}}\one_{n/L}$, the rest of the eigenvectors of $\E[A]$ must be of the form $\sqrt{\frac{L}{n}}v_2\otimes \one_{n/L},...\sqrt{\frac{L}{n}}v_{L}\otimes \one_{n/L}$, where $v_2,...v_{L}$ are an orthonormal basis of the subspace in $\mathbb{R}^L$ orthogonal to $\one_L$. Thus, for $l = 2,..L$, we have $\E[A](v_l\otimes \one) = \frac1L(p-q)n(v_l \otimes \one)$, which means the second to $L^{th}$ eigenvalues of $\E[A]$ are equalled to $\lambda d$.

 Now let $R = A - \E[A]$ and $ $. Then we have
\begin{align*}
     \Tilde{A} = \frac{1}{\bar{d}}A - \frac{1}{n}\one\one^\top =  (\frac{1}{\bar{d}}-\frac{1}{d})A + \frac1d R + \frac{1}{d}\E[A] -  \frac{1}{n}\one\one^\top
 \end{align*}
Now, let $M := \frac{1}{d}\E[A] - \frac{1}{n}\one\one^\top$. Note that the non-zero eigenspace of $M$ is equivalent to 2nd to $L^{th}$ eigenspace of $\E[A]$ and the corresponding eigenvalue is $\lambda$. To show that $MU = U$, let $u$ be any column of $U$. By our assumption that each class has one center, the entries of $u$ are constant on each class. If we define $v\in \mathbb{R}^L$ such that $v(l) = u(i)$ for $i\in \cC_l$, then we can write $u = v\otimes \one_{n/L}$. Since by our assumption, $u\perp \one_n$, we must have $v\perp \one_L$. Thus, we see that $u$ is exactly in the subspace spanned by the non-zero eigenvectors of $M$, which means $Mu = \lambda u$. 

Now, we simply have to bound the spectral norm of the matrix $R' := \frac1d R + (\frac{1}{\bar{d}} - \frac{1}{d})A$. We start with bounding $\norm{R}$. To do so, we can write $R$ as $R = R_p + R_q$, where $R_p$ and $R_q$ contain the entries of $R$ corresponding to intra- and inter- class edges respectively. Note that $R_p$ is block diagonal with $L$ blocks of size $n/L$. Each block is size $n/L \times n/L$ and have $i.i.d$ entries. By \cref{theorem:matrix-concentration}, each block has spectral norm at most $O(\sqrt{np(1-p)/L})$ with probability at least $1-n^{-\Omega(1)}$. Similarly, we have $\norm{R_q}\leq O(\sqrt{nq(1-q)})$ with probability at least $1-n^{-\Omega(1)}$. This means $\norm{R} \leq O(\sqrt{np(1-p)/L} + \sqrt{nq(1-q)})$.

Now, we bound the degree deviation. Recall that $d = \frac{2|E|}{n}$ where $|E|$ is the number of edges in $G$. By applying \cref{theorem:Bernouli-Chernoff} with $t = \sqrt{np(1-p)/L} + \sqrt{nq(1-q)}$, we have
\begin{align*}
    \Pr{|\bar{d} - d| > t} \leq \exp(-\Omega(\frac{t^2}{\max(p,q)})) \leq n^{-\Omega(1)}
\end{align*}
since $\min(p,q)\geq \log^2{n}/n$. Assuming that $|\bar{d} - d|\leq t$ and noting that $\norm{A}\leq \bar{d}$, we have $\norm{(\frac1d - \frac{1}{\bar{d}})A} \leq \norm{R} \leq t/d$. Thus, we have $\Tilde{A} = M + R'$, where $\norm{R'} \leq \delta$ with high probability.
 \end{proof}

We will also need to bound the operator norm distance between the $k^{th}$ convolution and $M^k$

\begin{lemma}\label{lemma:A^k}
    Suppose $|\lambda| > 2k\delta$. Then with high probability, we have
    $$\norm{\frac{1}{\lambda^k}(\tilde{A}^k - M^k)} \leq 2k\delta/|\lambda|.$$
\end{lemma}
\begin{proof}
By \cref{lemma:multi-class-characterization}, we have 
$$\tilde{A}^k = (M+R')^k = M^k + \sum_{l=1}^k\sum_{b\in {[k]\choose l}}\prod_{i=1}^k M^{1-b(i)}R'^{b(i)},$$
where the inner sum is over bit-strings $b$ of length $k$ with exactly $l$ $1$'s and $k-l$ $0$'s. Note that $\norm{M} = |\lambda|$ and $\norm{R'}\leq \delta$ with high probability. Using the fact that $\norm{AB}\leq \norm{A}\norm{B}$ and triangle inequality, we have
$$\norm{\frac{1}{\lambda^k}((M+R')^k - M^k)} \leq \frac{1}{|\lambda|^k}\sum_{l=1}^k{k\choose l}\norm{M}^{k-l}\norm{R'}^{l} \leq \sum_{l=1}^k{k\choose l}(\frac{\delta}{|\lambda|})^{l} = (1+\frac{\delta}{|\lambda|})^{k}-1$$
Our assumption that $|\lambda| > 4\delta k$ implies the RHS is at most $2k\delta/|\lambda|$ by \cref{lemma:exp-fact}.
\end{proof}
Now we are ready to prove \cref{theorem: multi-class-partial}.
\begin{proof}
     We can express the total squared error after $k$ convolutions as:
    \begin{align*}
        \sum_{i=1}^n \norm{x_i^{(k)} - \mu_i}^2 = \norm{X^{(k)} - U}_F^2
    \end{align*} 
We decompose our data as $X^{(k)} = U + G$, where $G$ is a Gaussian matrix with i.i.d $N(0, \sigma^2)$ entries. Recall that we take our scaling factor to be $1/\lambda^k$. Thus, we have 
$$\norm{X^{(k)}-U}_F^2 = \norm{\frac{1}{\lambda^k}\tilde{A}^k(U + G)-U}_F^2 \leq 2\norm{\frac{1}{\lambda^k}\tilde{A}^kU - U}_F^2 + 2\norm{\frac{1}{\lambda^k}\tilde{A}^k G}_F^2.$$

By \cref{lemma:multi-class-characterization}, we have $MU = \lambda U$. Let $u_1,...u_m$ be the columns of $U$. Then, we have
\begin{align*}
    \norm{\frac{1}{\lambda^k}\tilde{A}^kU - U}_F^2 &= \norm{\frac{1}{\lambda^k}(\tilde{A}^k - M^k)U}_F^2\\
    &= \sum_{i=1}^m\norm{\frac{1}{\lambda^k}(\tilde{A}^k - M^k)u_i}^2\\
    &\leq \sum_{i=1}^m\norm{\frac{1}{\lambda^k}(\tilde{A}^k - M^k)}^2\norm{u_i}^2\\
    &= \norm{\frac{1}{\lambda^k}(\tilde{A}^k - M^k)}^2\norm{U}_F^2\\ 
    &\leq O(k\delta/|\lambda|)^2\norm{U}_F^2
\end{align*}
where the last inequality follows from \cref{lemma:A^k}. By \cref{lemma:Gaussian-norm-bound}, we have, with high probability,
$$\norm{\frac{1}{\lambda^k}\tilde{A}^k G}_F^2 \leq \frac{1}{\lambda^{2k}}Tr(\tilde{A}^{2k})\sigma^2m\log{n}$$
Since adding $R'$ to $M$ perturbs its eigenvalues by at most $\delta$ (\cref{theorem:matrix-perturbation}), we have
$$\frac{1}{\lambda^{2k}}Tr(\tilde{A}^{2k}) \leq (1 + \delta/|\lambda|)^{2k}(L-1) + n(\delta/|\lambda|)^{2k}$$

$|\lambda| > 4\delta k$ implies $(1 + \delta/|\lambda|)^{2k} \leq O(1)$. Now let $n_e$ be the number of points $i$ such that $\norm{x^{(k)}_i - \mu_i} \geq \Delta/2$. Then we have
$$n_e\Delta^2\leq  4\norm{X^{(k)}-U}_F^2 \leq O((k\delta/|\lambda|)^2\norm{U}_F^2 + (L + n(\delta/|\lambda|)^{2k})m\sigma^2\log{n})$$
Dividing both sides by $\Delta^2$ gives us the desired bound.

Finally, we recall that $\Delta$ is the minimum distance between centers. Thus if $\norm{x_i^{(k)} - \mu_i} < \Delta/2$, then it is closer to its own center than any other center. Thus, the softmax classifier can correctly classify all such points. 
\end{proof}

\subsection{Additional Figures for Multi-class simulation}\label{section:supp-figs}
Finally, we conclude our section with some additional figures to illustrate the performance of the corrected convolution on synthetic data. We compare the corrected and uncorrected convolutions via both linear and non-linear models. Our means are class means are given by the standard basis vectors. For the linear model (figs 4a -- 4c), we look at five $1$-vs-all classifiers followed by a softmax to predict the class label, while the non-linear method (figs 4d -- 4f) follows a typical two-layer MLP-based architecture. In both cases, we observe that the corrected convolutions do not deteriorate in performance as the number of convolutions increases. Note that while the performances are similar between linear and non-linear classification, non-linear classification is required in general when each individual class mean cannot be separated from all others via a linear hyperplane.
\begin{figure}[!ht]
    \centering
    \begin{subfigure}[b]{0.32\textwidth}
        \includegraphics[width=\linewidth]{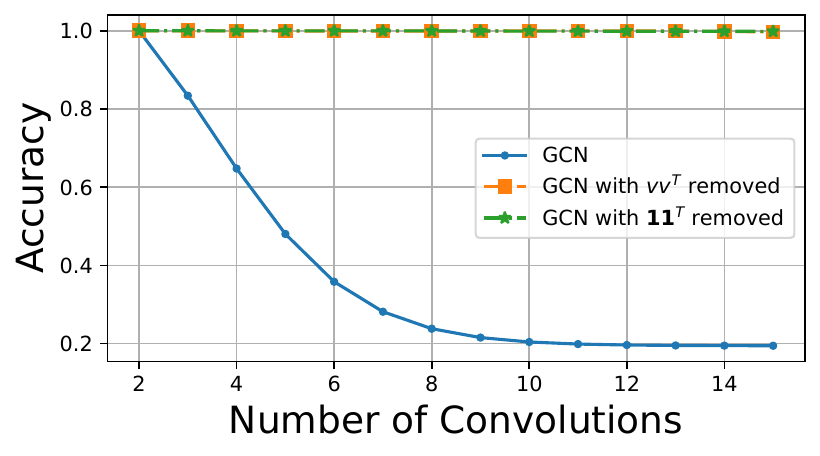}
        \caption{Linear method, $q=0.02$, $\sigma=1$.}
    \end{subfigure}
    \begin{subfigure}[b]{0.32\textwidth}
        \includegraphics[width=\linewidth]{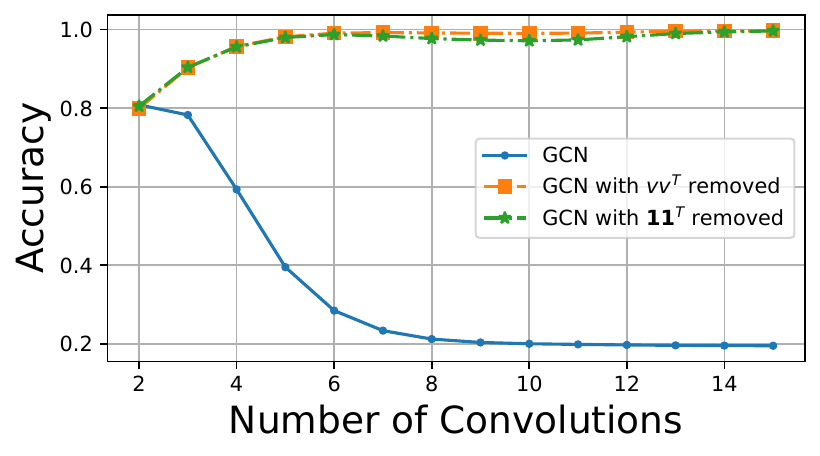}
        \caption{Linear method, $q=0.02$, $\sigma=8$.}
    \end{subfigure}
    \begin{subfigure}[b]{0.32\textwidth}
        \includegraphics[width=\linewidth]{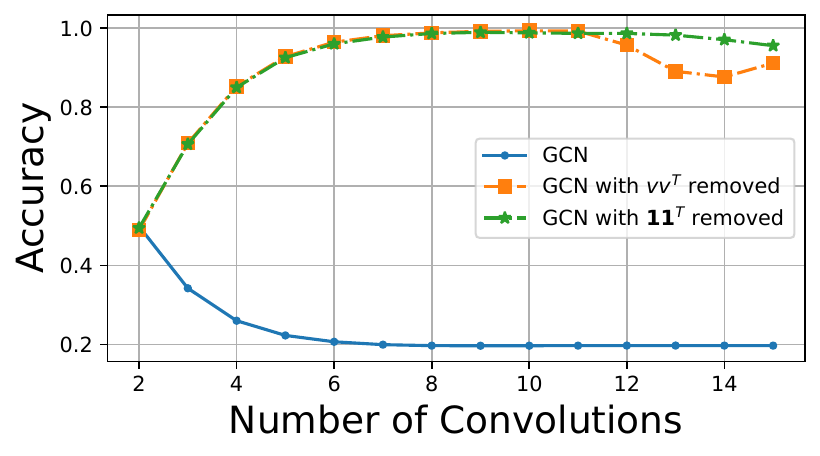}
        \caption{Linear method, $q=0.04$, $\sigma=8$.}
    \end{subfigure}
    \begin{subfigure}[b]{0.32\textwidth}
        \includegraphics[width=\linewidth]{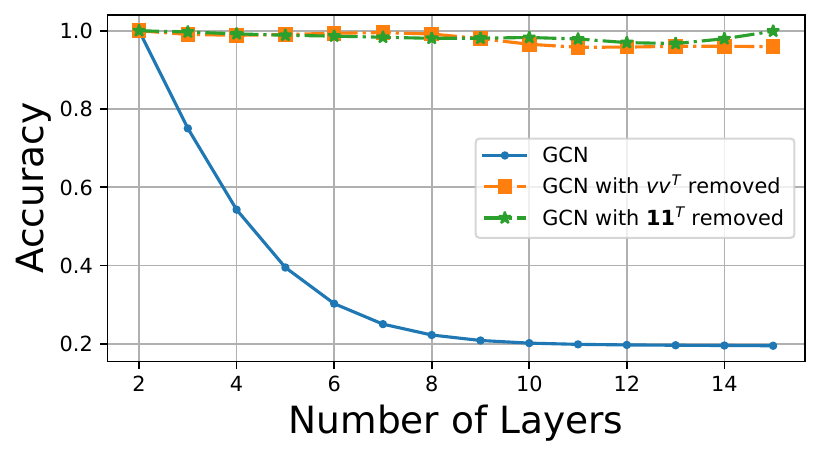}
        \caption{Nonlinear method, $q=0.02$, $\sigma=1$.}
    \end{subfigure}
    \begin{subfigure}[b]{0.32\textwidth}
        \includegraphics[width=\linewidth]{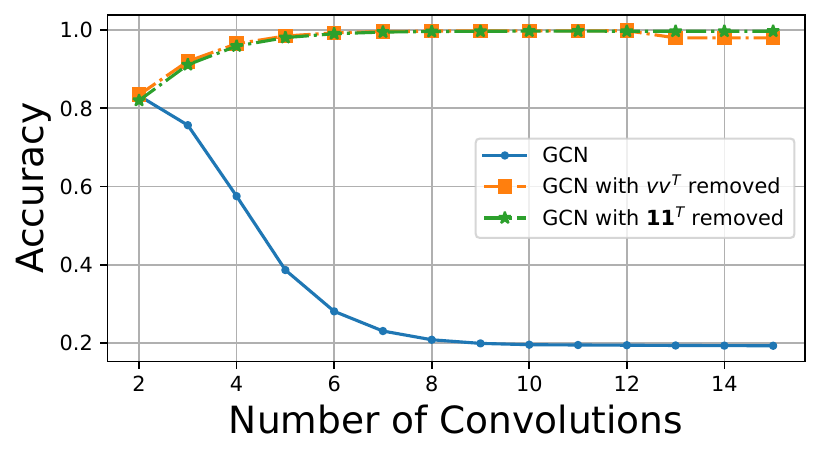}
        \caption{Nonlinear method, $q=0.02$, $\sigma=8$.}
    \end{subfigure}
    \begin{subfigure}[b]{0.32\textwidth}
        \includegraphics[width=\linewidth]{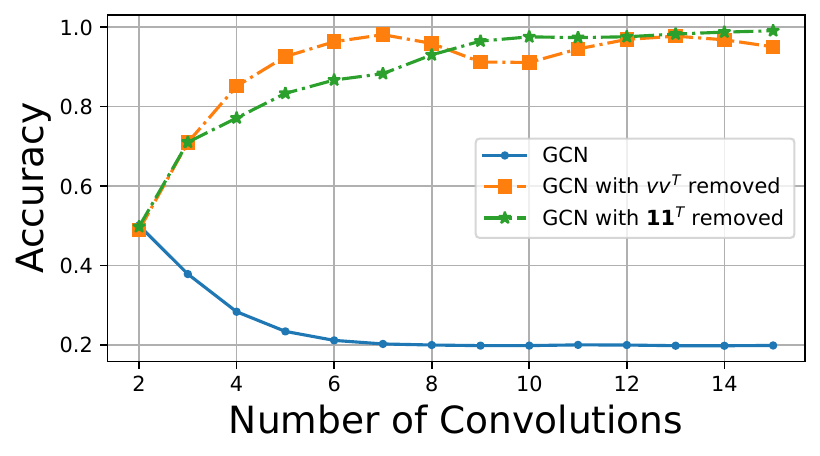}
        \caption{Nonlinear method, $q=0.04$, $\sigma=8$.}
    \end{subfigure}
    \caption{Accuracy plot (averaged over 50 trials) on CSBM data with $5$ balanced classes, $500$ nodes per class and orthogonal means, with fixed $p=0.1$.}
\end{figure}

\newpage

\end{document}